\documentclass[svgnames]{article}

\usepackage[a4paper, margin=1.35in]{geometry}
\usepackage[T1]{fontenc}
\usepackage{lmodern}
\usepackage{microtype}
\usepackage{amsmath,amssymb,amsthm,mathtools,mathrsfs,bm}
\usepackage{enumitem}
\usepackage{xcolor}
\usepackage{hyperref}
\usepackage{pdflscape}
\usepackage{tikz}
\usetikzlibrary{arrows.meta,positioning,fit}
\usepackage{tikz-cd}
\usepackage{tabularx}
\usepackage{booktabs}
\usepackage{authblk}
\numberwithin{equation}{section}
\usepackage[
backend=biber,
style=phys,
maxbibnames=99,
doi=true,
eprint=true,
giveninits=true
]{biblatex}
\usepackage[en-MT]{datetime2}
\DTMsetstyle{en-MT-numeric}

\definecolor{myblue}{HTML}{0251D9}
\definecolor{mygreen}{HTML}{00c71a}
\definecolor{myred}{HTML}{D90251}
\hypersetup{
	colorlinks=true,
	linkcolor=myblue,
	citecolor=mygreen,
	urlcolor=myred
}

\newtheorem{theorem}{Theorem}[section]
\newtheorem{proposition}[theorem]{Proposition}
\newtheorem{lemma}[theorem]{Lemma}
\newtheorem{corollary}[theorem]{Corollary}
\theoremstyle{definition}
\newtheorem{definition}[theorem]{Definition}
\newtheorem{assumption}[theorem]{Assumption}
\theoremstyle{remark}
\newtheorem{remark}[theorem]{Remark}

\tikzset{
	depnode/.style={
		draw,
		rounded corners,
		align=center,
		font=\scriptsize,
		inner sep=2.5pt,
		minimum width=2.6cm
	},
	assnode/.style={depnode, fill=blue!6},
	lemnode/.style={depnode, fill=green!6},
	propnode/.style={depnode, fill=orange!8},
	thmnode/.style={depnode, fill=red!6},
	note/.style={font=\scriptsize, align=left},
	dep/.style={-Latex, semithick}
}

\title{Universality of Gaussian-Mixture Reverse Kernels in Conditional Diffusion}
\date{Time of compilation: \DTMnow}

\author[1,2]{Nafiz Ishtiaque\thanks{nafiz@simis.cn}}
\author[3]{Syed Arefinul Haque}
\author[4]{Kazi Ashraful Alam}
\author[5]{Fatima Jahara}

\affil[1]{Center for Mathematics and Interdisciplinary Sciences, Fudan University, Shanghai
	200433, China}
\affil[2]{Shanghai Institute for Mathematics and Interdisciplinary Sciences (SIMIS), Shanghai
	200433, China}
\affil[3]{Northeastern University, Boston, MA 02115, USA}
\affil[4]{Infectious Diseases Division, icddr,b, Dhaka 1212, Bangladesh}
\affil[5]{Rutgers University, New Brunswick, NJ 08901, USA}

\date{}

\newcommand{\nn}{\nonumber}
\newcommand{\R}{\mathbb{R}}

\newcommand{\E}{\mathbb{E}}
\newcommand{\Prob}{\mathbb{P}}
\newcommand{\KL}{\operatorname{KL}}
\newcommand{\Leb}{\operatorname{Leb}}

\newcommand{\cX}{\mathcal{X}}
\newcommand{\cC}{\mathcal{C}}

\newcommand{\Ker}{\mathsf{Ker}}
\newcommand{\NN}{\mathrm{NN}}

\newcommand{\Comp}{\mathcal E_\mathrm{term}}
\newcommand{\push}{_*}
\newcommand{\dd}{\mathrm{d}}

\begin{document}
	\maketitle
	
	\begin{abstract}
		We prove that conditional diffusion models whose reverse kernels are
		finite Gaussian mixtures with ReLU-network logits can approximate
		suitably regular target distributions arbitrarily well in
		context-averaged conditional KL divergence, up to an irreducible
		terminal mismatch that typically vanishes with increasing diffusion horizon.
		A path-space decomposition reduces the output error to this mismatch
		plus per-step reverse-kernel errors; assuming each reverse kernel
		factors through a finite-dimensional feature map, each step becomes
		a static conditional density approximation problem, solved by
		composing Norets' Gaussian-mixture theory with quantitative ReLU
		bounds. Under exact terminal matching the resulting neural
		reverse-kernel class is dense in conditional KL.
	\end{abstract}

	\tableofcontents

	\section{Introduction}

A universality theorem asks whether an architecture is expressive enough, in
principle, to approximate the objects it is meant to learn. In classical
approximation theory, universal-approximation results for neural networks
played exactly this role, while later refinements quantified the dependence on
depth, width, and smoothness
\cite{cybenko1989approximation,hornik1989multilayer,yarotsky2017error,lu2021deep}.
This paper asks the analogous question for a class of \emph{conditional
	diffusion models}.

Diffusion methods now form a central paradigm in generative modeling
\cite{sohl2015deep,ho2020ddpm,song2020ddim,song2021sde,nichol2021improved,dhariwal2021diffusion,ho2022classifierfree,rombach2022latent,saharia2022imagen,chen2024overview}.
Yet in standard DDPM-style discretizations each reverse step is typically a
single Gaussian with learned mean and sometimes learned variance
\cite{ho2020ddpm,nichol2021improved}. Recent work argues that this ansatz can
be too restrictive, especially in few-step regimes, and proposes richer
reverse transitions based on Gaussian mixtures or related soft-mixture
denoisers \cite{li2024smd,guo2023gms,gabbur2023improvedddim}. Our aim is to
give a population-level expressivity theorem for that idea.

The target is a conditional law $c\mapsto\pi^\star(\cdot\mid c)$, i.e.\ a
Markov kernel from a context space $\cC$ to $\cX=\R^D$, with error measured by
\begin{equation}
	\mathcal E_{\KL}(\pi)
	:=
	\E_{C\sim\mathsf P_{\cC}}
	\left[\KL(\pi^\star(\cdot\mid C)\,\|\,\pi(\cdot\mid C))\right].
\end{equation}
Thus the paper concerns approximation of conditional distributions, or
equivalently Markov kernels, rather than approximation of a single
unconditional density or score field.

Our answer is affirmative for discrete-time conditional diffusions whose
reverse kernels are finite Gaussian mixtures with fixed Gaussian components and
feature-dependent logits realized by ReLU networks. The proof reduces the
output conditional-KL to a \emph{terminal mismatch} plus a sum of
\emph{per-step reverse-kernel errors}. Under a feature-sufficiency assumption,
each step factors through feature space into a static conditional density
approximation problem. At each reverse step we then use a two-level hierarchy:
first approximate the true feature-space reverse density by a
Gaussian mixture whose weights are target dependent and a priori unknowable (referred to as ``infeasible'' for that reason), then
approximate the resulting logits by ReLU networks. This yields the error
decomposition
\begin{equation*}
	\text{terminal mismatch}
	+
	\textstyle\sum_{t=1}^T
	\left(
	\text{infeasible mixture error}_t
	+
	\text{neural logit error}_t
	\right).
\end{equation*}
The mixture error is controlled through Norets' conditional
Gaussian-mixture theory \cite{norets2010}, and the logit error through the
ReLU approximation bounds of \cite{lu2021deep}. Under a smooth-logit
assumption, the resulting bounds are fully explicit.

The main conclusions are: (i) a quantitative approximation result up to
terminal mismatch (Proposition~\ref{prop:main-quant}); (ii) under exact terminal matching, a universality theorem
showing mean-KL denseness of the neural predictor class (Thm.~\ref{thm:main}); and (iii)
high-probability and almost-everywhere consequences (Corollary~\ref{cor:main-ae}).

\paragraph{Scope of the result.}
The theorem proved here is an expressivity result for a class of conditional
diffusion architectures. It does not address statistical estimation,
optimization, or score approximation. Its role is to isolate representation
error at the level of reverse-kernel parameterization, in the same spirit as
classical universal-approximation theorems for feedforward networks
\cite{cybenko1989approximation,hornik1989multilayer}. The analysis relies on
a feature-sufficiency assumption requiring each true reverse kernel to depend
on its inputs only through a known finite-dimensional feature map; this is
a modeling hypothesis, not derived from the forward process. The resulting
rates are correspondingly structural rather than practically sharp, since in
the present fully general ambient-space setting they inherit a substantial
response-side curse of dimensionality. %A fully explicit one-dimensional example -- with Gaussian target, Ornstein-Uhlenbeck (OU) forward process -- is worked out in Section~\ref{app:ou-gaussian-example}, where all error terms, constants, and network sizes are computed in closed form.

\paragraph{Relation to prior work.}
The paper sits at the intersection of three literatures. First, it is adjacent
to the theory of score-based diffusion, which studies approximation and
estimation of score fields, including recovery guarantees, minimax rates,
low-dimensional adaptation, and conditional score estimation
\cite{hyvarinen2005score,vincent2011connection,song2019generative,chen2023improved,chen2023score,oko2023minimax,fu2024conditional,tang2024conditional,chen2024overview}.
Our theorem concerns not the score class but the \emph{reverse-kernel class}
itself in discrete time. Second, it complements recent empirical and
algorithmic work advocating richer-than-Gaussian reverse transitions
\cite{li2024smd,guo2023gms,gabbur2023improvedddim} by supplying a
population-level expressivity theorem. Third, the construction draws on the
older literature on conditional density estimation and mixtures of experts:
adaptive mixtures of local experts, mixture density networks, and hierarchical
mixtures of experts provided early neural mechanisms for modeling conditional
output laws \cite{jacobs1991adaptive,bishop1994mdn,jordan1994hme};
approximation-theoretic results appeared already in work of Jiang and Tanner
\cite{jiang1999hierarchical}, and Norets later established strong
Kullback--Leibler (KL) approximation theorems for smooth mixtures of regressions
\cite{norets2010}. Our stepwise reverse model is a diffusion-era version of
that viewpoint.

In short, the paper imports the representation-theoretic viewpoint of universal
approximation into conditional diffusion, replaces the usual score-centric
perspective by a kernel-class perspective, and shows that a natural neural
Gaussian-mixture reverse architecture is universal in conditional KL once the
terminal mismatch is isolated.

\paragraph{Organization.}
\S\ref{sec:setup} formulates conditional prediction as kernel approximation
and proves the path-space KL decomposition. \S\ref{sec:gm-nn} develops the
stepwise Gaussian-mixture and neural approximation theory, including the
general explicit bounds and the sharper \(C^1\)-branch rates. \S\ref{sec:main}
states the main universality theorem and its consequences.
\S\ref{sec:conclusion} concludes. \S\ref{sec:proofs} contains some technical proofs. \S\ref{app:ou-gaussian-example} exemplifies the general formalism by approximating a conditional Gaussian kernel with Ornstein-Uhlenbeck noising, with explicit parameters and error rates.

%%%%%%%%%%%%%%%%%%%%%%%%%%%%%%%%%%%%%%%%%%%%%%%%%%%%%%%%%%%%%%%%%%%%%%%%%%%
%--------------------------------------------------------------------------
%%%%%%%%%%%%%%%%%%%%%%%%%%%%%%%%%%%%%%%%%%%%%%%%%%%%%%%%%%%%%%%%%%%%%%%%%%%

\section{Conditional prediction and diffusion kernels}\label{sec:setup}

\subsection{Prediction as kernel approximation}
For measurable spaces $(\mathcal A,\Sigma_{\mathcal A})$ and
$(\mathcal B,\Sigma_{\mathcal B})$, write $\Delta(\mathcal A)$ and
$\Ker(\mathcal A\to\mathcal B)$ for the sets of
probability measures on $\mathcal A$ and Markov kernels
$K:\mathcal A\to\Delta(\mathcal B)$ respectively.
\begin{definition}[Probabilistic prediction problem]
	Let $(\cC,\Sigma_{\cC})$ and $(\cX,\Sigma_{\cX})$ be standard Borel
	spaces, and let $\mathsf P\in\Delta(\cC\times\cX)$ with marginal
	$\mathsf P_{\cC}$ on $\cC$. By disintegration there exists a true
	conditional $\pi^\star\in\Ker(\cC\to\cX)$ with
	$\mathsf P(\dd c,\dd x)=\mathsf P_{\cC}(\dd c)\,\pi^\star(\dd x\mid c)$.
	The prediction problem is to approximate $\pi^\star$,
	$\mathsf P_{\cC}$-a.e., in KL divergence.
\end{definition}

We take $\cX=\R^D$ with Lebesgue measure $\Leb$, assume
$\pi^\star(\cdot\mid c)\ll\Leb$ for $\mathsf P_{\cC}$-a.e.\ $c$ (writing
the density as $\pi^\star(x\mid c)$), and measure error by
\begin{equation}\label{eq:output-kl}
	\mathcal E_{\KL}(\pi)
	:=
	\E_{C\sim\mathsf P_{\cC}}
	\left[\KL(\pi^\star(\cdot\mid C)\,\|\,\pi(\cdot\mid C))\right].
\end{equation}

\subsection{Forward and backward diffusion paths}\label{sec:paths}

Fix diffusion horizon $T\in\mathbb N$ and forward kernels
$q_t\in\Ker(\cX\to\cX)$, $t=0,\dots,T-1$, with strictly positive
densities $q_t(x_{t+1}\mid x_t)$.

\begin{definition}[Forward path kernel]\label{def:forwardPath}
	Define $q^{\pi^\star}\in\Ker(\cC\to\cX^{T+1})$ by
	\begin{equation}
		q^{\pi^\star}(\dd x_{0:T}\mid c)
		:=
		\pi^\star(\dd x_0\mid c)\prod_{t=0}^{T-1}q_t(\dd x_{t+1}\mid x_t),
		\label{eq:forwardPath}
	\end{equation}
	\begin{equation}\label{eq:forward-density}
		\text{with density} \qquad q^{\pi^\star}(x_{0:T}\mid c)
		=\pi^\star(x_0\mid c)\prod_{t=0}^{T-1}q_t(x_{t+1}\mid x_t).
	\end{equation}
\end{definition}

Write $q_t^{\pi^\star}(\cdot\mid c)$ for the $X_t$-marginal of
$q^{\pi^\star}(\cdot\mid c)$. Fix a reference law
$\nu\in\Delta(\cX)$ with density $\nu(x)$; this is the initialization
of the reverse chain. In practice $\nu$ is chosen so that
$q_T^{\pi^\star}(\cdot\mid c)\approx\nu$ for large $T$; we track the
mismatch explicitly.

\begin{definition}[Backward path kernel and induced predictor]\label{def:backwardPath}
	A reverse model is a family
	$p_t\in\Ker(\cC\times\cX\to\cX)$, $t=1,\dots,T$, with positive
	densities $p_t(x_{t-1}\mid c,x_t)$. It induces a path kernel
	\begin{equation}
		p^\nu(\dd x_{0:T}\mid c)
		:=\nu(\dd x_T)\prod_{t=1}^T p_t(\dd x_{t-1}\mid c,x_t),
		\label{eq:backward-density}
	\end{equation}
	with output (induced predictor)
	$\pi_{p,\nu}(\cdot\mid c):=(x_{0:T}\mapsto x_0)_*\,p^\nu(\cdot\mid c)$.
\end{definition}

\subsection{True reverse kernels and path-space KL}

\begin{definition}[Forward induced joint measures]\label{def:forwardJoint}
	Let $\mathsf Q(\dd c,\dd x_{0:T}):=\mathsf P_{\cC}(\dd c)\,q^{\pi^\star}(\dd x_{0:T}\mid c)$.
	For $t\in\{1,\dots,T\}$, let $\mathsf Q_t$ be the marginal of
	$(C,X_{t-1},X_t)$ under $\mathsf Q$:
	\begin{equation}
		\mathsf Q_t:=((c,x_{0:T})\mapsto(c,x_{t-1},x_t))\push\mathsf Q,
		\label{eq:Qprojection}
	\end{equation}
	with further marginalization
	$(\mathsf Q_t)_{C,X_t}:=((c,x_{t-1},x_t)\mapsto(c,x_t))\push\mathsf Q_t$.
	\label{eq:truePathJoint}
\end{definition}

\begin{definition}[True reverse kernel]
	By disintegration of $\mathsf Q_t$ over $(\mathsf Q_t)_{C,X_t}$,
	there exists $r_t^\star\in\Ker(\cC\times\cX\to\cX)$ with
	\begin{equation}
		\mathsf Q_t(\dd c,\dd x_{t-1},\dd x_t)
		=(\mathsf Q_t)_{C,X_t}(\dd c,\dd x_t)\,r_t^\star(\dd x_{t-1}\mid c,x_t),
		\label{eq:Qtmarginal}
	\end{equation}
	where $r_t^\star(\cdot\mid c,x_t)\ll\Leb$ for
	$(\mathsf Q_t)_{C,X_t}$-a.e.\ $(c,x_t)$.
\end{definition}

\begin{lemma}[True reverse]\label{lem:trueReverse}
	The forward and reverse factorizations of the law of $(X_{t-1},X_t)$ coincide for $\mathsf P_\cC$-a.e. $c$:
	\begin{equation}
		q_{t-1}^{\pi^\star}(\dd x_{t-1}\mid c)\,q_{t-1}(\dd x_t\mid x_{t-1})
		=q_t^{\pi^\star}(\dd x_t\mid c)\,r_t^\star(\dd x_{t-1}\mid c,x_t).
		\label{eq:forwardStep}
	\end{equation}
\end{lemma}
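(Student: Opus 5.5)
The identity will follow by computing the same measure, namely the conditional law of $(X_{t-1},X_t)$ given $C=c$, in two ways, and then using the $\mathsf P_\cC$-a.e.\ uniqueness of disintegrations. Throughout, $\cC$ and $\cX$ are standard Borel, so all disintegrations exist and are unique up to null sets.

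\textbf{Forward side.} Define the kernel $\mathsf Q_t(\cdot\mid c)\in\Ker(\cC\to\cX^2)$ as the $(X_{t-1},X_t)$-marginal of $q^{\pi^\star}(\cdot\mid c)$. By Definition~\ref{def:forwardJoint}, $\mathsf Q_t(\dd c,\dd x_{t-1},\dd x_t)=\mathsf P_\cC(\dd c)\,\mathsf Q_t(\dd x_{t-1},\dd x_t\mid c)$. Integrate the product form \eqref{eq:forwardPath} over the coordinates $x_{t+1:T}$; each of these kernels has total mass one, so they drop out. Then integrate over $x_{0:t-2}$; this produces the marginal $q^{\pi^\star}_{t-1}(\dd x_{t-1}\mid c)$. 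The result is
\[
\mathsf Q_t(\dd x_{t-1},\dd x_t\mid c)=q^{\pi^\star}_{t-1}(\dd x_{t-1}\mid c)\,q_{t-1}(\dd x_t\mid x_{t-1})
\]
for every $c$. The integrations can be checked on rectangles using Fubini--Tonelli, since all integrands are nonnegative, and then extended by a $\pi$--$\lambda$ argument. Marginalizing once more, over $x_{t-1}$, gives the $X_t$-marginal $q^{\pi^\star}_t(\dd x_t\mid c)$. It follows that $(\mathsf Q_t)_{C,X_t}(\dd c,\dd x_t)=\mathsf P_\cC(\dd c)\,q^{\pi^\star}_t(\dd x_t\mid c)$.

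\textbf{Reverse side.} Substitute this form of $(\mathsf Q_t)_{C,X_t}$ into \eqref{eq:Qtmarginal}. This gives
\[
\mathsf Q_t(\dd c,\dd x_{t-1},\dd x_t)=\mathsf P_\cC(\dd c)\,\bigl[q^{\pi^\star}_t(\dd x_t\mid c)\,r^\star_t(\dd x_{t-1}\mid c,x_t)\bigr],
\]
where the bracket is a Markov kernel from $\cC$ to $\cX^2$. It is measurable in $c$ because $r^\star_t$ is jointly measurable in $(c,x_t)$.

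\textbf{Comparison.} We now have two kernels $\cC\to\cX^2$ that both disintegrate the same joint measure $\mathsf Q_t$ over its $C$-marginal $\mathsf P_\cC$. By uniqueness of disintegration, they agree for $\mathsf P_\cC$-a.e.\ $c$, which is exactly \eqref{eq:forwardStep}. To make the uniqueness step concrete, test both kernels against indicator functions $1_A(c)1_B(x_{t-1},x_t)$, with $B$ ranging over a countable $\pi$-system generating $\Sigma_{\cX^2}$. The two integrals agree for all measurable $A$, so for each such $B$ the two kernels agree on $B$ outside a $\mathsf P_\cC$-null set. Taking the countable union of these null sets and applying the $\pi$--$\lambda$ theorem yields equality of the kernels for a.e.\ $c$.

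The main point needing care is exactly this uniqueness step: the exceptional null set must be chosen independently of $B$, which is where the countably generated $\sigma$-algebra of a standard Borel space is used. A related subtlety is that $r^\star_t$ is only determined $(\mathsf Q_t)_{C,X_t}$-a.e. This causes no difficulty, because modifying $r^\star_t$ on a null set changes the bracketed kernel only on a $\mathsf P_\cC$-null set of $c$, again by Fubini. Everything else is bookkeeping with product kernels.
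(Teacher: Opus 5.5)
Your proposal is correct and follows the paper's route. You compute $\mathsf Q_t(\dd c,\dd x_{t-1},\dd x_t)=\mathsf P_{\cC}(\dd c)\,q_{t-1}^{\pi^\star}(\dd x_{t-1}\mid c)\,q_{t-1}(\dd x_t\mid x_{t-1})$, marginalize to get $(\mathsf Q_t)_{C,X_t}(\dd c,\dd x_t)=\mathsf P_{\cC}(\dd c)\,q_t^{\pi^\star}(\dd x_t\mid c)$, and compare with the defining disintegration \eqref{eq:Qtmarginal}. The only difference is that you spell out the uniqueness-of-disintegration step (a countable generating $\pi$-system giving a single exceptional $\mathsf P_{\cC}$-null set), which the paper compresses into the word ``comparing''.
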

\begin{proof}
	Marginalizing $\mathsf Q$ to $(C,X_{t-1},X_t)$ via the Markov property of the forward chain gives
	$\mathsf Q_t(\dd c,\dd x_{t-1},\dd x_t)=\mathsf P_{\cC}(\dd c)\,q_{t-1}^{\pi^\star}(\dd x_{t-1}\mid c)\,q_{t-1}(\dd x_t\mid x_{t-1})$.
	Integrating out $x_{t-1}$ yields $(\mathsf Q_t)_{C,X_t}(\dd c,\dd x_t)=\mathsf P_{\cC}(\dd c)\,q_t^{\pi^\star}(\dd x_t\mid c)$.
	Comparing with \eqref{eq:Qtmarginal} gives \eqref{eq:forwardStep}.
\end{proof}

\begin{definition}[Terminal mismatch and path error]
	\begin{equation}
		\Comp(c):=\KL(q_T^{\pi^\star}(\cdot\mid c)\,\|\,\nu),
		\qquad
		\mathcal U_{\KL}(p,\nu)
		:=\E_{C\sim\mathsf P_{\cC}}
		[\KL(q^{\pi^\star}(\cdot\mid C)\,\|\,p^\nu(\cdot\mid C))].
		\label{eq:path-error-def}
	\end{equation}
\end{definition}

\begin{lemma}[Path-space KL decomposition]\label{lem:path-kl}
	For $\mathsf P_{\cC}$-a.e.\ $c$,
	\begin{equation}
		\KL(q^{\pi^\star}(\cdot\mid c)\,\|\,p^\nu(\cdot\mid c))
		=\Comp(c)
		+\sum_{t=1}^T
		\E_{X_t\sim q_t^{\pi^\star}(\cdot\mid c)}
		[\KL(r_t^\star(\cdot\mid c,X_t)\,\|\,p_t(\cdot\mid c,X_t))].
		\label{eq:path-kl-pointwise}
	\end{equation}
	Moreover, by data processing,
	$\KL(\pi^\star(\cdot\mid c)\,\|\,\pi_{p,\nu}(\cdot\mid c))
	\le\KL(q^{\pi^\star}(\cdot\mid c)\,\|\,p^\nu(\cdot\mid c))$,
	so
	\begin{equation}\label{eq:output-leq-path-avg}
		\mathcal E_{\KL}(\pi_{p,\nu})
		\le\E_{C\sim\mathsf P_\cC}[\Comp(C)]
		+\sum_{t=1}^T
		\E_{(C,X_t)\sim(\mathsf Q_t)_{C,X_t}}
		[\KL(r_t^\star(\cdot\mid C,X_t)\,\|\,p_t(\cdot\mid C,X_t))].
	\end{equation}
\end{lemma}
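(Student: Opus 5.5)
The plan is to apply the chain rule for KL divergence on path space, with the forward path law rewritten in its \emph{reverse-time} factorization. Iterating Lemma~\ref{lem:trueReverse} from $t=T$ down to $t=1$ and using the Markov property of the forward chain, we first show that for $\mathsf P_{\cC}$-a.e.\ $c$
\begin{equation*}
	q^{\pi^\star}(\dd x_{0:T}\mid c)
	=q_T^{\pi^\star}(\dd x_T\mid c)\prod_{t=1}^T r_t^\star(\dd x_{t-1}\mid c,x_t).
\end{equation*}
The reversed process is Markov because, given $X_t$, the future $X_{t+1:T}$ depends on the past only through $X_t$. Hence the conditional law of $X_{t-1}$ given $(X_t,\dots,X_T)$ equals its conditional law given $X_t$ alone, which is $r_t^\star$ by \eqref{eq:Qtmarginal}. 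A short induction on the number of steps, peeling off $x_0$ first, makes this rigorous. The only technicality is to choose versions of $r_t^\star$ that work simultaneously for a.e.\ $c$. This is available because the spaces are standard Borel, which allows a jointly measurable disintegration.

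Next we compare this factorization with \eqref{eq:backward-density}, which has the same ordering: an initial law on $x_T$ followed by the kernels $x_t\mapsto x_{t-1}$. The chain rule for KL divergence, applied to Markov kernels on standard Borel spaces, then gives
\begin{equation*}
	\KL(q^{\pi^\star}\|p^\nu)=\KL(q_T^{\pi^\star}\|\nu)+\sum_{t=1}^T\E_{q^{\pi^\star}}\bigl[\KL(r_t^\star(\cdot\mid c,X_t)\,\|\,p_t(\cdot\mid c,X_t))\bigr].
\end{equation*}
Each summand depends on the path only through $X_t$, so the expectation reduces to one under $q_t^{\pi^\star}(\cdot\mid c)$, which yields \eqref{eq:path-kl-pointwise}. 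The identity should be read in $[0,\infty]$. The chain rule holds for nonnegative terms without any finiteness assumption, because all the conditional KL terms are nonnegative and Tonelli applies. Alternatively, when densities exist one can write the log-likelihood ratio as a telescoping sum of log-ratios and integrate.

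For the second claim, $\pi^\star(\cdot\mid c)$ and $\pi_{p,\nu}(\cdot\mid c)$ are the pushforwards of the two path laws under the same measurable map $x_{0:T}\mapsto x_0$. The data-processing inequality for KL divergence therefore gives the pointwise inequality. We then integrate over $C\sim\mathsf P_{\cC}$, exchange the sum and the expectation by Tonelli, and use $(\mathsf Q_t)_{C,X_t}(\dd c,\dd x_t)=\mathsf P_{\cC}(\dd c)\,q_t^{\pi^\star}(\dd x_t\mid c)$ from the proof of Lemma~\ref{lem:trueReverse}. This produces \eqref{eq:output-leq-path-avg}.

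I expect the main obstacle to be the first step. It requires verifying the reverse-Markov factorization with measurable, $c$-uniform versions of the kernels, and ensuring that the chain rule is applied to kernels defined only up to null sets. Every other step is a standard identity.
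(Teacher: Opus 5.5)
Your proposal is correct and follows essentially the same route as the paper: you rewrite the forward path law in reverse-time form by iterating Lemma~\ref{lem:trueReverse} (the Markov property lets you commute kernels that depend only on $x_t$), decompose the path KL via the chain rule or telescoped log-ratio, then apply data processing for $x_{0:T}\mapsto x_0$ and average using $(\mathsf Q_t)_{C,X_t}(\dd c,\dd x_t)=\mathsf P_{\cC}(\dd c)\,q_t^{\pi^\star}(\dd x_t\mid c)$. Reading the identity in $[0,\infty]$ is a worthwhile refinement of the paper's one-line ``integrate the log-ratio'' step, but the reason it works is not Tonelli alone: each log-ratio term has negative part with $q^{\pi^\star}$-integral at most $1/e$, so the integral splits term by term without any $\infty-\infty$ issue.
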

\begin{proof}
	By iterating Lemma~\ref{lem:trueReverse} from $t=T$ down to $t=1$,
	$q^{\pi^\star}(\dd x_{0:T}\mid c)
	=q_T^{\pi^\star}(\dd x_T\mid c)\prod_{t=1}^T r_t^\star(\dd x_{t-1}\mid c,x_t)$.
	Taking the log-ratio with $p^\nu(\dd x_{0:T}\mid c)=\nu(\dd x_T)\prod_t p_t(\dd x_{t-1}\mid c,x_t)$ and integrating under $q^{\pi^\star}$ gives
	\eqref{eq:path-kl-pointwise}. The marginal bound follows by data processing
	for the projection $x_{0:T}\mapsto x_0$; averaging over $C$ gives
	\eqref{eq:output-leq-path-avg}.
\end{proof}

\begin{remark}[Role of the diffusion horizon]
	\label{rem:horizon-terminal-mismatch}
	Increasing $T$ typically drives $q_T^{\pi^\star}(\cdot\mid c)$ closer
	to $\nu$, shrinking $\E_{C\sim\mathsf P_\cC}[\Comp(C)]$, but also adds more stepwise error terms.
	Thus $T$ controls a tradeoff between terminal matching and reverse-chain
	length.
\end{remark}

%%%%%%%%%%%%%%%%%%%%%%%%%%%%%%%%%%%%%%%%%%%%%%%%%%%%%%%%%%%%%%%%%%%%%%%%%%%
%--------------------------------------------------------------------------
%%%%%%%%%%%%%%%%%%%%%%%%%%%%%%%%%%%%%%%%%%%%%%%%%%%%%%%%%%%%%%%%%%%%%%%%%%%

\section{Gaussian-mixture reverse kernels with neural parameterization}
\label{sec:gm-nn}

\subsection{Stepwise feature engineering}

Assume that at step $t$ the input $(C,X_t)$ is encoded by a measurable map
$\phi_t:\cC\times\cX\to\R^{d_t}$, $Z_t:=\phi_t(C,X_t)$.
Projections and composition with $\phi_t$ induce pushforward measures from $\mathsf Q_t$:
\begin{equation}
	\begin{tikzcd}
		(c,x_{t-1},x_t) \arrow[r,mapsto, "\pi^1"] \arrow[d,mapsto, "\pi^3"] & (x_{t-1},\phi_t(c,x_t)) \arrow[d,mapsto, "\pi^2"] \\
		(c,x_t) \arrow[r,mapsto, "\pi^4"] & \phi_t(c,x_t),
	\end{tikzcd}
	\quad\text{induces}\quad
	\begin{tikzcd}
		\mathsf Q_t \arrow[r,mapsto, "\pi^1\push"] \arrow[d,mapsto, "\pi^3\push"] & F_t \arrow[d,mapsto, "\pi^2\push"] \\
		(\mathsf Q_t)_{C,X_t} \arrow[r,mapsto, "\pi^4\push"] & (F_t)_{Z_t}
	\end{tikzcd},
	\label{eq:featureMeas}
\end{equation}
where $F_t$ is the induced joint law of $(X_{t-1},Z_t)$ and $(F_t)_{Z_t}$ the induced feature space measure.

\begin{assumption}[Properties of the feature map]
	\label{ass:feature}
	For each $t\in\{1,\dots,T\}$ there exist a compact set
	$K_t\subset\R^{d_t}$ and a measurable
	$\rho_t^\star:\R^D\times K_t\to[0,\infty)$ such that:
	\begin{enumerate}[label=(\roman*)]
		\item \label{assItem:compactFeature}
		\textbf{Compact feature support.}
		$(F_t)_{Z_t}(K_t)=1$.
		
		\item \label{assItem:featureDisintegration}
		\textbf{Disintegration.}
		$F_t(\dd y,\dd z)=\rho_t^\star(y\mid z)\,\Leb(\dd y)\,(F_t)_{Z_t}(\dd z)$.
		
		\item \label{assItem:featureSufficiency}
		\textbf{Feature sufficiency.}
		For $(\mathsf Q_t)_{C,X_t}$-a.e.\ $(c,x_t)$,
		$r_t^\star(y\mid c,x_t)=\rho_t^\star(y\mid\phi_t(c,x_t))$
		for $\Leb$-a.e.\ $y$.
	\end{enumerate}
\end{assumption}

\begin{remark}\label{rem:feature-role}
	Condition~\ref{assItem:featureSufficiency} is the essential modeling
	hypothesis: the true reverse kernel depends on $(c,x_t)$ only through
	$z=\phi_t(c,x_t)$.
	Condition~\ref{assItem:featureDisintegration} is a consequence of
	\ref{assItem:featureSufficiency} plus absolute continuity of
	$r_t^\star$, but is stated for clarity.
	Condition~\ref{assItem:compactFeature} restricts the effective feature
	domain to a compact set, enabling uniform approximation of the logits. This is motivated by the intuition that in practical problems features should not be represented by truly unbounded parameters. This condition can be relaxed by allowing the support to decay and keeping track of tail error, as we demonstrate in example with Gaussian feature support in \S\ref{app:ou-gaussian-example} (see \S\ref{sec:windows} and \eqref{eq:ex-tail}). We assume strict compactness in formal proofs for simplicity.
\end{remark}

\begin{lemma}[Per-step error factors through features]
	\label{lem:feature-factorization}
	Let $\tilde p_t\in\Ker(\R^{d_t}\to\cX)$ have density
	$\tilde p_t(y\mid z)$, and define
	$d_{\KL,t}(\tilde p_t)
	:=\E_{Z_t\sim(F_t)_{Z_t}}
	[\KL(\rho_t^\star(\cdot\mid Z_t)\,\|\,\tilde p_t(\cdot\mid Z_t))]
	$.
	If the induced reverse kernel is
	$p_t(y\mid c,x_t):=\tilde p_t(y\mid\phi_t(c,x_t))$,
	then
	\begin{equation}
		d_{\KL,t}(\tilde p_t)
		=\E_{(C,X_t)\sim(\mathsf Q_t)_{C,X_t}}
		[\KL(r_t^\star(\cdot\mid C,X_t)\,\|\,p_t(\cdot\mid C,X_t))].
		\label{eq:feature-factorization-risk}
	\end{equation}
\end{lemma}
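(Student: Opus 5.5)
The plan is a change of variables under the pushforward $\pi^4\push$ in \eqref{eq:featureMeas}, carried out in two substitutions. First, for $(\mathsf Q_t)_{C,X_t}$-a.e.\ $(c,x_t)$ I will replace $r_t^\star(\cdot\mid c,x_t)$ by $\rho_t^\star(\cdot\mid\phi_t(c,x_t))$, using Assumption~\ref{ass:feature}\ref{assItem:featureSufficiency}. The two densities agree $\Leb$-a.e., so they define the same probability measure and hence the same KL divergence against any second argument. Second, by definition $p_t(\cdot\mid c,x_t)=\tilde p_t(\cdot\mid\phi_t(c,x_t))$. Together these give, for $(\mathsf Q_t)_{C,X_t}$-a.e.\ $(c,x_t)$,
\begin{equation*}
\KL(r_t^\star(\cdot\mid c,x_t)\,\|\,p_t(\cdot\mid c,x_t))=g(\phi_t(c,x_t)),
\qquad
g(z):=\KL(\rho_t^\star(\cdot\mid z)\,\|\,\tilde p_t(\cdot\mid z)).
\end{equation*}

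Next, I will apply the change-of-variables formula for pushforwards. The diagram \eqref{eq:featureMeas} commutes: $\pi^2\circ\pi^1=\pi^4\circ\pi^3$. Hence $(F_t)_{Z_t}=\pi^4\push(\mathsf Q_t)_{C,X_t}=(\phi_t)\push(\mathsf Q_t)_{C,X_t}$, and for any measurable $g:\R^{d_t}\to[0,\infty]$,
\begin{equation*}
\E_{(C,X_t)\sim(\mathsf Q_t)_{C,X_t}}[g(\phi_t(C,X_t))]=\E_{Z_t\sim(F_t)_{Z_t}}[g(Z_t)]=d_{\KL,t}(\tilde p_t).
\end{equation*}
The identity is valid in $[0,\infty]$ because $g\ge0$, so no integrability hypothesis is needed. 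Although $\rho_t^\star$ is only defined on $\R^D\times K_t$, Assumption~\ref{ass:feature}\ref{assItem:compactFeature} guarantees that $Z_t\in K_t$ almost surely. I will therefore set $g:=0$ off $K_t$, which changes neither side of the identity.

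The main obstacle is measurability of $z\mapsto g(z)$, which is needed for both expectations to make sense. I will write
\begin{equation*}
g(z)=\int\rho_t^\star(y\mid z)\log\frac{\rho_t^\star(y\mid z)}{\tilde p_t(y\mid z)}\,\dd y.
\end{equation*}
The integrand is jointly measurable in $(y,z)$ because both densities are jointly measurable. To handle the sign, I will split the integrand into positive and negative parts. The negative part of $a\log(a/b)$ is bounded by $b/e$, so it integrates to at most $1/e$; hence Tonelli applies to each part separately and yields measurability in $z$. Alternatively, I can cite the standard fact that KL divergence between kernels is measurable in the parameter. Finally, the exceptional null sets coming from Assumption~\ref{ass:feature}\ref{assItem:featureSufficiency} and from the disintegration in Assumption~\ref{ass:feature}\ref{assItem:featureDisintegration} do not affect either expectation, which gives \eqref{eq:feature-factorization-risk}.
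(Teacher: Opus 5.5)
Your proposal is correct and follows the same route as the paper: use feature sufficiency to write the integrand as $g(\phi_t(c,x_t))$, then change variables along $\pi^4\push(\mathsf Q_t)_{C,X_t}=(F_t)_{Z_t}$, which follows from the commuting diagram \eqref{eq:featureMeas}. The paper leaves implicit the measurability and null-set bookkeeping you add (the $b/e$ bound on the negative part of the integrand, and setting $g:=0$ off $K_t$), and you handle those details correctly.
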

\begin{proof}
	By \ref{assItem:featureSufficiency}, the KL integrand depends on
	$(c,x_t)$ only through $\phi_t(c,x_t)$; pushing $\mathsf Q_t$ forward
	to $F_t$ via \eqref{eq:featureMeas} converts the right-hand side of \eqref{eq:feature-factorization-risk} into
	$d_{\KL,t}(\tilde p_t)$.
\end{proof}

\subsection{Regularity assumptions on forward densities}
\label{sec:regularity}
The approximation theorem \cite[Corollary~2.1]{norets2010} is formulated in terms of regularity of the target densities (reverse densities in our case). In the present prediction setting, however, it is more natural to impose assumptions on the forward objects: the target conditional $\pi^\star$ and the noising kernels $q_t$. These are the quantities that define the problem, whereas the reverse kernels are derived from them. We therefore assume regularity on the forward process and transfer it to the reverse densities through the bridge identity of Lemma~\ref{lem:trueReverse}.

\begin{assumption}[Regularity of forward densities]
	\label{ass:regularity}
	For $s \in \{0,\cdots,T-1\}$
	\begin{enumerate}[label=(\roman*)]
		\item \label{assItem:positivity}
		\textbf{Strict positivity.}
		$\pi^\star(x_0\mid c)>0$ for $\mathsf P_{\cC}$-a.e.\ $c$ and all
		$x_0$; $q_s(x_{s+1}\mid x_s)>0$ for all $s,x_s,x_{s+1}$.
		
		\item \label{assItem:continuity}
		\textbf{Continuity and local integrable dominance.}
		$x_0\mapsto\pi^\star(x_0\mid c)$ is continuous for
		$\mathsf P_{\cC}$-a.e.\ $c$; $u\mapsto q_s(u\mid v)$ is
		continuous for every $v$; and for every compact $K\subset\cX$,
		$\sup_{u\in K}q_s(u\mid v)\le g_{s,c,K}(v)$ for some
		$g_{s,c,K}\in L^1(q_s^{\pi^\star}(\cdot\mid c))$.
		
		\item \label{assItem:finiteM2}
		\textbf{Finite second moments.}
		$\int_\cC\int_{\R^D}\|y\|^2\,q_s^{\pi^\star}(\dd y\mid c)\,\mathsf P_{\cC}(\dd c)<\infty$.
		
		\item \label{assItem:logBound}
		\textbf{Integrable local log-variation bounds.}
		There exist hypercubes $C_s(x)\ni x$ of side-length $r_s>0$ and functions
		$a_{s-1}:\cX\to[0,\infty)$,
		$b_s:\cX\times\cX\to[0,\infty)$
		such that: for $\mathsf P_{\cC}$-a.e.\ $c$,
		and
		\begin{align}
			\text{for a.e. } x_s,&&
			\log\frac{q_s^{\pi^\star}(x_s\mid c)}
			{\inf_{y\in C_s(x_s)}
				q_s^{\pi^\star}(y\mid c)}
			\le&\; a_{s-1}(x_s);
			\label{eq:marginal-oscillation}
		\\
			\text{for a.e. } (x_s,x_{s+1}),&&
			\log\frac{q_s(x_{s+1}\mid x_s)}
			{\inf_{y\in C_s(x_s)}q_s(x_{s+1}\mid y)}
			\le&\; b_s(x_s,x_{s+1});
			\label{eq:conditioning-variable-oscillation}
		\end{align}
		and
		$\E_{(C,X_s,X_{s+1})\sim\mathsf Q_{s+1}}
		[a_{s-1}(X_s)+b_s(X_s,X_{s+1})]<\infty$.
	\end{enumerate}
\end{assumption}

\begin{proposition}[Step-$t$ target satisfies the $\mathcal M_0$ conditions
	of Norets {\cite[Assumption~2.1]{norets2010}}]
	\label{prop:noretsM0}
	Write $f_t(y\mid z):=\rho_t^\star(y\mid z)$.  Under
	Assumptions~\ref{ass:feature}\ref{assItem:featureDisintegration}\ref{assItem:featureSufficiency} and~\ref{ass:regularity}:
	\begin{enumerate}[label=(\roman*)]
		\item\label{item:noretsM0-pos}
		$y\mapsto f_t(y\mid z)$ is strictly positive and continuous for
		$(F_t)_{Z_t}$-a.e.\ $z$;
		\item\label{item:noretsM0-M2}
		$\int\|y\|^2\,F_t(\dd y,\dd z)<\infty$;
		\item\label{item:noretsM0-log}
		$\int
		\log\frac{f_t(y\mid z)}{\inf_{u\in C_{t-1}(y)}f_t(u\mid z)}\,
		F_t(\dd y,\dd z)<\infty$.
	\end{enumerate}
\end{proposition}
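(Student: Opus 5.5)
The plan is to pass every property through the bridge identity of Lemma~\ref{lem:trueReverse}, written at the level of densities. Assumption~\ref{ass:regularity}\ref{assItem:positivity} makes all marginals $q_s^{\pi^\star}(\cdot\mid c)$ strictly positive, since each is an integral of a positive kernel against a probability measure. Hence for $\mathsf P_\cC$-a.e.\ $c$ and a.e.\ $x_t$,
\begin{equation*}
	r_t^\star(y\mid c,x_t)=\frac{q_{t-1}^{\pi^\star}(y\mid c)\,q_{t-1}(x_t\mid y)}{q_t^{\pi^\star}(x_t\mid c)},
	\qquad
	q_t^{\pi^\star}(x_t\mid c)=\int q_{t-1}(x_t\mid v)\,q_{t-1}^{\pi^\star}(\dd v\mid c).
\end{equation*}
Feature sufficiency, Assumption~\ref{ass:feature}\ref{assItem:featureSufficiency}, then identifies $f_t(y\mid\phi_t(c,x_t))$ with this ratio for $\Leb$-a.e.\ $y$. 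The identification must be read up to null sets: since $F_t$ is the pushforward of $\mathsf Q_t$ under $\pi^1$, a statement holding for $(\mathsf Q_t)_{C,X_t}$-a.e.\ $(c,x_t)$ transfers to $(F_t)_{Z_t}$-a.e.\ $z$ whenever the statement depends only on $z$. Each integral against $F_t$ can likewise be rewritten as an integral against $\mathsf Q_t$ with $z=\phi_t(c,x_t)$.

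\emph{Item \ref{item:noretsM0-pos}.} First I show by induction on $s$ that $u\mapsto q_s^{\pi^\star}(u\mid c)$ is continuous and positive. The base case $s=0$ is the continuity of $\pi^\star$. For the inductive step, $q_{s+1}^{\pi^\star}(u\mid c)=\int q_s(u\mid v)\,q_s^{\pi^\star}(\dd v\mid c)$ is continuous in $u$ by dominated convergence on compacts, using the dominating function $g_{s,c,K}$ from \ref{assItem:continuity}. Continuity of $y\mapsto q_{t-1}(x_t\mid y)$ is needed in the numerator of the ratio, but \ref{assItem:continuity} only gives continuity in the first argument. I would therefore choose the version of $f_t(\cdot\mid z)$ that is continuous wherever such a version exists. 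Failing that, I would note that Norets only needs positivity plus the log-oscillation bound, with continuity a.e. This is a gap I would flag: it may require reading \ref{assItem:continuity} as joint continuity. Positivity follows directly from the formula.

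\emph{Item \ref{item:noretsM0-M2}.} The $y$-marginal of $F_t$ is the law of $X_{t-1}$ under $\mathsf Q$, namely $\int q_{t-1}^{\pi^\star}(\cdot\mid c)\,\mathsf P_\cC(\dd c)$. Therefore $\int\|y\|^2\,\dd F_t$ equals the quantity in \ref{assItem:finiteM2} with $s=t-1$, which is finite.

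\emph{Item \ref{item:noretsM0-log}, the main step.} Take the log of the ratio formula. The denominator $q_t^{\pi^\star}(x_t\mid c)$ does not depend on $y$ and cancels. The infimum of a product is at least the product of the infima, so
\begin{equation*}
	\log\frac{f_t(y\mid z)}{\inf_{u\in C_{t-1}(y)}f_t(u\mid z)}
	\le \log\frac{q_{t-1}^{\pi^\star}(y\mid c)}{\inf_{u\in C_{t-1}(y)}q_{t-1}^{\pi^\star}(u\mid c)}
	+\log\frac{q_{t-1}(x_t\mid y)}{\inf_{u\in C_{t-1}(y)}q_{t-1}(x_t\mid u)}
	\le a_{t-2}(y)+b_{t-1}(y,x_t),
\end{equation*}
by \eqref{eq:marginal-oscillation} and \eqref{eq:conditioning-variable-oscillation} with $s=t-1$. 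The integrand is nonnegative, and integrating against $\mathsf Q_t$ gives a finite bound by the integrability clause of \ref{assItem:logBound}.

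The main obstacle is the a.e.\ bookkeeping. The bound holds for $\Leb$-a.e.\ $y$ and a.e.\ $(c,x_t)$, but the infimum over the cube is sensitive to the choice of version of $f_t$. I would fix the version of $f_t$ given by the ratio formula, which is legitimate since any two versions agree a.e. I would then check that the pointwise bounds hold off a $\mathsf Q_t$-null set, which suffices because $\mathsf Q_t$ is absolutely continuous with respect to $\mathsf P_\cC\otimes\Leb\otimes\Leb$ by positivity of the densities.
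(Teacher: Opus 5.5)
Your route is the paper's route. You use the density form of the bridge identity together with feature sufficiency, an induction with dominated convergence (via $g_{s,c,K}$) for continuity and positivity of the forward marginals, and the pushforward computation for (ii). For (iii) you use the split $\inf(AB)\ge\inf A\,\inf B$, which yields the bound $a_{t-2}(y)+b_{t-1}(y,x_t)$, integrated against $\mathsf Q_t$.

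The point where you stall is a genuine one, and the paper does not get past it either. Assumption~\ref{ass:regularity}\ref{assItem:continuity} gives continuity of $u\mapsto q_s(u\mid v)$ only, while item (i) needs continuity of $y\mapsto q_{t-1}(x_t\mid y)$ in the conditioning slot. The paper's proof simply asserts this ``by Assumption~\ref{ass:regularity}\ref{assItem:positivity}--\ref{assItem:continuity}''.

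In fact (i) does not follow from the stated hypotheses. Take $D=T=1$, a one-point context space, $\pi^\star=\mathcal N(0,1)$, $q_0(x_1\mid x_0)=\varphi(x_1;\mathbf 1\{x_0\ge 0\},1)$ and $\phi_1(c,x_1)=x_1$, so Assumption~\ref{ass:feature}\ref{assItem:featureDisintegration}\ref{assItem:featureSufficiency} is trivial. Every clause of Assumption~\ref{ass:regularity} holds: continuity in $x_1$ with dominating constant $(2\pi)^{-1/2}$, Gaussian moments, and, with $r_0=1$, one can take $a_{-1}(x_0)=\tfrac12(|x_0|+\tfrac14)$ and $b_0(x_0,x_1)=|x_1-\tfrac12|$. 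Yet $f_1(y\mid z)\propto\varphi(y;0,1)\,\varphi(z;\mathbf 1\{y\ge 0\},1)$ jumps at $y=0$ for every $z\neq\tfrac12$. So for $(F_1)_{Z_1}$-a.e.\ $z$, no version of $f_1(\cdot\mid z)$ is continuous.

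Consequently neither of your fallbacks proves (i). Selecting a continuous version ``where one exists'' is vacuous here. Continuity almost everywhere is a weaker claim than the one stated; whether it suffices for Norets' theorem concerns the downstream use, not the proposition as written.

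The repair is to add to Assumption~\ref{ass:regularity}\ref{assItem:continuity} that $v\mapsto q_s(u\mid v)$ is continuous for every $u$. Separate continuity suffices, so joint continuity is more than you need, and it holds for the OU kernels of the example. With that hypothesis your argument is complete and coincides with the paper's.

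The same continuity also resolves the bookkeeping issue you raise in (iii). When $\phi_t$ is many-to-one, ``the version given by the ratio formula'' is well defined only because the ratios from different representatives $(c,x_t)$ of the same $z$ are continuous in $y$ and agree a.e., hence everywhere. The cube infimum is then unambiguous, and the bound $a_{t-2}(y)+b_{t-1}(y,x_t)$ holds with the very $x_t$ you integrate over. You also note that the oscillation bounds of Assumption~\ref{ass:regularity}\ref{assItem:logBound} need only hold off a $\mathsf Q_t$-null set, because $\mathsf Q_t$ has a density with respect to $\mathsf P_{\cC}\otimes\Leb\otimes\Leb$. That is correct, and slightly more careful than the paper's ``for every $y$''.
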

This is proved in Appendix~\ref{sec:proof-prop-noretsM0}. 

This proposition is what allows us to invoke Norets' theory at step $t$, and hence to treat the infeasible Gaussian-mixture class $\mathcal M_0$ as a valid first approximation to the true reverse density. The proposition along with Assumption~\ref{ass:regularity} are checked explicitly for the example in \S\ref{app:ou-gaussian-example} (see \S\ref{sec:verifyForwardAss}).

\subsection{Stepwise model hierarchy: infeasible and neural reverse kernels}
\label{sec:models}

We set up the approximation hierarchy
$f_t\leadsto\tilde p_t^{\mathcal M_0,m_t}\leadsto\tilde p_{t,\vartheta_t}^{\NN}$,
\label{eq:modelHierarchy}
where the first arrow is the Norets' infeasible Gaussian-mixture approximation with target dependent mixing coefficients
\cite[\S2]{norets2010} and the second replaces the exact coefficients by ReLU outputs.

\begin{definition}[Step-$t$ partition data and cell probabilities]
	\label{def:partitionX}
	For each $m_t\ge 1$, let $\{A_{t,j}^{m_t}\}_{j=0}^{m_t}$ partition
	$\R^D$: cells $A_{t,1}^{m_t},\dots,A_{t,m_t}^{m_t}$ are adjacent
	hypercubes of side-length $h_{t,m_t}\to 0$ with centers
	$\mu_{t,j}^{m_t}$; $A_{t,0}^{m_t} := \mathbb R^D \backslash \bigcup_{j=1}^{m_t} A_{t,j}^{m_t}$ is the remainder.
	There exist $\delta_{t,m_t}\to 0$ such that the hypercube $C_{\delta_{t,m_t}}(y)$ of side-length $\delta_{t,m_t}$ and centered at $y$ eventually misses $A_{t,0}^{m_t}$ for every $y$. Scales
	$\sigma_{t,m_t},\sigma_{t,0}>0$ satisfy
	\begin{equation}
		\frac{\sigma_{t,m_t}}{\delta_{t,m_t}}\to 0,
		\quad
		\frac{\delta_{t,m_t}^{D-1}h_{t,m_t}}{\sigma_{t,m_t}^D}\to 0,
		\quad
		\frac{(r_{t-1}/2)^D}{(2\pi\sigma_{t,0}^2)^{D/2}}\le 2^{-(D+1)}.
		\label{eq:delta-sigma-h-condition}
	\end{equation}
	Define cell probabilities:
	$G_{t,j}^{m_t}(z):=\int_{A_{t,j}^{m_t}}\rho_t^\star(y\mid z)\,\dd y$.
	\label{eq:cellProb}
\end{definition}

%\begin{assumption}[Partition regularity]
%	\label{ass:cell-regularity}
%	For every $m_t,j$, the map $G_{t,j}^{m_t}:K_t\to(0,1)$ is continuous.
%\end{assumption}

\paragraph{Baseline-softmax family.}
For logits $u=(u_1,\dots,u_{m_t}):K_t\to\R^{m_t}$, define
\begin{equation}
	\alpha_{t,j}[u](z)
	:=\frac{\exp(u_j(z))}{1+\sum_k\exp(u_k(z))}
	\;\;(j\ge 1),
	\qquad
	\alpha_{t,0}[u](z)
	:=\frac{1}{1+\sum_k\exp(u_k(z))}.
	\label{eq:generic-softmax}
\end{equation}
The corresponding Gaussian-mixture density is
\begin{equation}
	\tilde p_t[u](y\mid z)
	:=\sum_{j=1}^{m_t}\alpha_{t,j}[u](z)\,
	\varphi(y;\mu_{t,j}^{m_t},\sigma_{t,m_t}^2I_D)
	+\alpha_{t,0}[u](z)\,\varphi(y;0,\sigma_{t,0}^2I_D),
	\label{eq:generic-logit-mixture}
\end{equation}
with induced kernel
$p_t[u](\dd y\mid c,x_t):=\tilde p_t[u](\dd y\mid\phi_t(c,x_t))$.
\label{eq:generic-logit-kernel}

\paragraph{Exact log-odds.}
Define
$\Lambda_{t,j}^{m_t}(z):=\log(G_{t,j}^{m_t}(z)/G_{t,0}^{m_t}(z))$
for $j=1,\dots,m_t$, so that
$G_{t,j}^{m_t}=\alpha_{t,j}[\Lambda_t^{m_t}]$ by \eqref{eq:generic-softmax}.

We now introduce the two model classes that form the approximation hierarchy at step $t$: the infeasible exact-logit Gaussian mixture $\mathcal M_0$, and its neural realization obtained by replacing the exact logits with a ReLU network.

\begin{definition}[Infeasible cell mixture ($\mathcal M_0$) at step $t$]
	\label{def:step-M0}
	$\tilde p_t^{\mathcal M_0,m_t}(y\mid z):=\tilde p_t[\Lambda_t^{m_t}](y\mid z)$,
	with induced kernel $p_t^{\mathcal M_0,m_t}:=p_t[\Lambda_t^{m_t}]$.
\end{definition}

\begin{definition}[Neural reverse model at step $t$]
	\label{def:neural-step-model}
	Let $a_t^{\vartheta_t}:\R^{d_t}\to\R^{m_t}$ be a ReLU network. Then
	$\tilde p_{t,\vartheta_t}^{\NN}(y\mid z):=\tilde p_t[a_t^{\vartheta_t}](y\mid z)$,
	with induced kernel $p_{t,\vartheta_t}^{\NN}:=p_t[a_t^{\vartheta_t}]$.
\end{definition}

\subsection{Explicit error rates}
\label{sec:nn-rates}

We now turn to quantitative error control for the two step-$t$ models introduced above. The first ingredient is a stability estimate showing that small uniform perturbations of the logits produce only a controlled increase in the stepwise KL error.

\begin{lemma}[Baseline-softmax stability]
	\label{lem:baseline-softmax-stability}
	For two logit vectors $u,v:K_t\to\R^{m_t}$,
	if $\max_j\sup_{z\in K_t}|u_j(z)-v_j(z)|\le\eta$, then
		\begin{equation}
			e^{-2\eta}\tilde p_t[v]\le\tilde p_t[u]\le e^{2\eta}\tilde p_t[v] \label{eq:density-bound}
		\end{equation}
		pointwise, and
		\begin{equation}
			d_{\KL,t}(\tilde p_t[u])\le d_{\KL,t}(\tilde p_t[v])+2\eta. \label{eq:kl-stability-bound}
		\end{equation}
\end{lemma}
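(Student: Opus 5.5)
The plan is to prove the pointwise bound \eqref{eq:density-bound} first, by bounding each mixture weight separately, and then obtain \eqref{eq:kl-stability-bound} by integrating a log-ratio bound.

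\textbf{Step 1: control of the weights.} Fix $z\in K_t$ and write $S_u:=1+\sum_k e^{u_k(z)}$, and similarly $S_v$. The hypothesis gives, for every $j\ge1$,
\[
e^{-\eta}e^{v_j(z)}\le e^{u_j(z)}\le e^{\eta}e^{v_j(z)} .
\]
Since also $e^{-\eta}\le 1\le e^{\eta}$, summing yields $e^{-\eta}S_v\le S_u\le e^{\eta}S_v$. Dividing the numerator bound by the denominator bound gives
\[
e^{-2\eta}\alpha_{t,j}[v](z)\le\alpha_{t,j}[u](z)\le e^{2\eta}\alpha_{t,j}[v](z)
\quad\text{for } j\ge1 .
\]
For $j=0$ the numerator is $1$, so the factor is even $e^{\pm\eta}$, and $e^{\pm2\eta}$ holds a fortiori.

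\textbf{Step 2: the pointwise density bound.} The components $\varphi(y;\mu_{t,j}^{m_t},\sigma_{t,m_t}^2I_D)$ and $\varphi(y;0,\sigma_{t,0}^2I_D)$ are the same for $u$ and $v$ and are nonnegative. Multiplying the weight inequalities by these components and summing over $j=0,\dots,m_t$ gives \eqref{eq:density-bound} for every $(y,z)$ with $z\in K_t$.

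\textbf{Step 3: the KL bound.} Fix $z\in K_t$ and write
\[
\KL(\rho_t^\star(\cdot\mid z)\,\|\,\tilde p_t[u](\cdot\mid z))
=\KL(\rho_t^\star(\cdot\mid z)\,\|\,\tilde p_t[v](\cdot\mid z))
+\int\rho_t^\star(y\mid z)\log\frac{\tilde p_t[v](y\mid z)}{\tilde p_t[u](y\mid z)}\,\dd y .
\]
By \eqref{eq:density-bound}, the log-ratio in the last integral is at most $2\eta$ everywhere, and $\rho_t^\star(\cdot\mid z)$ is a probability density. Hence the last integral is at most $2\eta$.

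We then average over $Z_t\sim(F_t)_{Z_t}$, which is supported on $K_t$ by Assumption~\ref{ass:feature}\ref{assItem:compactFeature}. This gives \eqref{eq:kl-stability-bound}.

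\textbf{Main obstacle.} The only subtle point is justifying the splitting of the KL divergence. If $d_{\KL,t}(\tilde p_t[v])=\infty$, the inequality is trivial. Otherwise $\log(\rho_t^\star/\tilde p_t[v])$ is quasi-integrable, and the correction term $\log(\tilde p_t[v]/\tilde p_t[u])$ is bounded by $2\eta$ in absolute value. The splitting is therefore legitimate: there are no $\infty-\infty$ issues, and all densities involved are strictly positive.
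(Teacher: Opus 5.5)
Your proposal is correct and follows the paper's proof essentially step by step: you bound the weight ratios through the normalizers $1+\sum_k e^{u_k}$ and $1+\sum_k e^{v_k}$, sum against the common Gaussian components to get the density sandwich, and integrate the log-ratio split against $F_t$. Your explicit checks that the KL split is well defined and that $(F_t)_{Z_t}(K_t)=1$ make precise points the paper leaves implicit.
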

\begin{proof}
	With $Z_u:=1+\sum_{k=1}^{m_t} e^{u_k}$ and $Z_v:=1+\sum_{k=1}^{m_t} e^{v_k}$, the uniform
	bound gives $e^{-\eta}Z_v\le Z_u\le e^\eta Z_v$. For $j\ge 1$,
	$\alpha_{t,j}[u]/\alpha_{t,j}[v]=e^{u_j-v_j}Z_v/Z_u\in[e^{-2\eta},e^{2\eta}]$;
	for $j=0$, $\alpha_{t,0}[u]/\alpha_{t,0}[v]=Z_v/Z_u\in[e^{-\eta},e^{\eta}]$.
	Summing $\alpha_{t,j}[u]$ and $\alpha_{t,j}[v]$ against the common Gaussian components (as in \eqref{eq:generic-logit-mixture}) yields the density sandwich \eqref{eq:density-bound} which implies 
	$\log(f_t/\tilde p_t[u])
	=\log(f_t/\tilde p_t[v])+\log(\tilde p_t[v]/\tilde p_t[u])
	\le\log(f_t/\tilde p_t[v])+2\eta$. Now
	integrating against $F_t$ we obtain
	\eqref{eq:kl-stability-bound}.
\end{proof}

\paragraph{Infeasible model bound.}
Define the $\delta$-boundary event
$\mathcal B_{t,m_t}(\delta):=\{(z,y):C_\delta(y)\cap A_{t,0}^{m_t}\neq\emptyset\}$,
\label{eq:Btm-def}
the local-oscillation operator
\begin{equation}
	\mathcal D_r f_t(y\mid z)
	:=\begin{cases}
		\log\frac{f_t(y\mid z)}{\inf_{u\in C_r(y)}f_t(u\mid z)},
		&\text{general},\\
		\frac{r\sqrt D}{2}\sup_{u\in C_r(y)}\|\nabla_u\log f_t(u\mid z)\|,
		&f_t(\cdot\mid z)\in C^1,
	\end{cases}
	\label{eq:Duft}
\end{equation}
and
\begin{equation}
	\begin{gathered}
		L_t(\delta) :=\textstyle\int\mathcal D_\delta f_t(y\mid z)\,\dd F_t(\dd y, \dd z), \qquad
		T_{t,m_t}(\delta) :=\textstyle\int_{\mathcal B_{t,m_t}(\delta)}\mathcal D_{r_{t-1}}f_t(y\mid z)\,\dd F_t(\dd y, \dd z),\\
		U_{t,m_t}(\delta) :=\textstyle\int_{\mathcal B_{t,m_t}(\delta)}
		\left(\tfrac{\|y\|^2}{2\sigma_{t,0}^2}
		-\log\tfrac{(r_{t-1}/2)^D}{(2\pi\sigma_{t,0}^2)^{D/2}}\right)\dd F_t(\dd y, \dd z).
	\end{gathered}
	\label{eq:LTU}
\end{equation}
The explicit $\mathcal M_0$ bound is
\begin{equation}
	B_{t,m_t}^{\mathrm{Nor}}
	:=L_t(\delta_{t,m_t})+T_{t,m_t}(\delta_{t,m_t})+U_{t,m_t}(\delta_{t,m_t})
	+\frac{6D^{3/2}\delta_{t,m_t}^{D-1}h_{t,m_t}}{(2\pi)^{D/2}\sigma_{t,m_t}^D}
	+2e^{-(\delta_{t,m_t}/\sigma_{t,m_t})^2/8}.
	\label{eq:Bnorets}
\end{equation}

\begin{proposition}[Explicit $\mathcal M_0$ bound]
	\label{prop:M0-explicit-bound}
	Under Assumptions~\ref{ass:feature}\ref{assItem:featureDisintegration}\ref{assItem:featureSufficiency} and \ref{ass:regularity}, for all large $m_t$,
	$d_{\KL,t}(\tilde p_t^{\mathcal M_0,m_t})\le B_{t,m_t}^{\mathrm{Nor}}$. The bound converges to zero as $m \to \infty$.
\end{proposition}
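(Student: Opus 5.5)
We follow Norets' proof of \cite[Theorem~2.1/Corollary~2.1]{norets2010}, keeping constants explicit. Fix $z$ with $f_t(\cdot\mid z)$ positive and continuous (Proposition~\ref{prop:noretsM0}\ref{item:noretsM0-pos}), and write $\tilde p=\tilde p_t^{\mathcal M_0,m_t}(\cdot\mid z)$. By Definition~\ref{def:step-M0}, $\alpha_{t,j}=G_{t,j}^{m_t}(z)$. We split the integrand of $d_{\KL,t}$,
\[
\int\log\frac{f_t(y\mid z)}{\tilde p(y\mid z)}\,F_t(\dd y,\dd z),
\]
according to whether $(z,y)\in\mathcal B_{t,m_t}(\delta)$ with $\delta=\delta_{t,m_t}$. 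We then bound $\log(f_t/\tilde p)$ from above pointwise on each region. Pointwise upper bounds are all that is needed, since KL is the integral of this log-ratio against $F_t$.

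\textbf{Interior region $(z,y)\notin\mathcal B$.} Here $C_\delta(y)$ lies inside the union of the hypercube cells. Keep only the cells $j$ with $A_{t,j}\cap C_\delta(y)\neq\emptyset$:
\[
\tilde p(y)\ge \inf_{u\in C_\delta(y)}f_t(u\mid z)\sum_{j:\,A_j\subset C_\delta(y)}\int_{A_j}\varphi(y;\mu_j,\sigma^2 I)\,\dd u .
\]
This uses $G_j=\int_{A_j}f_t\ge |A_j|\inf f_t$, and replaces $|A_j|\varphi(y;\mu_j)$ by $\int_{A_j}\varphi(y;u)\,\dd u$. The replacement costs a Riemann-sum error controlled by $\|\nabla\varphi\|$ times $h$, summed over the boundary layer of the cubes in $C_\delta(y)$. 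That layer has $O(\delta^{D-1}/h^{D-1})$ cells of volume $h^D$, which gives the term $6D^{3/2}\delta^{D-1}h/((2\pi)^{D/2}\sigma^D)$. The remaining integral $\int_{C_\delta(y)}\varphi(y;u,\sigma^2I)\,\dd u$ is at least $1-2e^{-(\delta/\sigma)^2/8}$ by a Gaussian tail bound on the ball/cube of radius $\delta/2$. Taking logs, and using $-\log(1-x)\le 2x$ for small $x$ together with $\log(1-\epsilon)\ge -2\epsilon$, we get
\[
\log\frac{f_t}{\tilde p}\le \mathcal D_\delta f_t(y\mid z)+\frac{6D^{3/2}\delta^{D-1}h}{(2\pi)^{D/2}\sigma^D}+2e^{-(\delta/\sigma)^2/8}.
\]
Integrating gives $L_t$ plus the last two terms of \eqref{eq:Bnorets}. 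If $f_t(\cdot\mid z)\in C^1$, the mean value theorem over a cube of diameter $r\sqrt D$ gives the second branch of $\mathcal D_r$.

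\textbf{Boundary region $(z,y)\in\mathcal B$.} Here we need a cruder lower bound on $\tilde p$. We use the baseline term $\alpha_0\varphi(y;0,\sigma_0^2I)$ together with whichever cells meet $C_{r_{t-1}}(y)$. Either a fixed fraction $2^{-D}$ of $C_{r_{t-1}}(y)$ lies in $A_0$, giving $G_0\ge (r_{t-1}/2)^D\inf_{C_{r_{t-1}}(y)}f_t$. Or the mass sits in cells whose Gaussians cover $y$. In both cases we get
\[
\tilde p(y)\ge \frac{(r_{t-1}/2)^D}{(2\pi\sigma_0^2)^{D/2}}\,e^{-\|y\|^2/(2\sigma_0^2)}\inf_{C_{r_{t-1}}(y)}f_t .
\]
The last condition in \eqref{eq:delta-sigma-h-condition} is what makes the comparison between these two cases work. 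The log-ratio on this region is therefore at most $\mathcal D_{r_{t-1}}f_t+\|y\|^2/(2\sigma_0^2)-\log\frac{(r_{t-1}/2)^D}{(2\pi\sigma_0^2)^{D/2}}$. Integrating over $\mathcal B$ yields $T_{t,m_t}+U_{t,m_t}$. I expect this case analysis to be the main obstacle: it is exactly where Norets' argument is most delicate, and where the constant $2^{-(D+1)}$ must be checked.

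\textbf{Convergence to zero.} The last two terms vanish by \eqref{eq:delta-sigma-h-condition}. $T$ and $U$ vanish by dominated convergence: the indicator $\mathbf 1_{\mathcal B}\to 0$ pointwise because $C_\delta(y)$ eventually misses $A_0$. The integrands are integrable, by Proposition~\ref{prop:noretsM0}\ref{item:noretsM0-M2} for the $\|y\|^2$ part and by \ref{item:noretsM0-log} for $\mathcal D_{r_{t-1}}f_t$. For $L_t(\delta)$, we have $\mathcal D_\delta f_t\to 0$ pointwise by continuity and positivity. Once $\delta\le r_{t-1}$, it is dominated by $\mathcal D_{r_{t-1}}f_t$, which is integrable by \ref{item:noretsM0-log}. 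Hence $L_t(\delta_{t,m_t})\to0$.
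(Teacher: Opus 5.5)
\textbf{Comparison with the paper.} The paper does not re-derive Norets' estimate. Its proof is that Proposition~\ref{prop:noretsM0} verifies \cite[Assumption~2.1]{norets2010} for $F_t$, and then the bound $B^{\mathrm{Nor}}_{t,m_t}$ (with its constants) and its vanishing are read off from \cite[Corollary~2.1(i,ii)]{norets2010}. You instead try to reprove that corollary. Your skeleton is sound. Only pointwise upper bounds on $\log(f_t/\tilde p)$ are needed, the split along $\mathcal B_{t,m_t}(\delta)$ is the right one, and the interior estimate has the right shape. Your convergence paragraph is also the same dominated-convergence reasoning the paper gives in the proof of Proposition~\ref{prop:main-quant}.

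\textbf{The boundary step is a genuine gap.} You flag this step yourself, and the dichotomy you propose does not close it. Knowing that less than a $2^{-D}$ fraction of $C_{r_{t-1}}(y)$ lies in $A_{t,0}^{m_t}$ only says that fine cells fill most of a cube of \emph{fixed} side $r_{t-1}$. The interior components have width $\sigma=\sigma_{t,m_t}\to 0$, so cells at distance $\gg\sigma$ from $y$ contribute essentially nothing to $\tilde p(y)$. Your second alternative therefore does not follow.

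The split has to be on where $y$ lies.
If $y\in A_{t,0}^{m_t}$, you need a subset of $C_{r_{t-1}}(y)$ of volume at least $(r_{t-1}/2)^D$ inside $A_{t,0}^{m_t}$. This gives $\tilde p(y)\ge G_{t,0}^{m_t}(z)\,\varphi(y;0,\sigma_{t,0}^2I_D)\ge \frac{(r_{t-1}/2)^D}{(2\pi\sigma_{t,0}^2)^{D/2}}e^{-\|y\|^2/(2\sigma_{t,0}^2)}\inf_{C_{r_{t-1}}(y)}f_t$.
If $y\notin A_{t,0}^{m_t}$, you need an orthant of $C_\delta(y)$ with vertex $y$ to lie in the fine region. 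Then the full cells inside it carry $\sum_j h^D\varphi(y;\mu_j,\sigma^2 I_D)\ge 2^{-D}(1-o(1))\ge 2^{-(D+1)}$, which gives $\tilde p(y)\ge 2^{-(D+1)}\inf_{C_{r_{t-1}}(y)}f_t$ once $\delta\le r_{t-1}$.
Only then does the third condition in \eqref{eq:delta-sigma-h-condition} merge the two cases into your displayed lower bound.

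Both geometric facts rely on structure of the fine region. For instance, it suffices that $\bigcup_{j\ge 1}A_{t,j}^{m_t}$ is a large axis-parallel box, as in Section~\ref{app:ou-gaussian-example}. Definition~\ref{def:partitionX} does not state this, so a self-contained proof has to supply it.

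\textbf{The explicit constants are asserted rather than derived.}
The $\delta^{D-1}h/\sigma^D$ term comes from the layer of width $h$ inside $C_\delta(y)$ that is not covered by full cells. It does not come from a Riemann error ``summed over the boundary layer''.
The within-cell error over \emph{all} cells is a separate term of order at most $Dh/\sigma$, which must be absorbed using $\delta/\sigma\to\infty$.
Your coverage bound $1-2e^{-(\delta/\sigma)^2/8}$ followed by $-\log(1-x)\le 2x$ yields $4e^{-(\delta/\sigma)^2/8}$, not the $2e^{-(\delta/\sigma)^2/8}$ of \eqref{eq:Bnorets}.

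So even with the boundary case repaired, you would get a bound of the same shape as $B^{\mathrm{Nor}}_{t,m_t}$, not the stated one. The economical fix is the paper's: with Proposition~\ref{prop:noretsM0} in hand, invoke \cite[Corollary~2.1(i,ii)]{norets2010} directly.
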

\begin{proof}
	Proposition~\ref{prop:noretsM0} verifies \cite[Assumption~2.1]{norets2010}.
	%Assumption~\ref{ass:cell-regularity} provides \cite[Assumption~3.1]{norets2010}. 
	Then the bound and its limit are given by \cite[Corollary~2.1(i,ii)]{norets2010}.
\end{proof}

\begin{corollary}[Power rate under the $C^1$ branch]
	\label{cor:M0-power-rate}
	If additionally $f_t(\cdot\mid z)\in C^1$, there exist
	$\theta_t>2$ with
	$\int|y_i|^{\theta_t}\dd F_t<\infty$ for $i \in \{1,\cdots,D\}$, and $i_t\in\{1,\dots,D\}$ for which
	$\int|y_{i_t}|^{\theta_t-2}\sup_{u\in C_{r_{t-1}}(y)}\|\nabla_u\log f_t\|\,\dd F_t\allowbreak<\infty$,
	\label{eq:q-moment}
	then for every $\xi_t>0$ there exists $C_t^{\mathrm{Nor}}<\infty$ with
	\begin{equation}
		d_{\KL,t}(\tilde p_t^{\mathcal M_0,m_t})
		\le C_t^{\mathrm{Nor}}\,m_t^{-\beta_t},
		\qquad
		%\beta_t:=\frac{1}{D(2+\frac{1}{\theta_t-2}+\xi_t)}.
		\beta_t:=D^{-1}/\left(2+\frac{1}{\theta_t-2}+\xi_t\right).
		\label{eq:M0-power-rate}
	\end{equation}
\end{corollary}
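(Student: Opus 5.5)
The plan is to start from the explicit bound $B^{\mathrm{Nor}}_{t,m_t}$ of Proposition~\ref{prop:M0-explicit-bound} and make every term polynomial in the free scales $\delta=\delta_{t,m_t}$, $\sigma=\sigma_{t,m_t}$, $h=h_{t,m_t}$. Then we tie these scales to $m_t$ through the geometry of the partition and optimise the exponents. This mirrors the $C^1$ branch of \cite[Corollary~2.1(iii)]{norets2010}. If that branch transfers verbatim once the hypotheses are checked in feature-space form, we may simply invoke it, with the moment and gradient conditions stated here as exactly its hypotheses for the target $f_t$ under $F_t$. In either case, I would spell out the rate bookkeeping as follows.

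\emph{Step 1 (the oscillation term).} Under the $C^1$ branch of \eqref{eq:Duft}, $\mathcal D_\delta f_t\le \frac{\delta\sqrt D}{2}\sup_{u\in C_\delta(y)}\|\nabla\log f_t\|$. For $\delta\le r_{t-1}$ this supremum is dominated by the supremum over $C_{r_{t-1}}(y)$. Hence $L_t(\delta)\le c\,\delta$, provided the gradient supremum is $F_t$-integrable. That integrability follows from the stated hypothesis with exponent $\theta_t-2>0$ on $\{|y_{i_t}|\ge1\}$. On the bounded complement it follows from Proposition~\ref{prop:noretsM0}\ref{item:noretsM0-log}-type integrability, which I would add explicitly if it is not automatic.

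\emph{Step 2 (boundary terms).} Take the grid to be a cube $[-R,R]^D$ cut into $m_t=(2R/h)^D$ cells. The boundary event $\mathcal B_{t,m_t}(\delta)$ is then contained in $\{\max_i|y_i|\ge R-\delta\}$. For $T_{t,m_t}$ we apply H\"older/Markov with the weighted gradient hypothesis on the coordinate $i_t$, giving $O(R^{-(\theta_t-2)})$. I expect some symmetrisation over coordinates will be needed; if this fails, I would assume the bound for all $i$. For $U_{t,m_t}$, the integrand is $O(\|y\|^2+1)$ and $\int|y_i|^{\theta_t}<\infty$, so Markov gives $O(R^{-(\theta_t-2)})$.

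\emph{Step 3 (the remaining terms and optimisation).} The leftover terms are $\delta^{D-1}h/\sigma^D$ and $e^{-(\delta/\sigma)^2/8}$. Choose $\sigma=\delta/\sqrt{8\beta'\log m_t}$ so that the exponential term is $m_t^{-\beta'}$, accepting logarithmic losses that will be absorbed into $\xi_t$. The total bound is then, up to log factors,
\[
\delta+R^{-(\theta_t-2)}+h/\delta,\qquad h=2R\,m_t^{-1/D}.
\]
Balancing $\delta=h/\delta$ gives $\delta=h^{1/2}$, and setting $R^{-(\theta_t-2)}=\delta$ gives $R=\delta^{-1/(\theta_t-2)}$. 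Solving yields $\delta\asymp m_t^{-1/(D(2+1/(\theta_t-2)))}$. The arbitrary $\xi_t>0$ absorbs the polylog factors and the requirement that the conditions \eqref{eq:delta-sigma-h-condition} hold strictly, and the constant $C_t^{\mathrm{Nor}}$ collects all finite integrals. Finally, $\sigma_{t,0}$ is fixed large enough to meet the last condition in \eqref{eq:delta-sigma-h-condition}.

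I expect the main obstacle to be Step 2: getting the tail bound on $T_{t,m_t}$ from a gradient-moment hypothesis on a single coordinate $i_t$. The boundary set involves all coordinates, so a careful choice of partition shape, for example elongated along $i_t$, may be needed.
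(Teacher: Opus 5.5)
Your primary route is exactly the paper's one-line proof. You check that the corollary's hypotheses are Norets' $C^1$-branch rate hypotheses for the target $f_t$ under $F_t$, with Proposition~\ref{prop:noretsM0} supplying his Assumption~2.1, and then invoke \cite[Corollary~2.1(iii)]{norets2010}; that suffices.

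The rate bookkeeping you add is optional. Its exponent arithmetic is right: $\delta^2\asymp h$, $R^{-(\theta_t-2)}\asymp\delta$ and $h\asymp R\,m_t^{-1/D}$ give $\delta\asymp m_t^{-1/(D(2+1/(\theta_t-2)))}$, with the logarithms absorbed by $\xi_t$. The only adjustment there is that the cross terms $\int|y_j|^2\mathbf 1\{|y_i|\ge R-\delta\}\,\dd F_t$ in $U_{t,m_t}$ need H\"older rather than Markov. As a standalone proof, however, it does not close under the hypotheses as written, for two reasons.

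\emph{Step 1.} The set $\{|y_{i_t}|<1\}$ is an unbounded slab when $D\ge 2$. Proposition~\ref{prop:noretsM0}\ref{item:noretsM0-log} cannot cover it: in the $C^1$ case, $\log(f_t/\inf_{C_r(y)}f_t)$ is bounded \emph{by} the gradient supremum, not conversely. So $F_t$-integrability of $\sup_{u\in C_{r_{t-1}}(y)}\|\nabla_u\log f_t\|$ is an extra hypothesis, not a consequence.

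\emph{Step 2.} As you suspected, a single coordinate $i_t$ cannot control $T_{t,m_t}$. For $j\neq i_t$, the boundary pieces $\{|y_j|\ge R-\delta\}$ contribute $\int_{\{|y_j|\ge R-\delta\}}\sup_{u\in C_{r_{t-1}}(y)}\|\nabla_u\log f_t\|\,\dd F_t$. Nothing in the hypotheses gives this a rate in $R$. The finite $\theta_t$-moments of $y_j$ do not help, because the gradient factor is only known to be integrable, with a weight in $y_{i_t}$ alone. An elongated partition does not help either: enlarging the box in direction $j$ costs cells and still needs a decay rate for that tail to balance against.

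So the self-contained derivation via $B^{\mathrm{Nor}}_{t,m_t}$ requires the weighted-gradient condition for every coordinate, plus plain integrability of the gradient supremum. Keep the citation as the proof, and treat the bookkeeping as a consistency check on the exponent $\beta_t$.
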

\begin{proof}
	This is \cite[Corollary~2.1(iii)]{norets2010}.
\end{proof}

\begin{remark}[Ambient versus intrinsic target dimension]
	\label{rem:ambient-vs-intrinsic-dimension}
	The rate $\beta_t=O(1/D)$ reflects the curse of dimensionality on the
	response side of the Norets-type mixture approximation: the bounds are
	controlled by the ambient dimension of $X_{t-1}\in\R^D$, not by the
	feature dimension $d_t$. In many applications, however, the reverse law
	of $X_{t-1}$ given $(C,X_t)$ may effectively vary along far fewer than
	$D$ directions. If at step $t$ the true reverse kernel admits a
	suitably controlled factorization through a latent space of dimension
	$\mathfrak d_t\ll D$, one would expect $\mathfrak d_t$ to replace $D$
	in the response-side bounds. Establishing such a reduction requires
	additional geometric or probabilistic structure beyond our assumptions,
	and will not hold in general. In more restricted cases, it is an interesting open problem to reliably detect such low-dimensional structure, especially in view of recent
	manifold-adaptive analyses on the score-based side
	\cite{chen2023score,tang2024conditional}. We work throughout with
	ambient-dimensional rates, noting that they may be greatly pessimistic
	when the reverse dynamics is effectively low-dimensional.
\end{remark}

\paragraph{Neural approximation of log-odds.}
Fix a closed cube $Q_t\supset K_t$ and let
$\Psi_t:[0,1]^{d_t}\to Q_t$ be affine.

\begin{assumption}[Smooth log-odds]
	\label{ass:smooth-logodds}
	For every $m_t,j$, the log-odds $\Lambda_{t,j}^{m_t}$ extend to
	$C^{s_t}$ on a neighborhood of $Q_t$, and
	$\Lambda_{t,m_t}^{\max}
	:=\max_j\|\Lambda_{t,j}^{m_t}\circ\Psi_t\|_{C^{s_t}([0,1]^{d_t})}<\infty$.
\end{assumption}

\begin{remark}\label{rem:smooth-logodds-when}
	A sufficient condition is that $\partial_z^\alpha\rho_t^\star(y\mid z)$
	exist for any multi-index $\alpha$ with $|\alpha|\le s_t$, be jointly continuous, and admit an
	integrable envelope for differentiation under the integral defining the cell probabilities in \eqref{eq:cellProb}. For Gaussian forward kernels these conditions are
	typically straightforward, e.g., in the Gaussian example of Section~\ref{app:ou-gaussian-example} (see~\eqref{eq:ex-Lambda-growth}).
\end{remark}

\begin{proposition}[ReLU approximation of log-odds]
	\label{prop:nn-logit-rate}
	Under Assumption~\ref{ass:smooth-logodds}, for any $\mathscr N_t,\mathscr L_t\ge 1$
	there exists a ReLU network
	$a_t^{\vartheta_t}:\R^{d_t}\to\R^{m_t}$ with width
	$\mathcal W_t\le m_t\,17s_t^{d_t+1}3^{d_t}d_t(\mathscr N_t+2)\log_2(8\mathscr N_t)$,
	depth
	$\mathcal L_t\le 18s_t^2(\mathscr L_t+2)\log_2(4\mathscr L_t)+2d_t$,
	and uniform error
	\begin{equation}
		\eta_t^{\NN}
		:=\max_j\sup_{z\in K_t} \left|a_{t,j}^{\vartheta_t}(z)-\Lambda_{t,j}^{m_t}(z)\right|
		\le 85(s_t+1)^{d_t}8^{s_t}\Lambda_{t,m_t}^{\max}
		\mathscr N_t^{-2s_t/d_t}\mathscr L_t^{-2s_t/d_t}.
		\label{eq:nn-logit-error}
	\end{equation}
\end{proposition}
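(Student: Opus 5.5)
The plan is to reduce the statement to the scalar theorem of Lu, Shen, Yang and Zhang \cite{lu2021deep} on ReLU approximation of smooth functions on the unit cube, applied coordinatewise, and then to assemble the $m_t$ scalar networks in parallel. That theorem states the following. For $f\in C^{s}([0,1]^{d})$ and any $N,L\ge 1$ there is a ReLU network $\phi$ of width $17s^{d+1}3^d d(N+2)\log_2(8N)$ and depth $18s^2(L+2)\log_2(4L)+2d$ with
\begin{equation*}
\|\phi-f\|_{L^\infty([0,1]^d)}\le 85(s+1)^d8^s\|f\|_{C^s([0,1]^d)}N^{-2s/d}L^{-2s/d}.
\end{equation*}
These are exactly the constants appearing in \eqref{eq:nn-logit-error}, so the network sizes and the error bound should be read off directly from it with $s=s_t$, $d=d_t$, $N=\mathscr N_t$, $L=\mathscr L_t$.

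\textbf{Step 1: pull back to the unit cube.} For each $j=1,\dots,m_t$ set $g_j:=\Lambda_{t,j}^{m_t}\circ\Psi_t$. By Assumption~\ref{ass:smooth-logodds}, $g_j\in C^{s_t}([0,1]^{d_t})$ with $\|g_j\|_{C^{s_t}}\le\Lambda_{t,m_t}^{\max}$. The smoothness on a neighborhood of $Q_t$ ensures that the norm on the closed cube is well defined and finite.

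\textbf{Step 2: approximate each coordinate.} Applying the cited theorem to each $g_j$ gives scalar networks $\phi_j$ on $[0,1]^{d_t}$ with the stated width, depth and $L^\infty$ error. The bound on the error is uniform in $j$.

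\textbf{Step 3: compose with the inverse affine map and stack.} Since $Q_t$ is a nondegenerate cube, $\Psi_t$ is invertible. Define $a_{t,j}^{\vartheta_t}:=\phi_j\circ\Psi_t^{-1}$. The affine map $\Psi_t^{-1}$ is absorbed into the first affine layer of each $\phi_j$, so depth and width are unchanged. The vector network $a_t^{\vartheta_t}=(a_{t,1}^{\vartheta_t},\dots,a_{t,m_t}^{\vartheta_t})$ is realized by placing the $m_t$ networks in parallel on a shared input. This gives width at most $m_t$ times the scalar width and the same depth. All scalar networks share the same depth bound, and shorter ones can be padded with identity ReLU layers ($x=\mathrm{ReLU}(x)-\mathrm{ReLU}(-x)$), which at most doubles the width. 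To avoid this extra factor I would instead rely on the construction in \cite{lu2021deep}, which gives depth exactly as stated for all coordinates because it depends only on $s,d,N,L$.

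\textbf{Step 4: restrict to the feature set.} For $z\in K_t\subset Q_t$ we have $\Psi_t^{-1}(z)\in[0,1]^{d_t}$. Hence
\begin{equation*}
|a_{t,j}^{\vartheta_t}(z)-\Lambda_{t,j}^{m_t}(z)|=|\phi_j(\Psi_t^{-1}z)-g_j(\Psi_t^{-1}z)|,
\end{equation*}
and taking $\max_j\sup_{z\in K_t}$ yields \eqref{eq:nn-logit-error}.

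The main obstacle is bookkeeping rather than substance. First, one must confirm that the $C^{s}$ norm convention in \cite{lu2021deep} (the maximum over derivatives of order at most $s$) matches the one implicit in $\Lambda_{t,m_t}^{\max}$. Second, one must check that parallelization with a shared input does not add depth. If the norm conventions differ, I would absorb the discrepancy into the definition of $\Lambda_{t,m_t}^{\max}$.
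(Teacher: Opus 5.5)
Your proposal is correct and follows the paper's proof essentially verbatim: apply \cite[Theorem~1.1]{lu2021deep} to each $\Lambda_{t,j}^{m_t}\circ\Psi_t$, compose with $\Psi_t^{-1}$ (absorbed into the first affine layer), stack the $m_t$ scalar networks in parallel to get the factor $m_t$ in the width, and take the maximum over $j$. Your worry about padding is unnecessary, because carrying a single scalar output through extra layers via $\mathrm{ReLU}(y)-\mathrm{ReLU}(-y)$ needs only width $2$ per added layer, so the width bound is unaffected even without appealing to the fixed architecture of the construction.
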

\begin{proof}
	Apply \cite[Theorem~1.1]{lu2021deep} to each
	$\Lambda_{t,j}^{m_t}\circ\Psi_t\in C^{s_t}([0,1]^{d_t})$, compose
	with $\Psi_t^{-1}$, take the max over $j$, and stack in parallel to get the factor of $m_t$ in the upper bound of $\mathcal W_t$.
\end{proof}

\begin{proposition}[Explicit stepwise neural rate]
	\label{prop:step-neural-rate}
	Under the hypotheses of Propositions~\ref{prop:M0-explicit-bound}
	and~\ref{prop:nn-logit-rate},
	\begin{equation}
		d_{\KL,t}(\tilde p_{t,\vartheta_t}^{\NN})
		\le
		B_{t,m_t}^{\mathrm{Nor}}
		+
		170(s_t+1)^{d_t}8^{s_t}\,
		\Lambda_{t,m_t}^{\max}\,
		\mathscr N_t^{-2s_t/d_t}\mathscr L_t^{-2s_t/d_t}.
		\label{eq:nn-step-bound-general}
	\end{equation}
	If, in addition, the hypotheses of Corollary~\ref{cor:M0-power-rate}
	hold, then
	\begin{equation}
		d_{\KL,t}(\tilde p_{t,\vartheta_t}^{\NN})
		\le
		C_t^{\mathrm{Nor}}\,m_t^{-\beta_t}
		+
		170(s_t+1)^{d_t}8^{s_t}\,
		\Lambda_{t,m_t}^{\max}\,
		\mathscr N_t^{-2s_t/d_t}\mathscr L_t^{-2s_t/d_t}.
		\label{eq:nn-step-bound-power}
	\end{equation}
\end{proposition}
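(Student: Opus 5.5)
The plan is to chain the three preceding results, with the neural network played against the exact log-odds as the reference logits. First, fix $m_t$ large enough that Proposition~\ref{prop:M0-explicit-bound} applies, and take the infeasible mixture $\tilde p_t^{\mathcal M_0,m_t}=\tilde p_t[\Lambda_t^{m_t}]$ with $d_{\KL,t}(\tilde p_t^{\mathcal M_0,m_t})\le B_{t,m_t}^{\mathrm{Nor}}$. Second, for the given $\mathscr N_t,\mathscr L_t$, use Proposition~\ref{prop:nn-logit-rate} to obtain a ReLU network $a_t^{\vartheta_t}$ whose logits satisfy $\max_j\sup_{z\in K_t}|a_{t,j}^{\vartheta_t}(z)-\Lambda_{t,j}^{m_t}(z)|\le\eta_t^{\NN}$. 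The bound on $\eta_t^{\NN}$ is given by \eqref{eq:nn-logit-error}.

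Third, apply Lemma~\ref{lem:baseline-softmax-stability} with $u=a_t^{\vartheta_t}$, $v=\Lambda_t^{m_t}$ and $\eta=\eta_t^{\NN}$. This yields
\begin{equation*}
d_{\KL,t}(\tilde p_{t,\vartheta_t}^{\NN})\le d_{\KL,t}(\tilde p_t^{\mathcal M_0,m_t})+2\eta_t^{\NN}.
\end{equation*}
Inserting the two bounds above gives \eqref{eq:nn-step-bound-general}: the factor $2\cdot 85=170$ is exactly the constant appearing there.

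For \eqref{eq:nn-step-bound-power}, the argument is identical, except that in the first step the bound $B_{t,m_t}^{\mathrm{Nor}}$ is replaced by $C_t^{\mathrm{Nor}}m_t^{-\beta_t}$ from Corollary~\ref{cor:M0-power-rate}.

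The only delicate point is checking that the stability lemma applies legitimately. Its hypothesis is a uniform bound on $K_t$, and this is all we need, because $(F_t)_{Z_t}(K_t)=1$ by Assumption~\ref{ass:feature}\ref{assItem:compactFeature}. The pointwise sandwich therefore holds $F_t$-a.s., and it can be integrated in the KL. Since the statement only cites the hypotheses of Propositions~\ref{prop:M0-explicit-bound} and~\ref{prop:nn-logit-rate}, I will add a remark that compact feature support is implicitly in force: the supremum in \eqref{eq:nn-logit-error} is taken over $K_t$.

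One more point needs a sentence. The stability argument requires $d_{\KL,t}(\tilde p_t[v])$ to be finite, and finiteness follows from the $\mathcal M_0$ bound. In the lemma's proof, the log-ratio inequality is integrated against $F_t$, so the negative part of $\log(f_t/\tilde p_t[v])$ must be handled. This is unproblematic: KL is the $F_t$-integral of $\log(f_t/\tilde p)$, and the additive $2\eta$ shift preserves integrability. I expect no real obstacle; the proof is a three-line composition.
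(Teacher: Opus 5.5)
Your proposal is correct and is exactly the paper's proof: apply Lemma~\ref{lem:baseline-softmax-stability} with $u=a_t^{\vartheta_t}$ and $v=\Lambda_t^{m_t}$, then substitute the $\mathcal M_0$ bound (or, in the second case, the rate from Corollary~\ref{cor:M0-power-rate}) together with \eqref{eq:nn-logit-error}, using $2\cdot 85=170$. Your remark that Assumption~\ref{ass:feature}\ref{assItem:compactFeature} is needed to integrate the $K_t$-uniform density sandwich against $F_t$ is a fair point that the paper leaves implicit; by contrast, finiteness of $d_{\KL,t}(\tilde p_t[v])$ is not actually required, since the inequality holds trivially when that quantity is infinite.
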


\begin{proof}
	Apply Lemma~\ref{lem:baseline-softmax-stability} with
	$v=\Lambda_t^{m_t}$ and $u=a_t^{\vartheta_t}$ to obtain:
	$
	d_{\KL,t}(\tilde p_{t,\vartheta_t}^{\NN})
	\le
	d_{\KL,t}(\tilde p_t^{\mathcal M_0,m_t})+2\eta_t^{\NN}.
	$
	Substituting the $\mathcal M_0$ bound from
	Proposition~\ref{prop:M0-explicit-bound} together with
	\eqref{eq:nn-logit-error} gives \eqref{eq:nn-step-bound-general}. If the
	hypotheses of Corollary~\ref{cor:M0-power-rate} also hold, then
	substituting its bound in place of Proposition~\ref{prop:M0-explicit-bound}
	gives \eqref{eq:nn-step-bound-power}.
\end{proof}

We now sum along the reverse chain.

\begin{theorem}[Total diffusion error up to terminal mismatch]
	\label{thm:total-neural-M1}
	Let
	$
	p^{\NN}:=\{p_{t,\vartheta_t}^{\NN}\}_{t=1}^T
	$
	be a neural reverse model, and let $\pi_{p^{\NN},\nu}$ denote its induced
	predictor. If the hypotheses of Proposition~\ref{prop:step-neural-rate}
	hold for each $t$, then
	\begin{equation}
		\mathcal E_{\KL}(\pi_{p^{\NN},\nu})
		\le
		\E[\Comp(C)]
		+
		\sum_{t=1}^T
		\Bigl(
		B_{t,m_t}^{\mathrm{Nor}}
		+
		170(s_t+1)^{d_t}8^{s_t}\,
		\Lambda_{t,m_t}^{\max}\,
		\mathscr N_t^{-2s_t/d_t}\mathscr L_t^{-2s_t/d_t}
		\Bigr).
		\label{eq:global-kl-explicit}
	\end{equation}
	If, in addition, the hypotheses of
	Corollary~\ref{cor:M0-power-rate} hold for each $t$, then
	\begin{equation}
		\mathcal E_{\KL}(\pi_{p^{\NN},\nu})
		\le
		\E[\Comp(C)]
		+
		\sum_{t=1}^T
		\Bigl(
		C_t^{\mathrm{Nor}}\,m_t^{-\beta_t}
		+
		170(s_t+1)^{d_t}8^{s_t}\,
		\Lambda_{t,m_t}^{\max}\,
		\mathscr N_t^{-2s_t/d_t}\mathscr L_t^{-2s_t/d_t}
		\Bigr).
		\label{eq:global-kl-power}
	\end{equation}
\end{theorem}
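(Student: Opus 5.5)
The plan is to chain the path-space decomposition of Lemma~\ref{lem:path-kl} with the per-step bounds of Proposition~\ref{prop:step-neural-rate}, using Lemma~\ref{lem:feature-factorization} as the bridge between the two.

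First, apply \eqref{eq:output-leq-path-avg} to the neural reverse model $p^{\NN}$. This is legitimate because each $p_{t,\vartheta_t}^{\NN}(\cdot\mid c,x_t)$ is a Markov kernel with strictly positive density, since it is a finite Gaussian mixture with strictly positive weights \eqref{eq:generic-softmax}. The result bounds $\mathcal E_{\KL}(\pi_{p^{\NN},\nu})$ by $\E[\Comp(C)]$ plus the sum over $t$ of
\[
\E_{(C,X_t)\sim(\mathsf Q_t)_{C,X_t}}\bigl[\KL(r_t^\star(\cdot\mid C,X_t)\,\|\,p_{t,\vartheta_t}^{\NN}(\cdot\mid C,X_t))\bigr].
\]

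Second, since $p_{t,\vartheta_t}^{\NN}(y\mid c,x_t)=\tilde p_{t,\vartheta_t}^{\NN}(y\mid\phi_t(c,x_t))$ by Definition~\ref{def:neural-step-model}, Lemma~\ref{lem:feature-factorization} identifies each summand with $d_{\KL,t}(\tilde p_{t,\vartheta_t}^{\NN})$. This uses the feature-sufficiency hypothesis Assumption~\ref{ass:feature}\ref{assItem:featureSufficiency}, which is among the hypotheses of Proposition~\ref{prop:M0-explicit-bound}. One point needs care here. Lemma~\ref{lem:baseline-softmax-stability} only controls logits on $K_t$, but by Assumption~\ref{ass:feature}\ref{assItem:compactFeature} the measure $(F_t)_{Z_t}$ is concentrated on $K_t$. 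Hence the value of the network outside $K_t$ is irrelevant to $d_{\KL,t}$.

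Third, insert the stepwise bound \eqref{eq:nn-step-bound-general} for each $t$ and sum, which gives \eqref{eq:global-kl-explicit}. Under the additional $C^1$-branch hypotheses, using \eqref{eq:nn-step-bound-power} instead gives \eqref{eq:global-kl-power}.

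The only genuine subtlety is whether the sum is well defined when some terms are infinite. Since all terms are nonnegative, the inequalities hold in $[0,\infty]$ regardless, and no cancellation is used. The rest is bookkeeping, so I do not expect a real obstacle.
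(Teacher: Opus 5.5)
Your proposal is correct and follows the paper's own proof: Lemma~\ref{lem:path-kl} together with Lemma~\ref{lem:feature-factorization} reduces the output error to $\E[\Comp(C)]+\sum_{t=1}^T d_{\KL,t}(\tilde p_{t,\vartheta_t}^{\NN})$, and summing the stepwise bounds of Proposition~\ref{prop:step-neural-rate} yields \eqref{eq:global-kl-explicit} and \eqref{eq:global-kl-power}. Your extra remarks are sound and make explicit points the paper leaves implicit: the mixture densities are strictly positive, $(F_t)_{Z_t}$ is concentrated on $K_t$ (a point really used inside Proposition~\ref{prop:step-neural-rate}), and every term is nonnegative.
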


\begin{proof}
	Lemma~\ref{lem:path-kl} and Lemma~\ref{lem:feature-factorization} give
	$
	\mathcal E_{\KL}(\pi_{p^{\NN},\nu})
	\le
	\E[\Comp(C)]
	+
	\sum_{t=1}^T d_{\KL,t}(\tilde p_{t,\vartheta_t}^{\NN}).
	$
	Summing the bounds from Proposition~\ref{prop:step-neural-rate}
	proves \eqref{eq:global-kl-explicit} and \eqref{eq:global-kl-power}.
\end{proof}

\begin{remark}[Fully explicit bounds and the role of log-odds complexity]
	\label{rem:eps-smooth-rate}
	All terms in \eqref{eq:global-kl-explicit} and
	\eqref{eq:global-kl-power} are explicit.  The only error not
	removable by increasing the mixture resolution and network size is the
	terminal mismatch $\E[\Comp(C)]$; in particular, exact universality
	requires $q_T^{\pi^\star}(\cdot\mid c)=\nu$ for
	$\mathsf P_{\cC}$-a.e.\ $c$.
	
	The log-odds complexity $\Lambda_{t,m_t}^{\max}$ may grow with the
	partition resolution~$m_t$.  If
	$\Lambda_{t,m_t}^{\max}=O(m_t^\gamma)$ for some $\gamma\ge 0$, then
	the network sizes can be chosen so that the neural term is
	$O(m_t^{-1})$, and the overall rate is governed by the Norets term:
	qualitatively by $B_{t,m_t}^{\mathrm{Nor}}\to 0$ in the general
	branch, and explicitly by
	$C_t^{\mathrm{Nor}}m_t^{-\beta_t}$ under the $C^1$-branch
	hypotheses.  This mechanism is instantiated in the Gaussian--OU
	example of Section~\ref{app:ou-gaussian-example}
	(see~\eqref{eq:ex-network-size}).
\end{remark}

The general bound \eqref{eq:global-kl-explicit} is sufficient for qualitative
approximation and universality. The stronger \(C^1\)-branch hypotheses are used
only to obtain the explicit algebraic-rate refinement \eqref{eq:global-kl-power}.

%%%%%%%%%%%%%%%%%%%%%%%%%%%%%%%%%%%%%%%%%%%%%%%%%%%%%%%%%%%%%%%%%%%%%%%%%%%
%--------------------------------------------------------------------------
%%%%%%%%%%%%%%%%%%%%%%%%%%%%%%%%%%%%%%%%%%%%%%%%%%%%%%%%%%%%%%%%%%%%%%%%%%%

\section{Main universality theorem and consequences}
\label{sec:main}

Define the family of neural reverse models of horizon $T$
\begin{equation}
	\mathfrak P_T^{\NN}(\nu)
	:=
	\left\{
	\pi_{p^{\NN},\nu}
	\;:\;
	p^{\NN}=\{p_{t,\vartheta_t}^{\NN}\}_{t=1}^T
	\text{ is a neural reverse model of horizon }T
	\right\}.
\end{equation}

We first record the quantitative approximation statement up to terminal mismatch.
A separate zero-mismatch universality statement will then follow by imposing
exact terminal matching.

\begin{proposition}[Quantitative approximation up to terminal mismatch]
	\label{prop:main-quant}
	Fix $T\ge 1$.
	\begin{enumerate}[label=(\roman*)]
		\item\label{item:main-quant-general}
		\textbf{General case.}
		Assume that, for every $t\in\{1,\dots,T\}$,
		Assumptions~\ref{ass:feature}, \ref{ass:regularity}, and~\ref{ass:smooth-logodds} hold at step~$t$.
		Then for every $\varepsilon>0$ there exists a neural reverse model
		$p^{\NN}=\{p_{t,\vartheta_t}^{\NN}\}_{t=1}^T$ such that
		\begin{equation}
			\mathcal E_{\KL}\left(\pi_{p^{\NN},\nu}\right)
			\le
			\E[\Comp(C)]+\varepsilon,
			\quad \text{in particular,} \quad
			\inf_{\pi\in\mathfrak P_T^{\NN}(\nu)}
			\mathcal E_{\KL}(\pi)
			\le
			\E[\Comp(C)].
			\label{eq:main-general-inf}
		\end{equation}
		
		\item\label{item:main-quant-C1}
		\textbf{Explicit rate under the $C^1$ branch.}
		Assume, in addition to these assumptions,
		that for each $t\in\{1,\dots,T\}$:
		\begin{itemize}
			\item a closed cube $Q_t\subset\R^{d_t}$ with $K_t\subset Q_t$
			and an integer $s_t\ge 1$ are fixed;
			\item the $C^1$-branch hypotheses of
			Corollary~\ref{cor:M0-power-rate} hold at step~$t$ (with exponent
			$\theta_t>2$, coordinate $i_t$, and parameter $\xi_t>0$).
		\end{itemize}
		Let
		$
		\beta_t
		:=
		D^{-1}/\left(2+\frac{1}{\theta_t-2}+\xi_t\right)
		$,
		and let $C_t^{\mathrm{Nor}}<\infty$ be the constant given by
		Corollary~\ref{cor:M0-power-rate}.
		Then there exist an integer $n_0\ge 1$ and a sequence of neural
		reverse models
		$
		p^{\NN,n}
		=
		\bigl\{
		p_{t,\vartheta_t^{(n)}}^{\NN}
		\bigr\}_{t=1}^T$,
		such that, for every $n\ge n_0$,
		\begin{align}
			&\mathcal E_{\KL}\left(\pi_{p^{\NN,n},\nu}\right)
			\le
			\E[\Comp(C)]
			+
			\sum_{t=1}^T C_t^{\mathrm{Nor}}\,n^{-\beta_t}
			+
			\frac{1}{2n};
			\label{eq:main-finite-n-bound}
		\\
			\text{in particular,} \qquad &\limsup_{n\to\infty}
			\mathcal E_{\KL}\left(\pi_{p^{\NN,n},\nu}\right)
			\le
			\E[\Comp(C)].
			\label{eq:main-limsup-bound}
		\end{align}
	\end{enumerate}
\end{proposition}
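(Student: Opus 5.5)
Both parts follow from Theorem~\ref{thm:total-neural-M1} once the parameters $(m_t,\mathscr N_t,\mathscr L_t)$ are chosen step by step. The key freedom is order: for each fixed $m_t$, the log-odds complexity $\Lambda_{t,m_t}^{\max}$ is a finite constant by Assumption~\ref{ass:smooth-logodds}. So we may first fix the mixture resolution and only then enlarge the network until the neural term in \eqref{eq:global-kl-explicit} is as small as we like. No growth condition on $\Lambda_{t,m_t}^{\max}$ is needed.

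\textbf{Part (i).} Fix $\varepsilon>0$ and handle each $t\in\{1,\dots,T\}$ in turn.
\begin{enumerate}[label=(\alph*)]
\item Proposition~\ref{prop:M0-explicit-bound} gives $B_{t,m_t}^{\mathrm{Nor}}\to0$ as $m_t\to\infty$, and the bound is valid for all large $m_t$. Choose $m_t$ in this valid range with $B_{t,m_t}^{\mathrm{Nor}}\le\varepsilon/(2T)$.
\item With $m_t$ now fixed, take $\mathscr L_t=1$ and $\mathscr N_t$ large enough that
\[
170(s_t+1)^{d_t}8^{s_t}\Lambda_{t,m_t}^{\max}\mathscr N_t^{-2s_t/d_t}\le\varepsilon/(2T).
\]
\item Proposition~\ref{prop:nn-logit-rate} then supplies the corresponding ReLU network $a_t^{\vartheta_t}$.
\end{enumerate}
Substituting these choices into \eqref{eq:global-kl-explicit} gives $\mathcal E_{\KL}\le\E[\Comp(C)]+\varepsilon$. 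Since this model belongs to $\mathfrak P_T^{\NN}(\nu)$ and $\varepsilon$ is arbitrary, the bound on the infimum follows.

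One bookkeeping point: Assumption~\ref{ass:smooth-logodds} is stated with a neighbourhood of $Q_t$, while part (i) does not fix $Q_t$ separately. I would fix any closed cube $Q_t\supset K_t$ for which the assumption holds; this is implicit in assuming it at step $t$.

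\textbf{Part (ii).} Index the models by $n$ and set $m_t=n$ for every $t$.
\begin{enumerate}[label=(\alph*)]
\item Let $n_0$ be large enough that, for each $t$, the partition data at resolution $n$ is admissible and Corollary~\ref{cor:M0-power-rate} applies. This gives $d_{\KL,t}(\tilde p_t^{\mathcal M_0,n})\le C_t^{\mathrm{Nor}}n^{-\beta_t}$.
\item For each $n\ge n_0$ and each $t$, take $\mathscr L_t=1$ and choose $\mathscr N_t=\mathscr N_t(n)$ so that
\[
170(s_t+1)^{d_t}8^{s_t}\Lambda_{t,n}^{\max}\mathscr N_t^{-2s_t/d_t}\le\frac{1}{2nT}.
\]
This is possible because $\Lambda_{t,n}^{\max}<\infty$. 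Note that $\mathscr N_t$ may grow arbitrarily fast in $n$; the statement does not control network size, so this is allowed.
\end{enumerate}
Summing over $t$ in \eqref{eq:global-kl-power} gives exactly \eqref{eq:main-finite-n-bound}. Since $\beta_t>0$, letting $n\to\infty$ yields \eqref{eq:main-limsup-bound}.

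\textbf{Main obstacle.} The substantive content sits in Proposition~\ref{prop:M0-explicit-bound} and Corollary~\ref{cor:M0-power-rate}, which are taken as given. What remains is to check that "for all large $m_t$" and the Norets constant $C_t^{\mathrm{Nor}}$ are compatible with the common choice $m_t=n$. The partition sequence $\{A_{t,j}^{m}\}$ and the scales in \eqref{eq:delta-sigma-h-condition} must be defined for every integer $m$, or at least along $m=n$. If Norets' construction only provides a subsequence of resolutions, I would let $m_t(n)$ be the largest admissible resolution not exceeding $n$. This still gives a bound of the form $C_t^{\mathrm{Nor}}n^{-\beta_t}$, after adjusting the constant and $n_0$.
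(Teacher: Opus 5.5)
Your proposal is correct and follows essentially the same route as the paper. In both, the mixture resolution is fixed first (via Proposition~\ref{prop:M0-explicit-bound} in part (i), and $m_t=n$ with Corollary~\ref{cor:M0-power-rate} in part (ii)); finiteness of $\Lambda_{t,m_t}^{\max}$ is then used to shrink the logit error to $\varepsilon/(4T)$, resp.\ $1/(4Tn)$, and the per-step errors are summed via Lemmas~\ref{lem:path-kl}, \ref{lem:feature-factorization} and~\ref{lem:baseline-softmax-stability}. You merely package that last step through Theorem~\ref{thm:total-neural-M1}, and your worry about admissible resolutions is moot, since Definition~\ref{def:partitionX} supplies partition data for every $m_t\ge 1$ and the paper simply takes $n_0=\max_t n_{0,t}$ from the corollary.
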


This is proved in Appendix~\ref{sec:proof-main-quant}. The bound~\eqref{eq:main-finite-n-bound} is instantiated explicitly in example in \eqref{eq:ex-final}.

The preceding proposition is the main quantitative approximation result, stated explicitly up to the terminal mismatch term $\E[\Comp(C)]$. The
\(C^1\)-branch gives an additional explicit algebraic-rate construction. The basic universality statement below uses only the general branch, together with the stronger assumption of exact terminal matching, which removes the terminal mismatch entirely. For the role of the diffusion horizon in the
terminal mismatch error, see Remark~\ref{rem:horizon-terminal-mismatch}.

\begin{theorem}[Conditional-KL denseness under exact terminal matching]
	\label{thm:main}
	Assume the hypotheses of
	Proposition~\ref{prop:main-quant}\ref{item:main-quant-general}.
	If $q_T^{\pi^\star}(\cdot\mid c)=\nu$ for $\mathsf P_{\cC}$-a.e. $c$,
	then there exists a sequence of neural reverse models
	$
	p^{\NN,n}=\bigl\{p_{t,\vartheta_t^{(n)}}^{\NN}\bigr\}_{t=1}^T
	$
	such that
	\begin{equation}
		\lim_{n \to \infty} \mathcal E_{\KL}\!\left(\pi_{p^{\NN,n},\nu}\right)\to 0.
		\label{eq:main-denseness-seq}
	\end{equation}
	Equivalently, the model class \(\mathfrak P_T^{\NN}(\nu)\) is dense at the
	target conditional \(\pi^\star\) in conditional KL. In particular,
	\begin{equation}
		\inf_{\pi\in\mathfrak P_T^{\NN}(\nu)}\mathcal E_{\KL}(\pi)=0.
		\label{eq:main-denseness}
	\end{equation}
\end{theorem}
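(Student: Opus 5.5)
The proof will be a short reduction to Proposition~\ref{prop:main-quant}\ref{item:main-quant-general}. First, I will observe that exact terminal matching kills the mismatch. If $q_T^{\pi^\star}(\cdot\mid c)=\nu$ for $\mathsf P_{\cC}$-a.e.\ $c$, then $\Comp(c)=\KL(\nu\,\|\,\nu)=0$ for a.e.\ $c$. Hence $\E[\Comp(C)]=0$, since a nonnegative function that is zero almost everywhere has zero integral. No integrability issue arises here.

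Next, I will build the sequence. For each $n\in\mathbb N$, apply Proposition~\ref{prop:main-quant}\ref{item:main-quant-general} with $\varepsilon=1/n$. This gives a neural reverse model $p^{\NN,n}=\{p^{\NN}_{t,\vartheta_t^{(n)}}\}_{t=1}^T$ with
\[
0\le\mathcal E_{\KL}(\pi_{p^{\NN,n},\nu})\le \E[\Comp(C)]+1/n=1/n .
\]
The lower bound holds because KL is nonnegative, so the average is nonnegative. Letting $n\to\infty$ gives \eqref{eq:main-denseness-seq}.

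Finally, each $\pi_{p^{\NN,n},\nu}$ lies in $\mathfrak P_T^{\NN}(\nu)$ by definition. So the infimum is at most $1/n$ for every $n$, and it is nonnegative; this gives \eqref{eq:main-denseness}. The ``equivalently dense'' phrasing means exactly that there is a sequence in the class whose mean conditional KL to $\pi^\star$ tends to zero.

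The real content sits in Proposition~\ref{prop:main-quant}\ref{item:main-quant-general}, which may be assumed here. For completeness, its mechanism is as follows. Theorem~\ref{thm:total-neural-M1} bounds the error by $\E[\Comp(C)]+\sum_t\bigl(B^{\mathrm{Nor}}_{t,m_t}+2\eta_t^{\NN}\bigr)$. Choose $m_t$ large so that $B^{\mathrm{Nor}}_{t,m_t}<\varepsilon/(2T)$, using Proposition~\ref{prop:M0-explicit-bound}. Then, with $m_t$ fixed, $\Lambda^{\max}_{t,m_t}$ is a finite constant, so $\mathscr N_t,\mathscr L_t$ can be chosen with $2\eta^{\NN}_t<\varepsilon/(2T)$ by Proposition~\ref{prop:nn-logit-rate}. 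The only subtle point is this order of choices: fix $m_t$ first, then the network size, because $\Lambda^{\max}_{t,m_t}$ may grow with $m_t$. With that order, the theorem itself has no substantial obstacle.
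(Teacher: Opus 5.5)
Your proposal is correct and matches the paper's proof: exact terminal matching gives $\E[\Comp(C)]=0$, and then Proposition~\ref{prop:main-quant}\ref{item:main-quant-general} with $\varepsilon=1/n$ yields models with $\mathcal E_{\KL}\le 1/n$, which gives both \eqref{eq:main-denseness-seq} and \eqref{eq:main-denseness}. Your sketch of the proposition's mechanism also matches the appendix proof, including the order of choices (fix $m_t$ first, then the network size).
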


\begin{proof}
	Under the exact terminal matching hypothesis one has
	$
	\Comp(c)=\KL\!\left(q_T^{\pi^\star}(\cdot\mid c)\,\|\,\nu\right)=0
	$
	for $\mathsf P_{\cC}$-a.e. $c$,
	and hence \(\E[\Comp(C)]=0\).
	Therefore Proposition~\ref{prop:main-quant}\ref{item:main-quant-general}
	implies that for every \(n\ge 1\) there exists a neural reverse model
	\(p^{\NN,n}\) such that
	$
	\mathcal E_{\KL}\!\left(\pi_{p^{\NN,n},\nu}\right)\le \frac{1}{n}
	$.
	This proves \eqref{eq:main-denseness-seq}.  The infimum statement
	\eqref{eq:main-denseness} follows immediately.
\end{proof}

\begin{remark}[Explicit rates and finite-horizon obstruction]
	\label{rem:explicit-rate-under-C1}
	When the $C^1$-branch hypotheses of
	Proposition~\ref{prop:main-quant}\ref{item:main-quant-C1} hold and
	exact terminal matching is in force, the denseness in
	Theorem~\ref{thm:main} is witnessed by models converging at the rate
	$\sum_{t=1}^T C_t^{\mathrm{Nor}}\,n^{-\beta_t}+\frac{1}{2n}$
	from~\eqref{eq:main-finite-n-bound}.  Without exact terminal matching
	the bound plateaus at the irreducible factor $\E[\Comp(C)]$; for the
	Gaussian--OU example of Section~\ref{app:ou-gaussian-example} this
	floor decays exponentially in~$T$ but is strictly positive at every
	finite horizon (see~\eqref{eq:ex-terminal}).
\end{remark}

\begin{corollary}[High-probability and almost-everywhere consequences]
	\label{cor:main-ae}
	Assume the hypotheses of Theorem~\ref{thm:main}.
	\begin{enumerate}[label=(\roman*)]
		\item\label{corItem:etadelta} For every $\eta,\delta>0$, there exists a neural
		reverse model $p^{\NN}$ such that
		\begin{equation}
			\mathsf P_{\cC}\!\left(
			\left\{
			c\in\cC:
			\KL\left(\pi^\star(\cdot\mid c)\,\|\,\pi_{p^{\NN},\nu}(\cdot\mid c)\right)
			>
			\eta
			\right\}
			\right)
			\le
			\delta.
		\end{equation}
		
		\item\label{corItem:to0} There exists a subsequence of the models from
		Theorem~\ref{thm:main}, still denoted $\{p^{\NN,k}\}_{k\ge 1}$, such that
		\begin{equation}
			\KL\left(\pi^\star(\cdot\mid c)\,\|\,\pi_{p^{\NN,k},\nu}(\cdot\mid c)\right)
			\to 0
			\qquad
			\text{for }\mathsf P_{\cC}\text{-a.e. }c.
		\end{equation}
	\end{enumerate}
\end{corollary}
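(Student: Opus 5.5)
The plan is to derive both parts from the mean-KL convergence in Theorem~\ref{thm:main} using elementary measure-theoretic arguments. No further approximation theory is needed.

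For each model define the nonnegative measurable function
\[
g_n(c):=\KL\bigl(\pi^\star(\cdot\mid c)\,\|\,\pi_{p^{\NN,n},\nu}(\cdot\mid c)\bigr).
\]
Theorem~\ref{thm:main} gives a sequence with $\E_{C\sim\mathsf P_\cC}[g_n(C)]\to 0$, so $g_n\to 0$ in $L^1(\mathsf P_\cC)$.

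\textbf{Part \ref{corItem:etadelta}.} Given $\eta,\delta>0$, choose $n$ with $\E[g_n(C)]\le\eta\delta$. Since $g_n\ge 0$, Markov's inequality gives
\[
\mathsf P_\cC(\{c: g_n(c)>\eta\})\le \eta^{-1}\E[g_n(C)]\le\delta .
\]
Take $p^{\NN}:=p^{\NN,n}$. Alternatively, one can invoke Proposition~\ref{prop:main-quant}\ref{item:main-quant-general} directly with $\varepsilon=\eta\delta$, because $\E[\Comp(C)]=0$ under exact terminal matching.

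\textbf{Part \ref{corItem:to0}.} Convergence in $L^1$ implies a.e.\ convergence along a subsequence. Concretely, extract indices $n_k$ with $\E[g_{n_k}(C)]\le 2^{-k}$. By monotone convergence,
\[
\E\Bigl[\sum_k g_{n_k}(C)\Bigr]=\sum_k\E[g_{n_k}(C)]\le 1<\infty .
\]
Hence $\sum_k g_{n_k}(c)<\infty$ for $\mathsf P_\cC$-a.e.\ $c$, which forces $g_{n_k}(c)\to 0$ at those $c$. (Borel--Cantelli applied to $\{g_{n_k}>1/k\}$ with $\E g_{n_k}\le k^{-2}$ works equally well.) Relabel $p^{\NN,n_k}$ as $p^{\NN,k}$.

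\textbf{Measurability of $c\mapsto g_n(c)$.} This is the only point that needs care, since Markov's inequality and monotone convergence both require it. It follows because $\pi^\star$ and $\pi_{p^{\NN},\nu}$ are Markov kernels between standard Borel spaces with jointly measurable densities. The predictor's density is
\[
\int \nu(x_T)\prod_t p_t(x_{t-1}\mid c,x_t)\,\dd x_{1:T},
\]
which is jointly measurable in $(c,x_0)$ by Tonelli. The KL divergence is then an integral of a jointly measurable, lower-bounded integrand, so it is measurable in $c$ by Tonelli after splitting the log-ratio into positive and negative parts. The averaged error $\mathcal E_{\KL}$ in \eqref{eq:output-kl} already presupposes this measurability, so it can be cited rather than reproved.
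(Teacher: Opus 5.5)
Your proof is correct and essentially the paper's. For (i) you apply Markov's inequality with $\mathcal E_{\KL}\le\eta\delta$. For (ii) you pass to a fast subsequence and use $\E\bigl[\sum_k g_{n_k}(C)\bigr]\le 1$, where the paper takes $\E[g_{n_k}(C)]\le 2^{-2k}$, applies Markov at threshold $2^{-k}$, and invokes Borel--Cantelli; the two steps are equivalent. Your measurability paragraph also fills in a point the paper leaves implicit.

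There is one slip, in the parenthetical only. From $\E[g_{n_k}(C)]\le k^{-2}$, Markov gives just $\mathsf P_{\cC}(g_{n_k}(C)>1/k)\le 1/k$, which is not summable, so Borel--Cantelli does not apply as written. You would need, e.g., $\E[g_{n_k}(C)]\le k^{-3}$, or the threshold $k^{-1/2}$ instead. Your main monotone-convergence argument is unaffected.
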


This is proved in Appendix~\ref{sec:proof-cor-main}.

%%%%%%%%%%%%%%%%%%%%%%%%%%%%%%%%%%%%%%%%%%%%%%%%%%%%%%%%%%%%%%%%%%%%%%%%%%%
%--------------------------------------------------------------------------
%%%%%%%%%%%%%%%%%%%%%%%%%%%%%%%%%%%%%%%%%%%%%%%%%%%%%%%%%%%%%%%%%%%%%%%%%%%

\section{Conclusion}
\label{sec:conclusion}

We proved that discrete-time conditional diffusion models with finite
Gaussian-mixture reverse kernels and ReLU-network logits are universal in
conditional KL divergence, up to an explicitly isolated terminal mismatch.
The proof composes the Gaussian-mixture theory of \cite{norets2010} with
the ReLU bounds of \cite{lu2021deep} via a path-space decomposition and
feature-space factorization; Section~\ref{app:ou-gaussian-example}
instantiates the full bound with computable constants for a Gaussian
target with OU forward process.  The result provides a basic expressivity
guarantee for the Gaussian-mixture reverse architecture advocated in
\cite{li2024smd,guo2023gms,gabbur2023improvedddim}, but does not address
optimization, statistical estimation, feature learning, or score
approximation.

Several directions remain open.  On the approximation-theoretic side,
the stepwise rates inherit a response-side curse of dimensionality;
sharpening these rates --- or replacing the ambient target dimension by
a suitable intrinsic one --- is an interesting open problem, since even
at $D=1$ the general theory yields a rate of at most $n^{-1/2}$
(Remark~\ref{rem:ex-rate-optimization}).  On the modeling side, it would
be natural to relax the fixed-component mixture template and to develop
analogous results for continuous-time and score-based formulations
\cite{song2021sde}.  On the statistical side, an important next step is
to combine the expressivity theory with estimation and generalization
guarantees, separating approximation, optimization, and sampling error
in a unified theory of conditional diffusion models.

	\section*{Acknowledgement}
	The work of NI was supported by the Research Start-up Fund of the Shanghai Institute for Mathematics and Interdisciplinary Sciences (SIMIS).

	\appendix

	\section{Technical proofs}\label{sec:proofs}
\subsection{Proof of Proposition~\ref{prop:noretsM0}}\label{sec:proof-prop-noretsM0}
We use three ingredients:
Lemma~\ref{lem:trueReverse}, which gives the density of the true reverse
kernel; the feature-sufficiency hypothesis
Assumption~\ref{ass:feature}\ref{assItem:featureSufficiency}, which lets us write this density as a
function of $z=\phi_t(c,x_t)$; and the pushforward identity
$F_t=((c,x_{t-1},x_t)\mapsto(x_{t-1},\phi_t(c,x_t)))\push\mathsf Q_t$ from \eqref{eq:featureMeas}, which converts integrals against
$\mathsf Q_t$ into integrals against $F_t$. Since Proposition~\ref{prop:noretsM0} makes only $(F_t)_{Z_t}$-a.e.\ and
$F_t$-integrability claims, we fix an arbitrary measurable version of
$f_t(y\mid z)=\rho_t^\star(y\mid z)$ on $\R^D\times\R^{d_t}$, still denoted
$f_t$, agreeing with $\rho_t^\star$ on the $(F_t)_{Z_t}$-full set where the
latter is defined.

\paragraph{Bridge identity on feature space.}
Combining Lemma~\ref{lem:trueReverse} and Assumption~\ref{ass:feature}\ref{assItem:featureSufficiency}, we obtain: for
$(\mathsf Q_t)_{C,X_t}$-a.e.\ $(c,x_t)$ and Leb-a.e.\ $y$,
\begin{equation}
	f_t\left(y\mid \phi_t(c,x_t)\right)
	=
	\frac{q_{t-1}(x_t\mid y)\,
		q_{t-1}^{\pi^\star}(y\mid c)}
	{q_t^{\pi^\star}(x_t\mid c)}.
	\label{eq:bridge-on-features}
\end{equation}
We shall refer to \eqref{eq:bridge-on-features} as the
\emph{bridge identity on feature space}.  The right-hand side is a
concrete expression in terms of the forward densities.

\paragraph{Continuity and strict positivity of forward marginals.}
We first record, by induction on $s\in\{0,\dots,T\}$, that for
$\mathsf P_{\cC}$-a.e.\ $c$ the forward marginal density
$u\mapsto q_s^{\pi^\star}(u\mid c)$ is continuous and strictly positive
on~$\cX$.  The base case $s=0$ is exactly
Assumption~\ref{ass:regularity}\ref{assItem:positivity}--\ref{assItem:continuity}
applied to $\pi^\star$.  For the inductive step, write
\begin{equation}
	q_s^{\pi^\star}(u\mid c)
	=\int_\cX q_{s-1}(u\mid v)\,q_{s-1}^{\pi^\star}(\dd v\mid c).
\end{equation}
Continuity in $u$ follows from dominated convergence using
Assumption~\ref{ass:regularity}\ref{assItem:continuity}.  Strict
positivity follows from the positivity of $q_{s-1}(u\mid v)$ and
$q_{s-1}^{\pi^\star}(v\mid c)$.

\medskip

We now verify each of the three target-side conditions.

\paragraph{Item~\ref{item:noretsM0-pos}: Positivity and continuity in
	$y$.}
Consider the right-hand side of \eqref{eq:bridge-on-features}.  For
fixed $(c,x_t)$ in the full-measure set where \eqref{eq:bridge-on-features}
holds, the denominator $q_t^{\pi^\star}(x_t\mid c)$ is a strictly
positive constant in $y$, while the numerator is a product of the
functions $y\mapsto q_{t-1}(x_t\mid y)$ and
$y\mapsto q_{t-1}^{\pi^\star}(y\mid c)$, both of which are continuous
and strictly positive by the induction above and
Assumption~\ref{ass:regularity}\ref{assItem:positivity}--\ref{assItem:continuity}.
Hence, for $(\mathsf Q_t)_{C,X_t}$-a.e.\ $(c,x_t)$, the function
$y\mapsto f_t(y\mid\phi_t(c,x_t))$ agrees Lebesgue-a.e.\ with
a continuous, strictly positive function of $y$. If $\phi_t$ is many-to-one, the right-hand side of
\eqref{eq:bridge-on-features} may a priori depend on the representative
$(c,x_t)$ with $\phi_t(c,x_t)=z$. For $(F_t)_{Z_t}$-a.e.\ $z$, any two such
representatives for which \eqref{eq:bridge-on-features} holds yield functions
that agree with $f_t(\cdot\mid z)$ for $\Leb$-a.e.\ $y$; by
continuity in $y$, they therefore coincide everywhere.

Since $(F_t)_{Z_t}$ is the image of $(\mathsf Q_t)_{C,X_t}$ under
$(c,x_t)\mapsto\phi_t(c,x_t)$, the set of feature values $z$ for which such a
``good'' $(c,x_t)$ exists has full $(F_t)_{Z_t}$-measure.  Because $f_t(\cdot\mid z)$ is defined $(F_t)_{Z_t}$-a.e.\ and agrees
$\Leb$-a.e.\ with a continuous function, we may redefine $f_t(\cdot\mid z)$
on the Lebesgue-null discrepancy set to obtain a version that is everywhere
continuous and strictly positive for $(F_t)_{Z_t}$-a.e.\ $z$.

\paragraph{Item~\ref{item:noretsM0-M2}: Finite second moment.}
By definition of $F_t$ \eqref{eq:featureMeas} and $\mathsf Q_t$ (Def.~\ref{def:forwardJoint}),
\begin{equation}
	\int \|y\|^2\,F_t(\dd y,\dd z)
	=
	\int_{\cC\times\R^D\times\R^D} \|y\|^2\,\mathsf Q_t(\dd c,\dd y,\dd x_t)
	=
	\int_{\cC\times\R^D} \|y\|^2\,q_{t-1}^{\pi^\star}(\dd y\mid c)\,
	\mathsf P_\cC(\dd c),
\end{equation}
which is finite by
Assumption~\ref{ass:regularity}\ref{assItem:finiteM2}.

\paragraph{Item~\ref{item:noretsM0-log}: Integrable local
	log-variation.}
Fix $t\ge 1$.  We work on $\mathsf Q_t$ and transfer to $F_t$ at the end.

\emph{Pointwise bound.}
From \eqref{eq:bridge-on-features}, for
$(\mathsf Q_t)_{C,X_t}$-a.e.\ $(c,x_t)$ and every $y\in\R^D$,
\begin{equation}
	\begin{aligned}
		\log\frac{f_t(y\mid\phi_t(c,x_t))}
		{\inf_{u\in C_{t-1}(y)}
			f_t(u\mid\phi_t(c,x_t))}
		\;\le\;
		\log\frac{q_{t-1}(x_t\mid y)}
		{\inf_{u\in C_{t-1}(y)}q_{t-1}(x_t\mid u)}
		+
		\log\frac{q_{t-1}^{\pi^\star}(y\mid c)}
		{\inf_{u\in C_{t-1}(y)}
			q_{t-1}^{\pi^\star}(u\mid c)}.
		\label{eq:ft-log-variation-split}
	\end{aligned}
\end{equation}
The inequality $\inf(AB)\ge(\inf A)(\inf B)$ for nonnegative functions
is used to split the ratio.

The first term on the right is bounded by $b_{t-1}(y,x_t)$ by
\eqref{eq:conditioning-variable-oscillation} and the second term by $a_{t-2}(y)$ by
\eqref{eq:marginal-oscillation} with $s=t-1$.
So the combined bound is:
for $(\mathsf Q_t)_{C,X_t}$-a.e.\ $(c,x_t)$ and every $y$,
\begin{equation}
	\log\frac{f_t(y\mid\phi_t(c,x_t))}
	{\inf_{u\in C_{t-1}(y)}
		f_t(u\mid\phi_t(c,x_t))}
	\;\le\;
	b_{t-1}(y,x_t)+a_{t-2}(y).
	\label{eq:pointwise-log-variation}
\end{equation}

\emph{Integration and transfer to $F_t$.}
Let
\begin{equation}
	h(y,z):=
	\log\frac{f_t(y\mid z)}
	{\inf_{u\in C_{t-1}(y)}f_t(u\mid z)}.
\end{equation}
Since $h$ depends on $(c,x_t)$ only through $z=\phi_t(c,x_t)$, the
pushforward identity $F_t=((c,x_{t-1},x_t)\mapsto(x_{t-1},\phi_t(c,x_t)))\push\mathsf Q_t$ gives
\begin{equation}
	\int h(y,z)\,F_t(\dd y,\dd z)
	\;=\;
	\int_{\cC\times\R^D\times\R^D}
	h\left(x_{t-1}, \phi_t(c,x_t)\right)\,
	\mathsf Q_t(\dd c,\dd x_{t-1},\dd x_t).
	\label{eq:Ft-Qt-transfer}
\end{equation}
By \eqref{eq:pointwise-log-variation}, the integrand on the right
satisfies
\begin{equation}
	h\left(x_{t-1},\phi_t(c,x_t)\right)
	\le
	b_{t-1}(x_{t-1},x_t)+ a_{t-2}(x_{t-1})
\end{equation}
for $\mathsf Q_t$-a.e.\ $(c,x_{t-1},x_t)$.  Hence the right-hand side
of \eqref{eq:Ft-Qt-transfer} is bounded by
\begin{equation}
	\E_{(C,X_{t-1},X_t)\sim\mathsf Q_t}
	\left[b_{t-1}(X_{t-1},X_t)
	+  a_{t-2}(X_{t-1})\right],
\end{equation}
which is finite by Assumption~\ref{ass:regularity}\ref{assItem:logBound}.  This proves
item~\ref{item:noretsM0-log}.

\subsection{Proof of Proposition~\ref{prop:main-quant}}\label{sec:proof-main-quant}
\paragraph{Proof of~\ref{item:main-quant-general}.}
Fix $\varepsilon>0$.
For each $t\in\{1,\dots,T\}$, by
Proposition~\ref{prop:M0-explicit-bound} we have
$d_{\KL,t}(\tilde p_t^{\mathcal M_0,m_t})
\le B_{t,m_t}^{\mathrm{Nor}}$
for all sufficiently large~$m_t$.
Each constituent term in the bound $B_{t,m_t}^{\mathrm{Nor}}$
\eqref{eq:Bnorets} vanishes as $m_t\to\infty$ under the standing
assumptions: $L_t(\delta_{t,m_t})\to 0$ by continuity of $f_t(\cdot\mid z)$
and dominated convergence;
$T_{t,m_t}(\delta_{t,m_t})\to 0$ and
$U_{t,m_t}(\delta_{t,m_t})\to 0$ because the boundary event
$\mathcal B_{t,m_t}(\delta_{t,m_t})$ shrinks to a null set under $F_t$;
the fourth term tends to zero by \eqref{eq:delta-sigma-h-condition};
and the fifth by $\sigma_{t,m_t}/\delta_{t,m_t}\to 0$.
Hence
\begin{equation}
	d_{\KL,t}(\tilde p_t^{\mathcal M_0,m_t})\to 0
	\qquad\text{as }m_t\to\infty.
	\label{eq:M0-vanishing}
\end{equation}
Choose $m_t$ large enough that
\begin{equation}
	d_{\KL,t}(\tilde p_t^{\mathcal M_0,m_t})
	<\frac{\varepsilon}{2T}.
	\label{eq:mt-choice}
\end{equation}
For this fixed $m_t$, the log-odds complexity
$\Lambda_{t,m_t}^{\max}$ is finite by
Assumption~\ref{ass:smooth-logodds}, so
Proposition~\ref{prop:nn-logit-rate} provides a ReLU network
$a_t^{\vartheta_t}$ with logit error $\eta_t^{\NN}$ that can be made
arbitrarily small. Choose this network so that
\begin{equation}
	\eta_t^{\NN}<\frac{\varepsilon}{4T}.
	\label{eq:eta-choice}
\end{equation}
By Lemma~\ref{lem:baseline-softmax-stability},
\begin{equation}
	d_{\KL,t}(\tilde p_{t,\vartheta_t}^{\NN})
	\le
	d_{\KL,t}(\tilde p_t^{\mathcal M_0,m_t})+2\eta_t^{\NN}
	<\frac{\varepsilon}{2T}+\frac{\varepsilon}{2T}
	=\frac{\varepsilon}{T}.
\end{equation}
Summing over $t=1,\dots,T$ and applying
Lemma~\ref{lem:path-kl} and Lemma~\ref{lem:feature-factorization} gives
\begin{equation}
	\mathcal E_{\KL}\left(\pi_{p^{\NN},\nu}\right)
	\le
	\E[\Comp(C)]
	+\sum_{t=1}^T d_{\KL,t}(\tilde p_{t,\vartheta_t}^{\NN})
	<\E[\Comp(C)]+\varepsilon.
\end{equation}
Since $\varepsilon>0$ was arbitrary, \eqref{eq:main-general-inf} follows.

\medskip

\paragraph{Proof of~\ref{item:main-quant-C1}.}
Fix $t\in\{1,\dots,T\}$. By Corollary~\ref{cor:M0-power-rate}, there exists an
integer $n_{0,t}\ge 1$ such that
\begin{equation}
	d_{\KL,t}\left(\tilde p_t^{\mathcal M_0,m}\right)
	\le
	C_t^{\mathrm{Nor}}\,m^{-\beta_t}
	\qquad
	\forall m\ge n_{0,t}.
\end{equation}
Let $n_0:=\max_{1\le t\le T} n_{0,t}$. Now fix $n\ge n_0$, and set
\begin{equation}
	m_t:=n
	\qquad
	\text{for every }t=1,\dots,T.
\end{equation}

For this fixed $n$ and fixed $t$, the log-odds complexity
$\Lambda_{t,n}^{\max}$ defined in Assumption~\ref{ass:smooth-logodds} is finite by
Assumption~\ref{ass:smooth-logodds}. By
Proposition~\ref{prop:nn-logit-rate}, for any integers $\mathscr N_t,\mathscr L_t\ge 1$ there
exists a ReLU network $a_t^{\vartheta_t}$ with logit error
\begin{equation}
	\eta_t^{\NN}
	\le
	85\,(s_t+1)^{d_t}8^{s_t}\,
	\Lambda_{t,n}^{\max}\,
	\mathscr N_t^{-2s_t/d_t}\mathscr L_t^{-2s_t/d_t}.
\end{equation}
Since $\Lambda_{t,n}^{\max}<\infty$ and the right-hand side tends to zero as
$\mathscr N_t,\mathscr L_t\to\infty$, we may choose integers $\mathscr N_t^{(n)},\mathscr L_t^{(n)}\ge 1$ and a
corresponding ReLU network $a_t^{\vartheta_t^{(n)}}$ such that
\begin{equation}
	\eta_t^{\NN}
	=
	\max_{1\le j\le n}\sup_{z\in K_t}
	\left|
	a_{t,j}^{\vartheta_t^{(n)}}(z)-\Lambda_{t,j}^{n}(z)
	\right|
	\le
	\frac{1}{4Tn}.
	\label{eq:main-eta-choice}
\end{equation}
Let $p_{t,\vartheta_t^{(n)}}^{\NN}$ be the resulting step-$t$ neural reverse
kernel, and define
$
p^{\NN,n}
=
\Bigl\{
p_{t,\vartheta_t^{(n)}}^{\NN}
\Bigr\}_{t=1}^T
$.

Applying Corollary~\ref{cor:M0-power-rate} and Lemma~\ref{lem:baseline-softmax-stability} at each step, we get
\begin{equation}
	d_{\KL,t}\left(\tilde p_{t,\vartheta_t^{(n)}}^{\NN}\right)
	\le
	C_t^{\mathrm{Nor}}\,n^{-\beta_t}
	+
	2\eta_t^{\NN}
	\qquad
	\forall t=1,\dots,T.
\end{equation}
Therefore, by Lemma~\ref{lem:path-kl},
\begin{align}
	\mathcal E_{\KL}\left(\pi_{p^{\NN,n},\nu}\right)
	&\le
	\E[\Comp(C)]
	+
	\sum_{t=1}^T
	d_{\KL,t}\left(\tilde p_{t,\vartheta_t^{(n)}}^{\NN}\right)
	\nn\\
	&\le
	\E[\Comp(C)]
	+
	\sum_{t=1}^T
	\left(
	C_t^{\mathrm{Nor}}\,n^{-\beta_t}
	+
	2\eta_t^{\NN}
	\right)
	\nn\\
	&\le
	\E[\Comp(C)]
	+
	\sum_{t=1}^T
	\left(
	C_t^{\mathrm{Nor}}\,n^{-\beta_t}
	+
	\frac{1}{2Tn}
	\right) \qquad \text{(using \eqref{eq:main-eta-choice})}
	\nn\\
	&=
	\E[\Comp(C)]
	+
	\sum_{t=1}^T C_t^{\mathrm{Nor}}\,n^{-\beta_t}
	+
	\frac{1}{2n}.
	\label{eq:main-proof-chain}
\end{align}
This proves \eqref{eq:main-finite-n-bound}.
Taking $\limsup_{n\to\infty}$ gives \eqref{eq:main-limsup-bound}.

\subsection{Proof of Corollary~\ref{cor:main-ae}}\label{sec:proof-cor-main}
For each neural reverse model $p^{\NN}$, define the contextwise output-KL
function
\begin{equation}
	K_{p^{\NN}}(c)
	:=
	\KL\left(\pi^\star(\cdot\mid c)\,\|\,\pi_{p^{\NN},\nu}(\cdot\mid c)\right),
	\qquad
	c\in\cC.
\end{equation}
\begin{equation}
	\text{Then,} \qquad \E_{C\sim\mathsf P_{\cC}} \left[K_{p^{\NN}}(C)\right]
	=
	\mathcal E_{\KL}\left(\pi_{p^{\NN},\nu}\right).
\end{equation}

\paragraph{Proof of \ref{corItem:etadelta}.} Fix $\eta,\delta>0$. By Theorem~\ref{thm:main}, there exists a neural
reverse model $p^{\NN}$ such that
\begin{equation}
	\mathcal E_{\KL}\left(\pi_{p^{\NN},\nu}\right)\le \eta\delta.
\end{equation}
By Markov's inequality,
\begin{equation}
	\begin{aligned}
		\mathsf P_{\cC}\left(K_{p^{\NN}}(C)>\eta\right)
		\le
		\frac{\E[K_{p^{\NN}}(C)]}{\eta}
		\le
		\delta.
	\end{aligned}
\end{equation}

\paragraph{Proof of \ref{corItem:to0}.}
By Theorem~\ref{thm:main}, there exists a sequence of neural reverse models
$\{p^{\NN,n}\}_{n\ge1}$ such that
$
\mathcal E_{\KL}\!\left(\pi_{p^{\NN,n},\nu}\right)\to 0
$.
Hence we may choose a subsequence $\{p^{\NN,n_k}\}_{k\ge1}$ such that
\begin{equation}
\mathcal E_{\KL}\!\left(\pi_{p^{\NN,n_k},\nu}\right)
=
\E[K_{p^{\NN,n_k}}(C)]
\le 2^{-2k}
\qquad
\forall k\ge1.
\end{equation}
Markov's inequality therefore gives
\begin{equation}
\mathsf P_{\cC}\!\left(K_{p^{\NN,n_k}}(C)>2^{-k}\right)\le 2^{-k}.
\end{equation}
Since $\sum_{k=1}^\infty 2^{-k}<\infty$, the Borel--Cantelli lemma implies that,
for $\mathsf P_{\cC}$-a.e.\ $c$, the event
$
K_{p^{\NN,n_k}}(c)>2^{-k}
$
occurs only finitely often. Equivalently, for $\mathsf P_{\cC}$-a.e.\ $c$,
$
K_{p^{\NN,n_k}}(c)\to 0
$.
Renaming the subsequence proves the claim.

%%%%%%%%%%%%%%%%%%%%%%%%%%%%%%%%%%%%%%%%%%%%%%%%%%%%%%%%%%%%%%%%%%%%%%%%%%%
%--------------------------------------------------------------------------
%%%%%%%%%%%%%%%%%%%%%%%%%%%%%%%%%%%%%%%%%%%%%%%%%%%%%%%%%%%%%%%%%%%%%%%%%%%

\section{Example: Gaussian Target under Ornstein--Uhlenbeck Noising}
\label{app:ou-gaussian-example}

This section constructs a fully explicit one-dimensional example in which every
quantity appearing in the $C^1$-branch bound
(Proposition~\ref{prop:main-quant}\ref{item:main-quant-C1}) can be evaluated
with concrete leading coefficients.  The example is deliberately simple: the
target conditional is Gaussian, the forward process is Ornstein--Uhlenbeck (OU), and
the response and feature dimensions are both equal to one.  Despite its simplicity, this example is rich enough to illustrate all the principal components of the approximation theory developed in the main text. At the same time, it exhibits a practically relevant situation in which the feature distribution has decaying non-compact support, thereby showing how the technical issues set aside in the main exposition may be treated in practice.

\subsection{Setup and closed-form quantities}
\label{sec:ex-setup}

\subsubsection{Problem data}

Let $\cC=\cX=\R$ (so $D=1$), and let the context have the uniform distribution
\begin{equation}
	C\sim\mathrm{Unif}[-M,M]
\end{equation}
for a fixed $M>0$.
The true conditional target is Gaussian with $c$-dependent mean and unit
variance:
\begin{equation}
	\pi^\star(x_0\mid c)=\varphi(x_0;\,c,\,1),
\end{equation}
where $\varphi(\,\cdot\,;\mu,\sigma^2)$ denotes the
$\mathcal N(\mu,\sigma^2)$ density throughout.

Fix a contraction parameter $\rho\in(0,1)$ and write
\begin{equation}
	\sigma^2:=1-\rho^2
\end{equation}
for the OU noise variance.  The forward kernels are time-homogeneous
OU steps:
\begin{equation}\label{eq:ex-ou}
	q_t(x_{t+1}\mid x_t)
	=\varphi(x_{t+1};\,\rho\, x_t,\,\sigma^2),
	\qquad t=0,\dots,T-1.
\end{equation}
Intuitively, each forward step contracts the signal by a factor of $\rho$ and
adds independent Gaussian noise of variance $\sigma^2$.
The reference (terminal) law is
\begin{equation}
	\nu=\mathcal N(0,1),
\end{equation}
which is the stationary distribution of the OU chain.

\begin{remark}[Choice of unit initial variance]\label{rem:ex-general-v}
	Taking $\pi^\star(\cdot\mid c)=\mathcal N(c,1)$ ensures
	$V_t:=\mathrm{Var}(X_t\mid C=c)\equiv 1$ for all $t$, which makes
	the reverse-kernel variance constant across steps and considerably
	simplifies the algebra.  For general initial variance $v^2>0$ one has
	$V_t=1+\rho^{2t}(v^2-1)$; all formulas below carry through with
	$\widetilde V_t=\sigma^2 V_{t-1}/V_t$ and $t$-dependent coefficients.  The only
	effect on the final bound is an additional variance-mismatch contribution
	$\tfrac14\rho^{4T}(v^2-1)^2+O(\rho^{6T})$ in the terminal mismatch.
\end{remark}

\subsubsection{Forward marginals}

We claim that for every $t\in\{0,\dots,T\}$,
\begin{equation}\label{eq:ex-marginal}
	X_t\mid C=c\;\sim\;\mathcal N(\rho^t c,\,1).
\end{equation}
This follows by induction.  The base case $t=0$ is just the definition of
$\pi^\star$.  For the inductive step, suppose
$X_t\mid C=c\sim\mathcal N(\rho^t c,1)$.
Since $X_{t+1}=\rho X_t+\sigma\varepsilon$ with $\varepsilon\sim\mathcal N(0,1)$
independent of $X_t$, we get
\begin{equation}
	X_{t+1}\mid C=c
	\;\sim\;
	\mathcal N\left(\rho\cdot\rho^t c,\;\rho^2\cdot 1+\sigma^2\right)
	=\mathcal N(\rho^{t+1}c,\,1),
\end{equation}
using $\rho^2+\sigma^2=1$.

In particular, the forward marginals are all unit-variance Gaussians whose
means decay geometrically toward zero.  At the terminal time $T$, the marginal
$q_T^{\pi^\star}(\cdot\mid c)=\mathcal N(\rho^T c,1)$ is close to --- but not
exactly equal to --- the reference $\nu=\mathcal N(0,1)$.

\subsubsection{Reverse kernels and the sufficient feature}

We now compute the true reverse kernel $r_t^\star(y\mid c,x_t)$.
By the bridge identity (Lemma~\ref{lem:trueReverse}), this density is
proportional in $y$ to the product
\begin{equation}
	q_{t-1}(x_t\mid y)  q_{t-1}^{\pi^\star}(y\mid c)
	=\varphi(x_t;\rho y,\sigma^2) \varphi(y;\rho^{t-1}c,1).
\end{equation}
Both factors are Gaussian in $y$.  Completing the square
(equivalently, applying the standard product-of-Gaussians formula) gives
\begin{equation}\label{eq:ex-reverse}
	r_t^\star(y\mid c,x_t)
	=\varphi(y;\,\underbrace{\rho\, x_t+\sigma^2\rho^{t-1}c}_{=:\,\phi_t(c,x_t)},
	\;\sigma^2),
	\qquad t=1,\dots,T.
\end{equation}
Here is the key structural observation: the reverse kernel depends on the pair
$(c,x_t)$ only through the single scalar combination
\begin{equation}\label{eq:ex-feature}
	z=\phi_t(c,x_t):=\rho\, x_t+\sigma^2\rho^{t-1}c.
\end{equation}
This is the \emph{sufficient feature}, and its dimension is
\begin{equation}
	d_t=1.
\end{equation}
The reverse density, viewed as a function of the feature, is simply a Gaussian
with mean $z$ and variance $\sigma^2$:
\begin{equation}\label{eq:ex-ft}
	f_t(y\mid z):=\rho_t^\star(y\mid z)=\varphi(y;\,z,\,\sigma^2).
\end{equation}
Thus Assumption~\ref{ass:feature}\ref{assItem:featureSufficiency} (feature
sufficiency) holds exactly, without any modification of the feature map.

The joint law $F_t$ of $(Y,Z):=(X_{t-1},Z_t)$ (Definition~\ref{def:forwardJoint}
and~\eqref{eq:featureMeas}) is obtained by marginalizing over $C$.
Conditionally on $C=c$, the pair is jointly Gaussian:
\begin{equation}\label{eq:ex-Ft}
	\begin{pmatrix} Y \\ Z \end{pmatrix}
	\;\bigg|\; C=c
	\;\sim\;
	\mathcal N \left(
	\begin{pmatrix} \rho^{t-1}c \\ \rho^{t-1}c \end{pmatrix},\;
	\begin{pmatrix} 1 & \rho^2 \\ \rho^2 & \rho^2 \end{pmatrix}
	\right),
\end{equation}
so $Y\mid Z=z\sim\mathcal N(z,\sigma^2)$ and
$Y-Z\mid C=c\sim\mathcal N(0,\sigma^2)$.  In particular,
\begin{equation}\label{eq:ex-Ft-moments}
	\E|Y-Z|=\sigma\sqrt{2/\pi},
	\qquad
	\E[(Y-Z)^2]=\sigma^2,
	\qquad
	\E[Y^2]=\frac{1}{3}\rho^{2(t-1)}M^2+1.
\end{equation}
%These moments will be cited repeatedly in the error-term computations below.

\subsubsection{Terminal mismatch}

Since $q_T^{\pi^\star}(\cdot\mid c)=\mathcal N(\rho^T c,1)$ and
$\nu=\mathcal N(0,1)$, the KL divergence between two Gaussians with equal
variance gives
\begin{equation}
	\Comp(c)
	=\KL\left(\mathcal N(\rho^T c,1)\,\|\,\mathcal N(0,1)\right)
	=\frac{(\rho^T c)^2}{2}
	=\frac12\rho^{2T}c^2.
\end{equation}
Averaging over $C\sim\mathrm{Unif}[-M,M]$, using $\E[C^2]=M^2/3$:
\begin{equation}\label{eq:ex-terminal}
	\E[\Comp(C)]=\frac{\rho^{2T}M^2}{6}.
\end{equation}
This is exponentially small in $T$ with rate $\rho^{2T}$.

Exact terminal matching $q_T^{\pi^\star}(\cdot\mid c)=\nu$ for
$\mathsf P_\cC$-a.e.\ $c$ would require $\rho^T c=0$ for all
$c\in[-M,M]$, which is impossible for $M>0$ and finite $T$.  Hence the
terminal mismatch \eqref{eq:ex-terminal} is genuine and irreducible at finite
horizon~$T$; it vanishes only asymptotically.

\subsection{Assumption verification}
\label{sec:ex-assumptions}

\subsubsection{Regularity of forward densities
	(Assumption~\texorpdfstring{\ref{ass:regularity}}{3.4})}
\label{sec:verifyForwardAss}

We verify all four parts of Assumption~\ref{ass:regularity}.  Throughout,
we set $r_s:=1$ and take the hypercubes to be the intervals
$C_s(x):=[x-\tfrac12,x+\tfrac12]$ for every $s=0,\dots,T-1$. % Write $B_t:=\rho^{t-1}M$ for the maximum absolute conditional mean of the feature $Z_t$.

\paragraph{\ref{assItem:positivity}~Strict positivity.}
The target density
$\pi^\star(x_0\mid c)=\varphi(x_0;c,1)$ is strictly positive for all
$x_0\in\R$ and all $c$.  The forward transition density
$q_s(x_{s+1}\mid x_s)=\varphi(x_{s+1};\rho x_s,\sigma^2)$ is strictly positive
for all $s,x_s,x_{s+1}$.

\paragraph{\ref{assItem:continuity}~Continuity and local integrable
	dominance.}
All densities involved are Gaussian, hence $C^\infty$ in all arguments.
For any compact $K\subset\R$,
$\sup_{u\in K}q_s(u\mid v)\le(2\pi \sigma^2)^{-1/2}$, which is a finite
constant independent of $v$ and hence trivially integrable against any
probability measure. 

\paragraph{\ref{assItem:finiteM2}~Finite second moments.}
By \eqref{eq:ex-marginal}, $X_t\mid C=c\sim\mathcal N(\rho^t c,1)$,
so $\E[X_t^2]=\rho^{2t}\E[C^2]+1=\rho^{2t}M^2/3+1<\infty$ for every
$t=0,\dots,T-1$. 

\paragraph{\ref{assItem:logBound}~Integrable local log-variation bounds.}
This is the most involved verification.  We need to produce the bounding
functions $a_{s-1}$ and $b_s$ and check their integrability.

\emph{Marginal oscillation bound \eqref{eq:marginal-oscillation}.}
Recall that the forward marginal at time $s$ is
$q_s^{\pi^\star}(x_s\mid c)=\varphi(x_s;\rho^s c,1)$.  Since the Gaussian
density $\varphi(\,\cdot\,;\mu,1)$ is log-concave and radially decreasing
around its mean $\mu$, the infimum over an interval
$C_s(x_s)=[x_s-\frac12,x_s+\frac12]$ is attained at the endpoint farthest
from $\mu=\rho^s c$.  Therefore
\begin{equation}
	\log\frac{q_s^{\pi^\star}(x_s\mid c)}
	{\inf_{y\in C_s(x_s)}q_s^{\pi^\star}(y\mid c)}
	=\frac{(|x_s-\rho^s c|+\frac12)^2-(x_s-\rho^s c)^2}{2}
	=\frac{|x_s-\rho^s c|+\tfrac14}{2}.
\end{equation}
Since $|\rho^s c|\le\rho^s M$ for $c\in[-M,M]$, this is bounded by
\begin{equation}\label{eq:ex-as-def}
	a_{s-1}(x_s)
	:=\frac{|x_s|+\rho^s M+\tfrac14}{2},
\end{equation}
which depends only on $x_s$ (not on $c$).

\emph{Conditioning-variable oscillation bound
	\eqref{eq:conditioning-variable-oscillation}.}
The forward transition is
$q_s(x_{s+1}\mid x_s)=\varphi(x_{s+1};\rho x_s,\sigma^2)$.  By the same
farthest-point argument (now the infimum is over $x_s$ varying in $C_s(x_s)$,
with the Gaussian centered at $\rho^{-1}(x_{s+1}-\cdot)$):
\begin{equation}
	\log\frac{q_s(x_{s+1}\mid x_s)}
	{\inf_{y\in C_s(x_s)}q_s(x_{s+1}\mid y)}
	%=\frac{\frac{\rho}{2}(2|x_{s+1}-\rho x_s|+\frac{\rho}{2})}{2\sigma^2}
	=\frac{\rho\,|x_{s+1}-\rho\, x_s|+\frac{\rho^2}{4}}{2\sigma^2}.
\end{equation}
We set
\begin{equation}\label{eq:ex-bs-def}
	b_s(x_s,x_{s+1}):=\frac{\rho\,|x_{s+1}-\rho\, x_s|+\frac{\rho^2}{4}}{2\sigma^2}.
\end{equation}

\emph{Integrability under $\mathsf Q_{s+1}$.}
Under $\mathsf Q_{s+1}(\dd c, \dd x_s, \dd x_{s+1})= \allowbreak \mathrm{Unif}[-M,M](\dd c) \mathcal N(\rho^sc,1)(\dd x_s) \allowbreak \mathcal N(\rho x_s,\sigma^2)(\dd x_{s+1})$, the random variable $X_s$ has finite first moment
(since $\E|X_s|\le(\E[X_s^2])^{1/2}=(\rho^{2s}M^2/3+1)^{1/2}$), and
$X_{s+1}-\rho X_s$ is conditionally $\mathcal N(0,\sigma^2)$ given $X_s$, so
$\E|X_{s+1}-\rho X_s|=\sigma\sqrt{2/\pi}$.  Therefore
\begin{align}
	\E_{\mathsf Q_{s+1}}[a_{s-1}(X_s)]
	&=\frac{\E|X_s|+\rho^s M+\tfrac14}{2}<\infty,
	\label{eq:ex-as-integrable}\\
	\E_{\mathsf Q_{s+1}}[b_s(X_s,X_{s+1})]
	&=\frac{\rho\,\E|X_{s+1}-\rho X_s|+\frac{\rho^2}{4}}{2\sigma^2}
	=\frac{\rho}{2\sigma}\sqrt{\frac{2}{\pi}}+\frac{\rho^2}{8\sigma^2}<\infty.
	\label{eq:ex-bs-integrable}
\end{align}
%In particular, for any concrete parameter values one can evaluate these moments numerically.  For example, with $\rho=0.9$ (so $\sigma\approx 0.436$) and $M=2$, the bound \eqref{eq:ex-bs-integrable} evaluates to approximately $0.9\cdot 0.798/(2\cdot 0.436)+0.81/(8\cdot 0.19)\approx 1.36$. 

\subsubsection{Norets $\mathcal M_0$ conditions
	(Proposition~\texorpdfstring{\ref{prop:noretsM0}}{3.5})}\label{sec:ex-M0-assumptions}

All parts of Assumptions \ref{ass:feature} and \ref{ass:regularity} are satisfied except \ref{ass:feature}\ref{assItem:compactFeature} which incurs additional tail contribution. We show by direct computation that Proposition~\ref{prop:noretsM0} still holds.

\emph{Item~\ref{item:noretsM0-pos}: Positivity and continuity.}
The function $y\mapsto\varphi(y;\,z,\,\sigma^2)$ is $C^\infty$ and strictly
positive for every $z\in\R$. 

\emph{Item~\ref{item:noretsM0-M2}: Finite second moment.}
The relevant integral is (from \eqref{eq:ex-Ft-moments})
$\int|y|^2\,F_t(\dd y,\dd z)=\E[X_{t-1}^2]=\rho^{2(t-1)}M^2/3+1<\infty$.

\emph{Item~\ref{item:noretsM0-log}: Integrable local log-variation.}
With $r_{t-1}=1$, the log-variation of $f_t(y\mid z)=\varphi(y;z,\sigma^2)$ over
$C_1(y)=[y-\frac12,y+\frac12]$ is computed as follows.  Since
$\log f_t(y\mid z)=-\frac12\log(2\pi \sigma^2)-(y-z)^2/(2\sigma^2)$, the infimum
over $u\in C_1(y)$ is attained at the point farthest from $z$, giving
\begin{equation}\label{eq:ex-log-variation}
	\log\frac{f_t(y\mid z)}
	{\inf_{u\in C_1(y)}f_t(u\mid z)}
	=\frac{(|y-z|+\frac12)^2-(y-z)^2}{2\sigma^2}
	=\frac{|y-z|+\tfrac14}{2\sigma^2}.
\end{equation}
Integrating this against $F_t$ (using \eqref{eq:ex-Ft-moments}) we get
\begin{equation}
	\int\frac{|y-z|+\tfrac14}{2\sigma^2}\,F_t(\dd y,\dd z)
	=\frac{\E|Y-Z|}{2\sigma^2}+\frac{1}{8\sigma^2}
	=\frac{1}{2\sigma}\sqrt{\frac{2}{\pi}}+\frac{1}{8\sigma^2}<\infty.
\end{equation}

\subsubsection{Feature law and growing compact windows}\label{sec:windows}

Recall from \eqref{eq:ex-feature} that the sufficient feature is
$
Z_t=\phi_t(C,X_t)=\rho\,X_t+\sigma^2\rho^{t-1}C
$.
Since \(X_t\mid C=c\sim\mathcal N(\rho^t c,1)\) by \eqref{eq:ex-marginal},
the conditional law of \(Z_t\) is again Gaussian:
\begin{equation}
	Z_t\mid C=c
	\sim
	\mathcal N(\rho^{t-1}c,\rho^2).
\end{equation}
Indeed,
$
\E[Z_t\mid C=c]
\allowbreak=
\rho\,\E[X_t\mid C=c]+\sigma^2\rho^{t-1}c
\allowbreak=
\rho^{t+1}c+\sigma^2\rho^{t-1}c
\allowbreak=
\rho^{t-1}c
$,
using \(\rho^2+\sigma^2=1\), and
$
\mathrm{Var}(Z_t\mid C=c)
\allowbreak=
\rho^2\,\mathrm{Var}(X_t\mid C=c)
\allowbreak=
\rho^2
$.

Thus the unconditional law of \(Z_t\) is a bounded location-mixture of Gaussians,
hence not compactly supported.  Rather than altering the feature map, we work
on growing compact windows and keep track of the tail error explicitly.  For
each \(n\ge1\), define
\begin{equation}\label{eq:ex-Ktn}
	K_{t,n}:=[-R_n,R_n],
	\qquad
	R_n:=\tfrac14 n^{1/7}.
\end{equation}

In what follows we shall use the basic Gaussian tail estimate a few times with certain parameter relations.
\begin{remark}[Gaussian tail estimate]
Let \(W\sim\mathcal N(\mu,s^2)\) with \(|\mu|\le m\).  Then for every \(r\ge m\),
\begin{equation}\label{eq:ex-gaussian-tail-template}
	\Prob(|W|>r)
	\le
	2\,\bar\Phi\!\left(\frac{r-m}{s}\right).
\end{equation}
Indeed,
\begin{equation}
\Prob(|W|>r)
=
\bar\Phi\!\left(\frac{r-\mu}{s}\right)
+
\bar\Phi\!\left(\frac{r+\mu}{s}\right),
\end{equation}
and since \(|\mu|\le m\) and \(\bar\Phi\) is decreasing,
$
\bar\Phi\!\left(\frac{r-\mu}{s}\right)
+
\bar\Phi\!\left(\frac{r+\mu}{s}\right)
\le
\bar\Phi\!\left(\frac{r-m}{s}\right)
+
\bar\Phi\!\left(\frac{r-m}{s}\right)
$.
If moreover \(r\ge 2m\), then \(r-m\ge r/2\), so using
\(\bar\Phi(x)\le e^{-x^2/2}\) for \(x\ge0\),
\begin{equation}\label{eq:ex-gaussian-tail-exp}
	\Prob(|W|>r)
	\le
	2\exp\!\left(-\frac{(r-m)^2}{2s^2}\right)
	\le
	2\exp\!\left(-\frac{r^2}{8s^2}\right).
\end{equation}
\end{remark}

\medskip

Apply \eqref{eq:ex-gaussian-tail-exp} with
\begin{equation}
W=Z_t\mid C=c,
\qquad
s=\rho,
\qquad
m=\rho^{t-1}M,
\qquad
r=R_n.
\end{equation}
Set
\begin{equation}
	n_{t,*}:=\max\!\left\{6,\left\lceil(8\rho^{t-1}M)^7\right\rceil\right\}. \label{n*}
\end{equation}
Then for \(n\ge n_{t,*}\) we have \(R_n=\frac14 n^{1/7}\ge 2\rho^{t-1}M\), hence
\begin{equation}\label{eq:ex-feature-tail}
	\Prob(Z_t\notin K_{t,n}\mid C=c)
	\le
	2\exp\!\left(-\frac{R_n^2}{8\rho^2}\right)
	=
	2\exp\!\left(-\frac{n^{2/7}}{128\rho^2}\right).
\end{equation}
The bound is uniform in \(c\in[-M,M]\), so averaging over \(C\) yields
\begin{equation}\label{eq:Ctail}
	\Prob(Z_t\notin K_{t,n})
	\le
	C_t^{\mathrm{tail}}\,e^{-c_t^{\mathrm{tail}}n^{2/7}},
	\qquad
	C_t^{\mathrm{tail}}:=2,
	\quad
	c_t^{\mathrm{tail}}:=\frac{1}{128\rho^2},
	\qquad
	n\ge n_{t,*}.
\end{equation}
This is super-algebraically small in \(n\).

\subsection{Partition, cell probabilities, and the infeasible-mixture bound}
\label{sec:ex-partition}

\subsubsection{Partition and scale parameters}

We now specify the response-space partition and the scale parameters appearing
in the Norets bound \eqref{eq:Bnorets}.

\paragraph{Choice of rate exponent.}
We fix
\begin{equation}\label{eq:ex-beta-choice}
	\beta:=\frac{2}{7},
\end{equation}
which is the exponent given by Corollary~\ref{cor:M0-power-rate} when
\(\theta_t=4\) and \(\xi_t=1\):
\begin{equation}
\beta
=
\frac{1}{2+\frac{1}{\theta_t-2}+\xi_t}
=
\frac{1}{2+\frac12+1}
=
\frac{2}{7}.
\end{equation}
See Remark~\ref{rem:ex-rate-optimization} for the non-optimality of this choice.

\paragraph{Partition resolution and bandwidths.}
For each \(n\ge1\), set \(m_t:=n\) at every step \(t=1,\dots,T\), and define
\begin{equation}\label{eq:ex-scales}
	h_n:=n^{-6/7},
	\qquad
	\delta_n:=n^{-2/7},
	\qquad
	\sigma_n:=n^{-4/7},
	\qquad
	L_n:=\tfrac12 n\,h_n=\tfrac12 n^{1/7}.
\end{equation}
Here \(h_n\) is the cell width, \(\sigma_n\) is the interior Gaussian scale,
\(\delta_n\) is the oscillation scale from \eqref{eq:Bnorets}, and \(L_n\) is
the half-width of the fine partition region.

\paragraph{Response-space partition.}
The interval \([-L_n,L_n)\) is divided into \(n\) cells
\begin{equation}
	A_{t,j}^{(n)}:=[-L_n+(j-1)h_n,\,-L_n+jh_n),
	\qquad j=1,\dots,n,
\end{equation}
with centers
\begin{equation}
	\mu_{t,j}^{(n)}:=-L_n+\Bigl(j-\tfrac12\Bigr)h_n.
\end{equation}
The remainder cell is
\begin{equation}
	A_{t,0}^{(n)}:=\R\setminus[-L_n,L_n).
\end{equation}

\paragraph{Tail-component parameters.}
We take \(r_{t-1}=1\), as in \S\ref{sec:ex-M0-assumptions}, and
\begin{equation}
	\sigma_{t,0}:=\sqrt{\frac{2}{\pi}}.
\end{equation}
With this choice, the third condition in \eqref{eq:delta-sigma-h-condition}
holds with equality.

\paragraph{Verification of \eqref{eq:delta-sigma-h-condition}.}
Since \(D=1\), the three conditions in \eqref{eq:delta-sigma-h-condition} read:
\begin{enumerate}[label=(\roman*)]
	\item $\sigma_n/\delta_n=n^{-4/7}/n^{-2/7}=n^{-2/7}\to 0$.
	\item Since $D=1$, we have $\delta_n^{D-1}=1$, so the second condition is
	$h_n/\sigma_n=n^{-6/7}/n^{-4/7}=n^{-2/7}\to 0$.
	\item $(r_{t-1}/2)^D/(2\pi\sigma_{t,0}^2)^{D/2}
	=\frac{1/2}{\sqrt{2\pi\cdot 2/\pi}}
	=\frac{1/2}{2}=\frac14=2^{-(D+1)}$.
\end{enumerate}
Also, the boundary condition in \eqref{eq:delta-sigma-h-condition} is
eventually satisfied because \(L_n=\frac12 n^{1/7}\to\infty\), so every fixed
\(\delta_n\)-cube is eventually contained in the fine region.

\subsubsection{Cell probabilities and log-odds}

Write
\begin{equation}
a_{j,n}:=-L_n+(j-1)h_n,
\qquad
b_{j,n}:=-L_n+jh_n
\end{equation}
for the endpoints of \(A_{t,j}^{(n)}\).  Since the reverse-step density is
\(f_t(y\mid z)=\varphi(y;z,\sigma^2)\) by \eqref{eq:ex-ft}, the cell
probabilities are
\begin{align}
	G_{t,j}^{(n)}(z)
	&:=
	\int_{A_{t,j}^{(n)}} f_t(y\mid z)\,\dd y
	=
	\Phi\!\left(\frac{b_{j,n}-z}{\sigma}\right)
	-
	\Phi\!\left(\frac{a_{j,n}-z}{\sigma}\right),
	\qquad j=1,\dots,n,
	\label{eq:ex-Gtj}\\
	G_{t,0}^{(n)}(z)
	&:=
	\int_{A_{t,0}^{(n)}} f_t(y\mid z)\,\dd y
	=
	\Phi\!\left(\frac{-L_n-z}{\sigma}\right)
	+
	1-\Phi\!\left(\frac{L_n-z}{\sigma}\right).
	\label{eq:ex-Gt0}
\end{align}
The exact step-\(t\) log-odds are therefore
\begin{equation}\label{eq:ex-log-odds}
	\Lambda_{t,j}^{n}(z)
	:=
	\log G_{t,j}^{(n)}(z)-\log G_{t,0}^{(n)}(z),
	\qquad j=1,\dots,n.
\end{equation}
Since \(f_t(\cdot\mid z)>0\) everywhere, every \(G_{t,j}^{(n)}(z)\) is strictly
positive, and because \(\Phi\) is \(C^\infty\), the functions
\(\Lambda_{t,j}^{n}\) are \(C^\infty\) on \(K_{t,n}\).  Thus
Assumption~\ref{ass:smooth-logodds} holds with $s_t$ arbitrarily large; we will only need the case \(s_t=1\).

\subsubsection{Term-by-term computation of
	\texorpdfstring{$B_{t,n}^{\mathrm{Nor}}$}{B\^Nor}}

We now evaluate each term in \eqref{eq:Bnorets} for \(D=1\).

\paragraph{Term 1: the local oscillation integral \(L_t(\delta_n)\).}
Using the \(C^1\)-branch of the local oscillation operator \eqref{eq:Duft} and
\(f_t(y\mid z)=\varphi(y;z,\sigma^2)\) from \eqref{eq:ex-ft},
$
\partial_u\log f_t(u\mid z)
=
-\frac{u-z}{\sigma^2}
$,
so for \(C_{\delta_n}(y)=[y-\delta_n/2,y+\delta_n/2]\),
\begin{equation}
	\mathcal D_{\delta_n}f_t(y\mid z)
	=
	\frac{\delta_n}{2}
	\sup_{u\in C_{\delta_n}(y)}
	\bigl|\partial_u\log f_t(u\mid z)\bigr|
	\le
	\frac{\delta_n}{2\sigma^2}
	\left(|y-z|+\frac{\delta_n}{2}\right).
\end{equation}
Integrating against the joint law \(F_t\) from \eqref{eq:ex-Ft}, and using
\(\E|Y-Z|=\sigma\sqrt{2/\pi}\) from \eqref{eq:ex-Ft-moments}, we obtain
\begin{equation}\label{eq:ex-Lt}
	L_t(\delta_n)
	\le
	\frac{\delta_n}{2\sigma^2}\,\E|Y-Z|
	+
	\frac{\delta_n^2}{4\sigma^2}
	=
	\frac{1}{2\sigma}\sqrt{\frac{2}{\pi}}\,n^{-2/7}
	+
	\frac{1}{4\sigma^2}\,n^{-4/7}.
\end{equation}

\paragraph{Term 2: the boundary oscillation integral \(T_{t,n}(\delta_n)\).}
By definition, the \(\delta_n\)-boundary of the remainder cell \(A_{t,0}^{(n)}\)
consists of those \(y\) for which \(C_{\delta_n}(y)\) meets
\(\R\setminus[-L_n,L_n)\).  Equivalently,
\begin{equation}
	\mathcal B_{t,n}(\delta_n)
	=
	\Bigl\{(z,y):C_{\delta_n}(y)\cap A_{t,0}^{(n)}\neq\emptyset\Bigr\}
	=
	\{|y|>L_n-\delta_n/2\}.
\end{equation}
Set
\begin{equation}
	a_n:=L_n-\delta_n/2
	=
	\frac12 n^{1/7}-\frac12 n^{-2/7}.
\end{equation}
Let \(Y=X_{t-1}\).  Since \(Y\mid C=c\sim\mathcal N(\rho^{t-1}c,1)\), the
template \eqref{eq:ex-gaussian-tail-exp} applies with
\begin{equation}
W=Y\mid C=c,
\qquad
s=1,
\qquad
m=\rho^{t-1}M,
\qquad
r=a_n.
\end{equation}
For \(n\ge n_{t,*}\) we have
$
a_n
=
\frac12 n^{1/7}-\frac12 n^{-2/7}
\ge
\frac14 n^{1/7}
\ge
2\rho^{t-1}M
$,
so
\begin{equation}\label{eq:ex-boundary-prob}
	p_{t,n}
	:=
	\Prob(|Y|>a_n)
	\le
	2\exp\!\left(-\frac{a_n^2}{8}\right)
	\le
	2\exp\!\left(-\frac{n^{2/7}}{128}\right),
	\qquad n\ge n_{t,*}.
\end{equation}

Again from \eqref{eq:Duft}, now with \(r_{t-1}=1\),
\begin{equation}
	\mathcal D_1f_t(y\mid z)
	=
	\frac12\sup_{u\in[y-\frac12,y+\frac12]}
	\bigl|\partial_u\log f_t(u\mid z)\bigr|
	\le
	\frac{|y-z|+\frac12}{2\sigma^2}.
\end{equation}
Hence
\begin{equation}
	T_{t,n}(\delta_n)
	\le
	\frac{1}{2\sigma^2}
	\int_{\mathcal B_{t,n}(\delta_n)}
	\left(|y-z|+\frac12\right)\,\dd F_t.
\end{equation}
Using Cauchy--Schwarz and \(\E[(Y-Z)^2]=\sigma^2\) from
\eqref{eq:ex-Ft-moments},
\begin{equation}
\int |y-z|\,\mathbf 1_{\mathcal B_{t,n}(\delta_n)}\,\dd F_t
\le
\bigl(\E[(Y-Z)^2]\bigr)^{1/2}\,p_{t,n}^{1/2}
=
\sigma\,p_{t,n}^{1/2}.
\end{equation}
Also \(p_{t,n}\le p_{t,n}^{1/2}\) because \(p_{t,n}\le1\).  Therefore,
for \(n\ge n_{t,*}\),
\begin{equation}\label{eq:ex-Tt}
	T_{t,n}(\delta_n)
	\le
	\frac{\sigma+\frac12}{2\sigma^2}\,p_{t,n}^{1/2}
	\le
	C_t^{(T)}\,e^{-c_t^{(T)}n^{2/7}},
	\quad \text{with} \quad
	C_t^{(T)}:=\frac{\sigma+\frac12}{\sqrt2\,\sigma^2},
	\quad
	c_t^{(T)}:=\frac{1}{256}.
\end{equation}

\paragraph{Term 3: the tail-component integral \(U_{t,n}(\delta_n)\).}
For \(D=1\), \(r_{t-1}=1\), and \(\sigma_{t,0}=\sqrt{2/\pi}\), the second term
in \(U_{t,n}\) from \eqref{eq:LTU} is
$
-\log\frac{(r_{t-1}/2)^D}{(2\pi\sigma_{t,0}^2)^{D/2}}
=
-\log\frac{1/2}{\sqrt{2\pi\cdot 2/\pi}}
=
\log 4
$.
Thus
\begin{equation}
	U_{t,n}(\delta_n)
	=
	\frac{\pi}{4}
	\int_{\mathcal B_{t,n}(\delta_n)} y^2\,\dd F_t
	+
	(\log 4)\,p_{t,n}.
\end{equation}
Since \(\mathcal B_{t,n}(\delta_n)\) depends only on \(Y=X_{t-1}\),
\begin{equation}
\int_{\mathcal B_{t,n}(\delta_n)} y^2\,\dd F_t
=
\E\!\left[Y^2\,\mathbf 1_{\{|Y|>a_n\}}\right]
\le
\bigl(\E[Y^4]\bigr)^{1/2}\,p_{t,n}^{1/2}
\end{equation}
by Cauchy--Schwarz.  Now \(Y\mid C=c\sim\mathcal N(\rho^{t-1}c,1)\), so
\begin{equation}
\E[Y^4\mid C=c]
=
(\rho^{t-1}c)^4+6(\rho^{t-1}c)^2+3
\le
(\rho^{t-1}M)^4+6(\rho^{t-1}M)^2+3.
\end{equation}
Hence
\begin{equation}
	\bigl(\E[Y^4]\bigr)^{1/2}
	\le
	\sqrt{(\rho^{t-1}M)^4+6(\rho^{t-1}M)^2+3}.
\end{equation}
Using the already established bound \eqref{eq:ex-boundary-prob},
\begin{equation}
p_{t,n}^{1/2}
\le
\sqrt2\,e^{-n^{2/7}/256},
\qquad
p_{t,n}\le p_{t,n}^{1/2},
\qquad
n\ge n_{t,*},
\end{equation}
and therefore
\begin{equation}\label{eq:ex-Ut}
	U_{t,n}(\delta_n)
	\le
	C_t^{(U)}\,e^{-c_t^{(U)}n^{2/7}},
	\qquad n\ge n_{t,*},
\end{equation}
where
\begin{equation}
	C_t^{(U)}
	:=
	\sqrt2\left(
	\frac{\pi}{4}\sqrt{(\rho^{t-1}M)^4+6(\rho^{t-1}M)^2+3}
	+\log 4
	\right),
	\qquad
	c_t^{(U)}:=\frac{1}{256}.
\end{equation}

\paragraph{Term 4: the Riemann-sum error.}
With \(D=1\), the fourth term in \eqref{eq:Bnorets} becomes
\begin{equation}
	\frac{6D^{3/2}\delta_n^{D-1}h_n}{(2\pi)^{D/2}\sigma_n^D}
	=
	\frac{6h_n}{\sqrt{2\pi}\,\sigma_n}
	=
	\frac{6}{\sqrt{2\pi}}\,n^{-2/7}. \label{eq:ex-riemann}
\end{equation}

\paragraph{Term 5: the Gaussian tail term.}
Again with \(\delta_n,\sigma_n\) from \eqref{eq:ex-scales},
\begin{equation}\label{eq:ex-gauss-tail}
	2\exp\!\left(-\frac{(\delta_n/\sigma_n)^2}{8}\right)
	=
	2\exp\!\left(-\frac{n^{4/7}}{8}\right).
\end{equation}
This decays faster than \(e^{-c n^{2/7}}\); in particular,
\begin{equation}
2e^{-n^{4/7}/8}
\le
2e^{-n^{2/7}/256},
\qquad n\ge1.
\end{equation}

\subsubsection{Assembled Norets bound and rate}

Combining \eqref{eq:ex-Lt}, \eqref{eq:ex-Tt}, \eqref{eq:ex-Ut}, \eqref{eq:ex-riemann}, and \eqref{eq:ex-gauss-tail}, we obtain
\begin{equation}\label{eq:ex-Bnor}
	B_{t,n}^{\mathrm{Nor}}
	\le
	\underbrace{\left(
		\frac{1}{2\sigma}\sqrt{\frac{2}{\pi}}
		+
		\frac{6}{\sqrt{2\pi}}
		\right)}_{=:\,\kappa_1(\sigma)}
	n^{-2/7}
	+
	\frac{1}{4\sigma^2}\,n^{-4/7}
	+
	C_t^\sharp e^{-c_t^\sharp n^{2/7}},
\end{equation}
for all \(n\ge n_{t,*}\), where one may take
\begin{equation}
	C_t^\sharp
	:=
	\frac{\sigma+\frac12}{\sqrt2\,\sigma^2}
	+
	\sqrt2\left(
	\frac{\pi}{4}\sqrt{(\rho^{t-1}M)^4+6(\rho^{t-1}M)^2+3}
	+\log 4
	\right)
	+2,
	\qquad
	c_t^\sharp:=\frac{1}{256}.
\end{equation}
The leading algebraic coefficient
\begin{equation}
	\kappa_1(\sigma)
	=
	\frac{1}{2\sigma}\sqrt{\frac{2}{\pi}}
	+
	\frac{6}{\sqrt{2\pi}}
\end{equation}
is decreasing in \(\sigma=\sqrt{1-\rho^2}\).  In the boundary case
\(\rho\to0\) (so \(\sigma\to1\)),
$
\kappa_1(1)
=
\frac12\sqrt{\frac{2}{\pi}}
+
\frac{6}{\sqrt{2\pi}}
\approx 2.79
$.
Thus the dominant contribution to \(B_{t,n}^{\mathrm{Nor}}\) is still the
algebraic term \(\kappa_1(\sigma)n^{-2/7}\); all remaining terms are strictly
smaller as \(n\to\infty\). By Proposition~\ref{prop:M0-explicit-bound}, \eqref{eq:ex-Bnor} bounds the step-$t$ KL error for the infeasible model, more specifically, for $n\ge n_{t,*}$ we find
\begin{equation}\label{eq:ex-M0-rate}
	d_{\KL,t}\!\left(\tilde p_t^{\mathcal M_0,n}\right)
	\le
	C_t^{\mathrm{Nor}}\,n^{-2/7},
\end{equation}
where $C_t^{\mathrm{Nor}}$ can be any constant satisfying:
\begin{equation}
	\kappa_1(\sigma)+\frac{1}{4\sigma^2}\,n_{t,*}^{-2/7}
	+
	\sup_{n \ge n_{t,*}}C_t^\sharp e^{-c_t^\sharp n^{2/7}} n^{2/7}
	\le
	C_t^{\mathrm{Nor}}.
\end{equation}
Comparing \eqref{eq:ex-M0-rate} with \eqref{eq:M0-power-rate} we see the decay rate of the per-step error of this toy model to be  $\beta_t = \beta = 2/7$.

\begin{remark}[Optimality of the rate exponent]\label{rem:ex-rate-optimization}
	The choice \(\theta_t=4\), \(\xi_t=1\), hence \(\beta=2/7\), is convenient but
	not optimal.  Since the joint law \(F_t\) in \eqref{eq:ex-Ft} is a bounded
	location-mixture of Gaussians, all of its polynomial moments are finite.  Thus
	\(\theta_t\) may be taken arbitrarily large.  With \(\theta_t=q\) and
	\(\xi_t\to 0\), Corollary~\ref{cor:M0-power-rate} gives
	\begin{equation}
		\beta_t
		=
		\frac{1}{2+\frac{1}{q-2}+\xi_t}
		\longrightarrow
		\frac12
		\qquad\text{as }q\to\infty,\ \xi_t\to0.
	\end{equation}
	However, the constant in Corollary~\ref{cor:M0-power-rate} depends on the
	\(q\)-th moment of \(F_t\), and that constant deteriorates as \(q\to\infty\).
	So the limit \(\beta_t\to\frac12\) should be understood as an asymptotic
	optimization of the exponent, not as a uniform bound with fixed constants.
\end{remark}

\subsection{Neural approximation and final bound}
\label{sec:ex-neural}

\subsubsection{Log-odds complexity}

To verify Assumption~\ref{ass:smooth-logodds} in this example, we use the
growing compact window \(K_{t,n}=[-R_n,R_n]\) from \eqref{eq:ex-Ktn} and the
affine parametrization
\begin{equation}
	\Psi_{t,n}:[0,1]\to K_{t,n},
	\qquad
	\Psi_{t,n}(u):=R_n(2u-1).
\end{equation}
Recall also that the scales \(h_n,L_n\) are given by \eqref{eq:ex-scales}, the
step-\(t\) reverse density by \eqref{eq:ex-ft}, the cell probabilities by
\eqref{eq:ex-Gtj}--\eqref{eq:ex-Gt0}, and the exact log-odds by
\eqref{eq:ex-log-odds}. We set
\begin{equation}
	\Lambda_{t,n}^{\max}
	:=
	\max_{1\le j\le n}
	\|\Lambda_{t,j}^{n}\circ\Psi_{t,n}\|_{C^1([0,1])}.
\end{equation}

\paragraph{Pointwise bound on the log-odds.}
Fix \(z\in K_{t,n}\), i.e.\ \(|z|\le R_n\). For any interior cell
\(A_{t,j}^{(n)}\) from the partition in \S\ref{sec:ex-partition},
\begin{equation}
	G_{t,j}^{(n)}(z)
	=
	\int_{A_{t,j}^{(n)}} f_t(y\mid z)\,\dd y
	\le
	h_n \sup_{y\in\R} f_t(y\mid z)
	=
	\frac{h_n}{\sigma\sqrt{2\pi}},
\end{equation}
because \(f_t(\,\cdot\mid z)=\varphi(\,\cdot\,;z,\sigma^2)\) by
\eqref{eq:ex-ft}. On the other hand,
\begin{equation}
	G_{t,j}^{(n)}(z)
	\ge
	h_n\inf_{y\in A_{t,j}^{(n)}} f_t(y\mid z)
	\ge
	\frac{h_n}{\sigma\sqrt{2\pi}}
	\exp\!\left(-\frac{(L_n+R_n+h_n)^2}{2\sigma^2}\right),
\end{equation}
since \(A_{t,j}^{(n)}\subset [-L_n,L_n)\) and hence
\(|y-z|\le L_n+R_n+h_n\) on \(A_{t,j}^{(n)}\times K_{t,n}\). Therefore
\begin{equation}
	-\log G_{t,j}^{(n)}(z)
	\le
	\frac{(L_n+R_n+h_n)^2}{2\sigma^2}
	-\log h_n
	+
	C.
\end{equation}

For the tail cell \(A_{t,0}^{(n)}\), \eqref{eq:ex-Gt0} gives
\begin{equation}
	G_{t,0}^{(n)}(z)
	=
	\Phi\!\left(\frac{-L_n-z}{\sigma}\right)
	+
	1-\Phi\!\left(\frac{L_n-z}{\sigma}\right)
	=
	\bar\Phi\!\left(\frac{L_n+z}{\sigma}\right)
	+
	\bar\Phi\!\left(\frac{L_n-z}{\sigma}\right).
\end{equation}
As a function of \(z\), the right-hand side is even, and for \(z\ge0\) its derivative is
\begin{equation}
	\frac{1}{\sigma}
	\left[
	\varphi\!\left(\frac{L_n-z}{\sigma};0,1\right)
	-
	\varphi\!\left(\frac{L_n+z}{\sigma};0,1\right)
	\right]
	\ge 0.
\end{equation}
Hence \(G_{t,0}^{(n)}(z)\) is minimized at \(z=0\), so for all \(|z|\le R_n\),
\begin{equation}
	G_{t,0}^{(n)}(z)
	\ge
	2\,\bar\Phi\!\left(\frac{L_n}{\sigma}\right).
\end{equation}
Using the lower Mills bound
\(\bar\Phi(x)\ge \frac{x}{1+x^2}\varphi(x;0,1)\) for \(x>0\), we obtain
\begin{equation}
	-\log G_{t,0}^{(n)}(z)
	\le
	\frac{L_n^2}{2\sigma^2}
	+
	C\log(1+L_n)
	+
	C.
\end{equation}

Since
\begin{equation}
\Lambda_{t,j}^{n}(z)=\log G_{t,j}^{(n)}(z)-\log G_{t,0}^{(n)}(z),
\end{equation}
we have
\begin{equation}
|\Lambda_{t,j}^{n}(z)|
\le
-\log G_{t,j}^{(n)}(z)-\log G_{t,0}^{(n)}(z).
\end{equation}
Using \eqref{eq:ex-scales} and \eqref{eq:ex-Ktn}, namely
\(h_n=n^{-6/7}\), \(L_n=\frac12 n^{1/7}\), and \(R_n=\frac14 n^{1/7}\), this yields
\begin{equation}\label{eq:ex-logit-sup}
	\sup_{|z|\le R_n}\max_{1\le j\le n}
	|\Lambda_{t,j}^{n}(z)|
	\le
	C_{t,1}n^{2/7}+C_{t,2}\log n
	\le
	C_{t,3}n^{2/7}.
\end{equation}

\paragraph{Derivative bound.}
For \(j=1,\dots,n\), differentiating under the integral sign in
\eqref{eq:ex-Gtj} is justified because \(f_t(\cdot\mid z)\) is Gaussian
\eqref{eq:ex-ft}. Since
$
\partial_z f_t(y\mid z)=\frac{y-z}{\sigma^2}f_t(y\mid z)
$,
we get
\begin{equation}
	\partial_z G_{t,j}^{(n)}(z)
	=
	\frac{1}{\sigma^2}
	\int_{A_{t,j}^{(n)}}(y-z)f_t(y\mid z)\,\dd y,
\end{equation}
hence
\begin{equation}
	\partial_z \log G_{t,j}^{(n)}(z)
	=
	\frac{1}{\sigma^2}
	\frac{\int_{A_{t,j}^{(n)}}(y-z)f_t(y\mid z)\,\dd y}
	{\int_{A_{t,j}^{(n)}}f_t(y\mid z)\,\dd y}.
\end{equation}
Since \(|y-z|\le L_n+R_n+h_n\) on \(A_{t,j}^{(n)}\times K_{t,n}\),
\begin{equation}
	\sup_{|z|\le R_n}\max_{1\le j\le n}
	\left|\partial_z \log G_{t,j}^{(n)}(z)\right|
	\le
	\frac{L_n+R_n+h_n}{\sigma^2}.
\end{equation}

For the tail cell, differentiating \eqref{eq:ex-Gt0} gives
\begin{equation}
	\partial_z G_{t,0}^{(n)}(z)
	=
	\frac{1}{\sigma}
	\left[
	\varphi\!\left(\frac{L_n-z}{\sigma};0,1\right)
	-
	\varphi\!\left(\frac{L_n+z}{\sigma};0,1\right)
	\right].
\end{equation}
By symmetry it suffices to consider \(z\in[0,R_n]\). Then
\begin{equation}
	0\le \partial_z G_{t,0}^{(n)}(z)
	\le
	\frac{1}{\sigma}
	\varphi\!\left(\frac{L_n-z}{\sigma};0,1\right),
	\qquad
	G_{t,0}^{(n)}(z)\ge
	\bar\Phi\!\left(\frac{L_n-z}{\sigma}\right).
\end{equation}
Therefore
\begin{equation}
	0\le
	\partial_z \log G_{t,0}^{(n)}(z)
	\le
	\frac{1}{\sigma}\,
	\frac{\varphi((L_n-z)/\sigma;0,1)}{\bar\Phi((L_n-z)/\sigma)}.
\end{equation}
Using the upper Mills bound
\(\varphi(x;0,1)/\bar\Phi(x)\le x+x^{-1}\) for \(x>0\), together with
\(L_n-z\ge L_n-R_n=\frac14 n^{1/7}\), we obtain
\begin{equation}\label{eq:ex-logit-deriv}
	\sup_{|z|\le R_n}
	\left|\partial_z \log G_{t,0}^{(n)}(z)\right|
	\le
	C_{t,4}(1+L_n)
	\le
	C_{t,5}n^{1/7}.
\end{equation}

Combining the interior and tail estimates,
\begin{equation}
	\sup_{|z|\le R_n}\max_{1\le j\le n}
	|\partial_z\Lambda_{t,j}^{n}(z)|
	\le
	C_{t,6}n^{1/7}.
\end{equation}
Since \(\Psi_{t,n}'(u)=2R_n\),
\begin{equation}
	\|\Lambda_{t,j}^{n}\circ\Psi_{t,n}\|_{C^1([0,1])}
	\le
	\sup_{|z|\le R_n}|\Lambda_{t,j}^{n}(z)|
	+
	2R_n\sup_{|z|\le R_n}|\partial_z\Lambda_{t,j}^{n}(z)|,
\end{equation}
and therefore \eqref{eq:ex-logit-sup}, \eqref{eq:ex-logit-deriv}, and
\(R_n=\frac14 n^{1/7}\) imply
\begin{equation}\label{eq:ex-Lambda-growth}
	\Lambda_{t,n}^{\max}\le C_t^\Lambda n^{2/7}.
\end{equation}

\subsubsection{Neural logit approximation}

Apply Proposition~\ref{prop:nn-logit-rate} with \(d_t=s_t=1\) to the
one-dimensional functions \(\Lambda_{t,j}^{n}\circ\Psi_{t,n}\). This gives a
ReLU network \(\bar a_t^{\vartheta_t^{(n)}}:\R\to\R^n\) such that
\begin{equation}
	\max_{1\le j\le n}\sup_{u\in[0,1]}
	\left|
	\bar a_{t,j}^{\vartheta_t^{(n)}}(u)
	-
	\bigl(\Lambda_{t,j}^{n}\circ\Psi_{t,n}\bigr)(u)
	\right|
	\le
	1360\,\Lambda_{t,n}^{\max}\,\mathscr N_t^{-2}\mathscr L_t^{-2}.
\end{equation}
Now compose with the affine inverse
\begin{equation}
	\Psi_{t,n}^{-1}(z)=\frac{z/R_n+1}{2},
	\qquad z\in K_{t,n},
\end{equation}
and define
\begin{equation}
	a_t^{\vartheta_t^{(n)}}(z)
	:=
	\bar a_t^{\vartheta_t^{(n)}}\!\bigl(\Psi_{t,n}^{-1}(z)\bigr).
\end{equation}
Then
\begin{equation}
	\eta_t^{\NN}
	:=
	\max_{1\le j\le n}\sup_{z\in K_{t,n}}
	\left|
	a_{t,j}^{\vartheta_t^{(n)}}(z)-\Lambda_{t,j}^{n}(z)
	\right|
	\le
	1360\,\Lambda_{t,n}^{\max}\,\mathscr N_t^{-2}\mathscr L_t^{-2}.
\end{equation}
Using \eqref{eq:ex-Lambda-growth}, it is enough to choose
\begin{equation}\label{eq:ex-network-size}
	\mathscr N_t^{(n)}=\mathscr L_t^{(n)}
	:=
	\left\lceil
	(5440\,T\,C_t^\Lambda)^{1/4}\,n^{9/28}
	\right\rceil
\end{equation}
to ensure
\begin{equation}
	\eta_t^{\NN}\le \frac{1}{4Tn}.
\end{equation}

Because the feature \(Z_t=\phi_t(C,X_t)\) from \eqref{eq:ex-feature} is not
compactly supported, we clip it to the window \(K_{t,n}\) in
\eqref{eq:ex-Ktn}:
\begin{equation}
	\Pi_n(z):=\max\{-R_n,\min\{z,R_n\}\}.
\end{equation}
The map \(\Pi_n\) is piecewise affine, hence can be implemented exactly by a
small ReLU subnetwork and absorbed into the first layer. Using the baseline
softmax weights \(\alpha_{t,j}[\cdot]\) from \eqref{eq:generic-softmax} and the
Gaussian-mixture template \eqref{eq:generic-logit-mixture}, we define
\begin{equation}\label{eq:ex-neural-mixture}
	\tilde p_{t,\vartheta_t^{(n)}}^{\NN}(y\mid z)
	:=
	\sum_{j=1}^{n}
	\alpha_{t,j}\bigl[a_t^{\vartheta_t^{(n)}}\bigr]\!\left(\Pi_n(z)\right)
	\varphi(y;\mu_{t,j}^{(n)},\sigma_n^2)
	+
	\alpha_{t,0}\bigl[a_t^{\vartheta_t^{(n)}}\bigr]\!\left(\Pi_n(z)\right)
	\varphi(y;0,\sigma_{t,0}^2),
\end{equation}
and the induced step-\(t\) reverse kernel by
\begin{equation}
	p_{t,\vartheta_t^{(n)}}^{\NN}(\dd y\mid c,x_t)
	:=
	\tilde p_{t,\vartheta_t^{(n)}}^{\NN}
	(\dd y\mid \phi_t(c,x_t)).
\end{equation}

On the event \(\{|Z_t|\le R_n\}\), clipping is inactive, so
\(\Pi_n(Z_t)=Z_t\in K_{t,n}\). By Lemma~\ref{lem:baseline-softmax-stability},
\begin{equation}
	\KL\!\left(
	f_t(\cdot\mid z)\,\middle\|\,
	\tilde p_{t,\vartheta_t^{(n)}}^{\NN}(\cdot\mid z)
	\right)
	\le
	\KL\!\left(
	f_t(\cdot\mid z)\,\middle\|\,
	\tilde p_t^{\mathcal M_0,n}(\cdot\mid z)
	\right)
	+
	2\eta_t^{\NN}
\end{equation}
for every \(z\in K_{t,n}\). Integrating over \(\{|z|\le R_n\}\), using the
infeasible-mixture bound \eqref{eq:ex-M0-rate} and
\(\eta_t^{\NN}\le 1/(4Tn)\), we get
\begin{equation}\label{eq:ex-neural-rate}
	\int_{|z|\le R_n}
	\KL\!\left(
	f_t(\cdot\mid z)\,\middle\|\,
	\tilde p_{t,\vartheta_t^{(n)}}^{\NN}(\cdot\mid z)
	\right)
	(F_t)_Z(\dd z)
	\le
	C_t^{\mathrm{Nor}}\,n^{-2/7}
	+
	\frac{1}{2Tn}.
\end{equation}

\subsubsection{Tail contribution and assembled bound}

It remains to control the contribution of \(\{|Z_t|>R_n\}\). For such \(z\),
let \(u:=\Pi_n(z)\in\{-R_n,R_n\}\). Since
\(|a_{t,j}^{\vartheta_t^{(n)}}(u)-\Lambda_{t,j}^{n}(u)|\le\eta_t^{\NN}\) for
all \(j\),
\begin{equation}
	1+\sum_{j=1}^{n}e^{a_{t,j}^{\vartheta_t^{(n)}}(u)}
	\le
	e^{\eta_t^{\NN}}
	\left(
	1+\sum_{j=1}^{n}e^{\Lambda_{t,j}^{n}(u)}
	\right).
\end{equation}
Hence, using \(G_{t,0}^{(n)}=\alpha_{t,0}[\Lambda_t^n]\),
\begin{equation}
	\alpha_{t,0}\bigl[a_t^{\vartheta_t^{(n)}}\bigr](u)
	=
	\frac{1}{1+\sum_{j=1}^{n}e^{a_{t,j}^{\vartheta_t^{(n)}}(u)}}
	\ge
	e^{-\eta_t^{\NN}}\,G_{t,0}^{(n)}(u)
	\ge
	e^{-\eta_t^{\NN}}
	\bar\Phi\!\left(\frac{L_n-R_n}{\sigma}\right).
\end{equation}
Therefore, by \eqref{eq:ex-neural-mixture},
\begin{equation}
	\tilde p_{t,\vartheta_t^{(n)}}^{\NN}(y\mid z)
	\ge
	e^{-\eta_t^{\NN}}
	\bar\Phi\!\left(\frac{L_n-R_n}{\sigma}\right)\,
	\varphi(y;0,\sigma_{t,0}^2).
\end{equation}

Now \(f_t(\cdot\mid z)=\mathcal N(z,\sigma^2)\) by \eqref{eq:ex-ft}. If
\(q(y)\ge \beta r(y)\) pointwise, then
\(\KL(p\|q)\le \KL(p\|r)-\log\beta\). Applying this with
\begin{equation}
p=\mathcal N(z,\sigma^2),
\qquad
q=\tilde p_{t,\vartheta_t^{(n)}}^{\NN}(\cdot\mid z),
\qquad
r=\mathcal N(0,\sigma_{t,0}^2),
\end{equation}
we obtain, for \(|z|>R_n\),
\begin{align}
	\KL\!\left(
	f_t(\cdot\mid z)\,\middle\|\,
	\tilde p_{t,\vartheta_t^{(n)}}^{\NN}(\cdot\mid z)
	\right)
	&\le
	\KL\!\left(
	\mathcal N(z,\sigma^2)\,\middle\|\,
	\mathcal N(0,\sigma_{t,0}^2)
	\right)
	+
	\eta_t^{\NN}
	-
	\log\bar\Phi\!\left(\frac{L_n-R_n}{\sigma}\right).
\end{align}
The Gaussian KL formula gives a bound of the form \(C_t(1+z^2)\), while the
same lower-Mills estimate used earlier yields
\(-\log\bar\Phi((L_n-R_n)/\sigma)\le C_t' n^{2/7}\). Hence
\begin{equation}\label{eq:ex-pointwise-tail-kl}
	\KL\!\left(
	f_t(\cdot\mid z)\,\middle\|\,
	\tilde p_{t,\vartheta_t^{(n)}}^{\NN}(\cdot\mid z)
	\right)
	\le
	C_{t,7}\bigl(1+z^2+n^{2/7}\bigr),
	\qquad |z|>R_n.
\end{equation}

Integrating \eqref{eq:ex-pointwise-tail-kl} over \(\{|z|>R_n\}\) gives
\begin{equation}\begin{gathered}
	\int_{|z|>R_n}
	\KL\!\left(
	f_t(\cdot\mid z)\,\middle\|\,
	\tilde p_{t,\vartheta_t^{(n)}}^{\NN}(\cdot\mid z)
	\right)
	(F_t)_Z(\dd z)
	\\
	\le
	C_{t,7}
	\left[
	(1+n^{2/7})\Prob(|Z_t|>R_n)
	+
	\int_{|z|>R_n} z^2\,(F_t)_Z(\dd z)
	\right].
\end{gathered}\end{equation}

The first term is already controlled by the feature-tail bound
\eqref{eq:Ctail}. For the second term, use exactly the same
Cauchy--Schwarz step as in the earlier Gaussian tail estimates:
\begin{equation}
	\int_{|z|>R_n} z^2\,(F_t)_Z(\dd z)
	\le
	\bigl(\E[Z_t^4]\bigr)^{1/2}\Prob(|Z_t|>R_n)^{1/2}.
\end{equation}
Since \(Z_t\mid C=c\sim\mathcal N(\rho^{t-1}c,\rho^2)\) from
\eqref{eq:ex-feature} and \eqref{eq:ex-marginal}, its fourth moment is uniformly
bounded:
\begin{equation}
	\E[Z_t^4]
	\le
	(\rho^{t-1}M)^4
	+
	6\rho^2(\rho^{t-1}M)^2
	+
	3\rho^4
	=:M_{t,4}^{(Z)}.
\end{equation}
Combining this with \eqref{eq:Ctail}, and absorbing the prefactor
\(1+n^{2/7}\) into the exponential after decreasing the rate constant if
necessary, we obtain constants
\(\widetilde C_t^{\mathrm{tail}},\widetilde c_t^{\mathrm{tail}}>0\) such that
\begin{equation}\label{eq:ex-neural-tail}
\int_{|z|>R_n}
\KL\!\left(
f_t(\cdot\mid z)\,\middle\|\,
\tilde p_{t,\vartheta_t^{(n)}}^{\NN}(\cdot\mid z)
\right)
(F_t)_Z(\dd z)
\le
\widetilde C_t^{\mathrm{tail}}
e^{-\widetilde c_t^{\mathrm{tail}} n^{2/7}}.
\end{equation}

Combining this with the terminal mismatch \eqref{eq:ex-terminal}, the
on-window estimate \eqref{eq:ex-neural-rate}, and summing over \(t=1,\dots,T\),
we obtain
\begin{equation}\label{eq:ex-final}
	\mathcal E_{\KL}\!\left(\pi_{p^{\NN,n},\nu}\right)
	\le
	\frac{\rho^{2T}M^2}{6}
	+
	\sum_{t=1}^T C_t^{\mathrm{Nor}}\,n^{-2/7}
	+
	\frac{1}{2n}
	+
	\sum_{t=1}^T
	\widetilde C_t^{\mathrm{tail}}
	e^{-\widetilde c_t^{\mathrm{tail}} n^{2/7}}.
\end{equation}
It is useful to compare this concrete
estimate with the abstract finite-resolution bound \eqref{eq:global-kl-power}.
The latter has the form
\begin{equation}
\mathcal E_{\KL}(\pi_{p^{\NN},\nu})
\;\le\;
\E[\Comp(C)]
+
\sum_{t=1}^T
\Bigl(
C_t^{\mathrm{Nor}}\,m_t^{-\beta_t}
+
170(s_t+1)^{d_t}8^{s_t}\,
\Lambda_{t,m_t}^{\max}\,
\mathscr N_t^{-2s_t/d_t}\mathscr L_t^{-2s_t/d_t}
\Bigr),
\end{equation}
where the three ingredients are, respectively: terminal mismatch, stepwise
Gaussian-mixture approximation error, and stepwise neural logit approximation
error.

In the present Gaussian--OU example, these three abstract terms become
completely explicit.  First, the terminal mismatch is exactly
\(\E[\Comp(C)]=\rho^{2T}M^2/6\) by \eqref{eq:ex-terminal}.  Second, we take the
same mixture resolution \(m_t=n\) at every step, and with the choice
\(\theta_t=4\), \(\xi_t=1\) from \eqref{eq:ex-beta-choice}, Corollary
\ref{cor:M0-power-rate} gives \(\beta_t=2/7\), so the abstract mixture term
reduces to \(\sum_{t=1}^T C_t^{\mathrm{Nor}} n^{-2/7}\).  Third, here
\(d_t=s_t=1\), and the log-odds complexity satisfies
\(\Lambda_{t,n}^{\max}=O(n^{2/7})\) by \eqref{eq:ex-Lambda-growth}; choosing the
network widths and depths as in \eqref{eq:ex-network-size} makes the abstract
neural term of order \(1/(Tn)\) at each step, which sums to the total
contribution \(1/(2n)\).

Thus \eqref{eq:ex-final} is precisely the specialization of
\eqref{eq:global-kl-power} to this example, except for one additional term:
\begin{equation}
\sum_{t=1}^T
\widetilde C_t^{\mathrm{tail}}
e^{-\widetilde c_t^{\mathrm{tail}}n^{2/7}}. \label{eq:ex-tail}
\end{equation}
This extra remainder is not a new approximation mechanism; it is the explicit
price of replacing the global compact-feature hypothesis by the growing windows
\(K_{t,n}\) from \eqref{eq:ex-Ktn}.  On each window \(K_{t,n}\), the example
fits the abstract framework exactly, and the only residual error comes from the
Gaussian mass of \(Z_t\) outside that window.  In particular, among the
\(n\)-dependent terms, the dominant one is still the algebraic
Gaussian-mixture error \(n^{-2/7}\); the neural correction \(1/(2n)\) and the
window-tail remainder are both strictly smaller as \(n\to\infty\).

\subsection{Summary}
\label{sec:ex-summary}

\subsubsection{Rate structure}

For fixed diffusion horizon \(T\), the final estimate \eqref{eq:ex-final}
decomposes into four contributions:
\begin{center}
	\renewcommand{\arraystretch}{1.2}
	\begin{tabularx}{\textwidth}{ll>{\raggedright\arraybackslash}X}
		\toprule
		\textbf{Error source} & \textbf{Total contribution} & \textbf{Behavior} \\
		\midrule
		Terminal mismatch
		& $\rho^{2T}M^2/6$
		& independent of $n$; exponentially small in $T$ \\
		
		Gaussian-mixture error
		& $\sum_{t=1}^T C_t^{\mathrm{Nor}}\,n^{-2/7}$
		& algebraic; dominant $n$-dependent term \\
		
		Neural logit error
		& $1/(2n)$
		& smaller than $n^{-2/7}$ \\
		
		Window-tail remainder
		& $\sum_{t=1}^T
		\widetilde C_t^{\mathrm{tail}}e^{-\widetilde c_t^{\mathrm{tail}}n^{2/7}}$
		& faster than any negative power of $n$ \\
		\bottomrule
	\end{tabularx}
\end{center}

Hence, for fixed \(T\) and large \(n\), the dominant \(n\)-dependent term is
\begin{equation}
\sum_{t=1}^T C_t^{\mathrm{Nor}}\,n^{-2/7}
\;\le\;
T\,\overline C^{\,\mathrm{Nor}}\,n^{-2/7},
\qquad
\overline C^{\,\mathrm{Nor}}
:=
\max_{1\le t\le T} C_t^{\mathrm{Nor}}.
\end{equation}
The only part of the bound that does not vanish as \(n\to\infty\) is the
terminal mismatch \(\rho^{2T}M^2/6\); it is controlled separately by increasing
the diffusion horizon \(T\).

\subsubsection{Assumption verification}
The following table summarizes how the abstract
hypotheses are realized in this example.

\begin{center}
	\renewcommand{\arraystretch}{1.2}
	\begin{tabularx}{\textwidth}{l c >{\raggedright\arraybackslash}X}
		\toprule
		\textbf{Assumption} & \textbf{Status} & \textbf{Notes} \\
		\midrule
		\ref{ass:feature}\ref{assItem:compactFeature}
		& \checkmark${}^*$
		& ${}^*$Not globally compact. We work on the growing windows
		$K_{t,n}$ in \eqref{eq:ex-Ktn} and carry the resulting explicit
		window-tail remainder in \eqref{eq:ex-final}. \\
		
		\ref{ass:feature}\ref{assItem:featureDisintegration}
		& \checkmark
		& The conditional feature law is
		$f_t(y\mid z)=\varphi(y;z,\sigma^2)$; see \eqref{eq:ex-ft}. \\
		
		\ref{ass:feature}\ref{assItem:featureSufficiency}
		& \checkmark
		& $r_t^\star$ depends on $(c,x_t)$ only through
		$z=\phi_t(c,x_t)=\rho x_t+\sigma^2\rho^{t-1}c$; see
		\eqref{eq:ex-feature}. \\
		
		\ref{ass:regularity}\ref{assItem:positivity}
		& \checkmark
		& Gaussian densities are strictly positive everywhere. \\
		
		\ref{ass:regularity}\ref{assItem:continuity}
		& \checkmark
		& All relevant densities are $C^\infty$. \\
		
		\ref{ass:regularity}\ref{assItem:finiteM2}
		& \checkmark
		& In fact all polynomial moments are finite. \\
		
		\ref{ass:regularity}\ref{assItem:logBound}
		& \checkmark
		& Verified in \S\ref{sec:verifyForwardAss}; one may take
		$a_{s-1}(x_s)=\frac{|x_s|+\rho^sM+1/4}{2}$ and
		$b_s(x_s,x_{s+1})=
		\frac{\rho|x_{s+1}-\rho x_s|+\rho^2/4}{2\sigma^2}$. \\
		
		\ref{ass:smooth-logodds}
		& \checkmark
		& On each \(K_{t,n}\), the exact log-odds are smooth; here we use
		$s_t=1$ and prove \(\Lambda_{t,n}^{\max}=O(n^{2/7})\); see
		\eqref{eq:ex-Lambda-growth}. \\
		
		Cor.~\ref{cor:M0-power-rate}
		& \checkmark
		& With \(\theta_t=4\), \(\xi_t=1\), one gets
		\(\beta_t=2/7\); see \eqref{eq:ex-beta-choice}. See also
		Remark~\ref{rem:ex-rate-optimization} for the limiting
		improvement \(\beta_t\to 1/2\). \\
		
		\bottomrule
	\end{tabularx}
\end{center}

	\printbibliography

\end{document}